%% file: main-arxiv.tex
\documentclass[11pt]{article}
\pdfoutput=1

\usepackage{mathrsfs}
\usepackage{amsmath,amssymb}
\usepackage{bm}
\usepackage{natbib}
\usepackage[usenames]{color}
\usepackage{amsthm}
\usepackage{algorithm}
\usepackage{algorithmic}

\usepackage{multirow} 
\usepackage{enumitem}

\usepackage{microtype}
\usepackage{graphicx}
\usepackage{subfigure}
\usepackage{booktabs,threeparttable,multirow}

\usepackage{amsmath}
\usepackage{amssymb,bm}
\usepackage{mathtools}
\usepackage{caption}
\usepackage{subcaption,color}
\usepackage{indentfirst}
\usepackage{amsfonts}
\usepackage{float,url}
\usepackage{bbm}
\usepackage{lipsum}
\usepackage{nicefrac}
\usepackage{pifont}

\usepackage[colorlinks,
linkcolor=red,
anchorcolor=blue,
citecolor=blue
]{hyperref}

\usepackage{mylatexstyle}
\usepackage[left=1in, right=1in, top=1in, bottom=1in]{geometry}

\newcommand{\disableaddcontentsline}{%
  \let\savedaddcontentsline\addcontentsline 
  \renewcommand{\addcontentsline}[3]{}
}
\newcommand{\enableaddcontentsline}{%
  \let\addcontentsline\savedaddcontentsline
}

\newcommand{\op}{{op}}

\newcommand{\Tr}{\operatorname{Tr}}
\def\Trarg#1{\Tr\left({#1}\right)} 

\usepackage{setspace}
\usepackage{xcolor}

\usepackage{graphicx,eso-pic}
\AddToShipoutPictureFG*{%
  \AtPageUpperLeft{%
    \raisebox{-2cm}{%
      \hspace{2.5cm}
      \includegraphics[width=1cm]{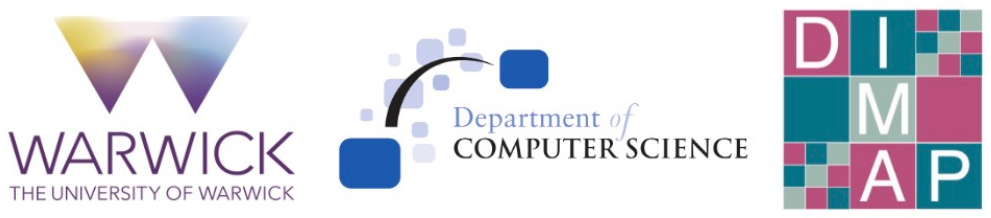}%
      \hspace{0.3cm}%
      \includegraphics[width=0.8cm]{logos/LOGO_CNRS_BLEU.png}%
      \hspace{0.3cm}%
      \includegraphics[width=0.8cm]{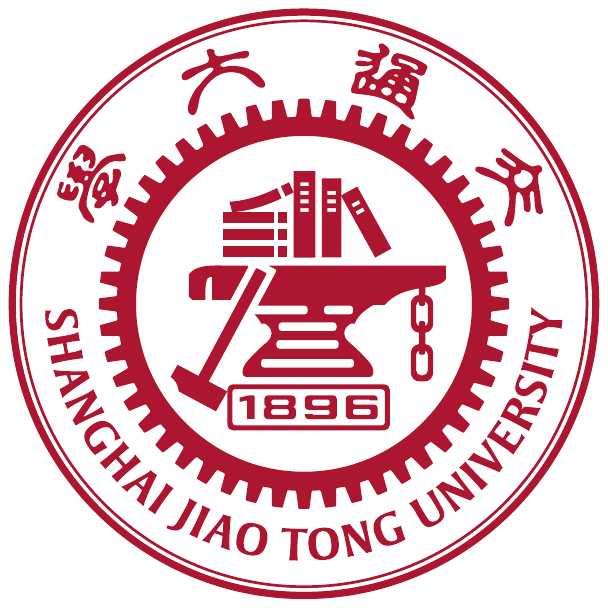}%
    }%
  }%
}

\AddToShipoutPictureFG*{%
  \AtPageUpperLeft{%
    \raisebox{-2.2cm}{%
      \hspace*{\dimexpr 1in+\oddsidemargin\relax}%
      \makebox[\textwidth][l]{\rule{\textwidth}{0.5pt}}
    }%
  }%
}

\title{\textbf{How does feature learning reshape the function space?}}

\author
{
     João Lobo \thanks{Department of Computer Science, University of Warwick, 
        United Kingdom; Email: {\tt joao.lobo-pevidor@warwick.ac.uk} (J.L.), 
        {\tt long.tran-thanh@warwick.ac.uk} (L.T-T.)}
     \and
     Bruno Loureiro\thanks{Departement d'Informatique, \'Ecole Normale Sup\'erieure, PSL \& CNRS; Email: {\tt bruno.loureiro@di.ens.fr}}
     \and
     Long Tran-Than \footnotemark[1]
     \and
     Fanghui Liu\thanks{School of Mathematical Sciences, Institute of Natural Sciences and MOE-LSC, Shanghai Jiao Tong University, China. Part of work was done at Department of Computer Science, and Centre for Discrete Mathematics and its Applications (DIMAP), University of Warwick, United Kingdom; Email: {\tt fanghui.liu@\{sjtu.edu.cn,warwick.ac.uk\}} (Corresponding author)}
}
\date{}

\begin{document}
\disableaddcontentsline

\definecolor{fhcolor}{rgb}{0.523, 0.235, 0.625}
\newcommand{\fh}[1]{\textcolor{fhcolor}{(Fanghui: #1)}}

\newcommand{\jcomment}[1]{\textcolor{olive}{(João: #1)}}

\maketitle

\begin{abstract}
Feature learning is widely regarded as the key mechanism distinguishing neural networks from fixed-kernel methods, yet its impact on the induced function space remains poorly understood. In this work, we precisely characterize how the function space spanned by the features of a two-layer neural network evolves during gradient descent training.
We prove that, in the high-dimensional proportional regime, after a large gradient step the post-update feature distribution is well approximated by a target-dependent spiked Gaussian covariance. This induces a data-adaptive kernel that reshapes the function space and modifies its spectral structure.
Our analysis reveals that feature learning can be interpreted as a distributional transformation in either parameter space or input space, equivalently as the introduction of a target-dependent kernel. In particular, it selectively amplifies eigenvalues aligned with the target direction and mixes leading eigenfunctions, coupling the top radial mode with a target-aligned quadratic harmonic.
%
%
Overall, our results provide a precise function-space perspective on early-stage feature learning: rather than just rescaling a fixed kernel, gradient descent induces a data-adaptive deformation that preferentially enhances directions aligned with the signal in the data.
\end{abstract}

\input{arxiv/intro}

\input{arxiv/setting}
\input{arxiv/main-results}
\input{arxiv/experiments}
\input{arxiv/conclusion}

\bibliography{references}
\bibliographystyle{ims}

\newpage
\appendix
\enableaddcontentsline
\tableofcontents
\newpage

\section{Properties of the Spiked Covariance Matrix}
\input{appendixes/appendix-2nd-moment-matrix}

\section{Technical Lemmas and their proofs}
\input{appendixes/appendix-lemmas}

\section{Proofs of the main results}
\input{appendixes/appendix-main-proofs}

\section{Experimental details}
\input{appendixes/appendix-experiments}

\end{document}

%% file: arxiv/intro.tex
\section{Introduction}
The success of modern neural networks (NNs) is often attributed to feature learning, namely the ability of models to adapt to the structure of the data during training \citep{bach2017breaking,suzuki2019adaptivity,damian2022neural}. This stands in contrast to non-adaptive approaches, such as kernel methods and neural networks operating in the so-called lazy training regime \citep{jacot2018neural,chizat2019lazy}, where features remain effectively frozen at their random, data-independent initialization.

This contrast becomes explicit in random feature models (RFMs) \citep{rahimi2007random} and two-layer NNs, both of which can be written in the form the form $f(\bm x) = \frac{1}{m} \sum_{i=1}^m a_i \phi(\bm x, \bm w_i)$ with a feature map $\phi: \mathcal{X} \times \mathcal{W} \rightarrow \mathbb{R}$. The key difference lies in whether the features are learned: in RFMs, the weights $\{ \bm w_i\}_{i=1}^m \in \mathcal{W}$ are sampled i.i.d. from a probability measure $\mu$ and kept fixed, and only the second layer weights $\bm a :=\{ a_i \}_{i=1}^m$ are optimized, whereas in NNs both layers are jointly trained.

From a function-space perspective, RFMs with $\ell_2$-regularization on $\bm a$ are equivalent to kernel methods via the representer theorem \citep{scholkopf2001generalized}, with empirical kernel $\hat{k}(\bm x, \bm x') = \frac{1}{m} \sum_{i=1}^m \phi(\bm x, \bm w_i)\phi(\bm x', \bm w_i)$,
which approximates, as $m \to \infty$, the population kernel
\[
k_0(\bm x, \bm x') = \int_{\mathcal{W}} \phi(\bm x, \bm w)\phi(\bm x', \bm w)\, \mathrm{d}\mu(\bm w)\,.
\]
The associated reproducing kernel Hilbert space (RKHS), denoted $\mathcal{H}_0(\mu)$, is the closure of functions expressible as weighted averages of the features $\phi(\cdot, \bm w)$.

When the optimization of $\bm a$ is not $\ell_2$-regularized, the induced function space generally differs from an RKHS. For instance, under $\ell_p$-regularization with $1 \leq p < 2$ \citep{celentano2021minimum,chen2023duality}, the function space strictly enlarges relative to $\mathcal{H}_0(\mu)$ by Hölder’s inequality. In the limiting case of $\ell_1$-regularization, one recovers an $\mathcal{F}_1$-type space \citep{bach2017breaking}, closely related to the Barron space $\mathcal{B}$ \citep{barron1993universal,weinan2021barron}. The Barron space can be viewed as the largest function class that two-layer neural networks can learn efficiently in a statistical sense. The relationship between these spaces is clarified by \cite{chen2023duality}, which shows that $\mathcal{B} = \bigcup_{\mu \in \mathcal{P}(\mathcal{W})} \mathcal{H}_0(\mu)$, where $\mathcal{P}(\mathcal{W})$ denotes the set of probability measures on $\mathcal{W}$, establishing a natural connection between RFMs and two-layer neural networks. Notably, under the mean-field regime \citep{chizat2021convergence}, the function space induced by two-layer neural networks forms a subset of the Barron space \citep{wojtowytsch2020can}.

Despite existing characterizations of kernels and neural networks in function space, these perspectives are largely static: they do not explain how training algorithms (e.g., gradient descent) reshape the function space $\mathcal{H}_0(\mu)$ during feature learning. This gap motivates the following question:
\begin{center}
    \emph{How does the kernel (or function space) evolve under gradient updates, and what information is progressively learned?}
\end{center}
In this work, we address this question by providing a precise characterization of the function-space evolution induced by a single large gradient descent step in a two-layer network trained on a Gaussian single-index model. This setting, which has been extensively studied in recent theoretical work on feature learning, has been shown to capture the most general class of functions learnable in the proportional high-dimensional regime \citep{damian2022neural,ba2022high,dandi2023two}. Our analysis reveals how feature learning reshapes the underlying function space and improves its alignment with the target signal. Our contributions are as follows:

\paragraph{1. Approximation by a target-dependent Gaussian distribution:} In \cref{sec:three}, we prove that the post-update feature distribution is well approximated by a target-dependent spiked Gaussian, leading to a data-adaptive kernel $k_1$ (\emph{c.f.} \cref{thm:approx-true-kernel}). It demonstrates that feature learning can be expressed as target-dependent distribution transformation either in the parameter space or in the input-data space on such kernel.

\paragraph{2. Expansion of the data-adaptive kernel around an isotropic kernel:} In \cref{sec:four}, we study the spectrum of the kernel $k_1$. We conduct a Taylor expansion of $k_1$ around an isotropic kernel that isolates the contribution of the spike, and prove that higher-order remainder terms vanish as the dimension grows (\emph{c.f.} \cref{thm:cov-expansion}). In particular, these higher order terms take the form of isotropic kernels coupled with linear and non-linear projections of the input onto the target vector $\bm w^*$. This demonstrates that the role of feature learning is to impose an additional target-dependent kernels, thereby reshaping the function space. 

\paragraph{3. Feature learning in the top eigenspaces for the ReLU activation:} In \cref{sec:relu}, we consider the case of the ReLU activation function, to give an explicit characterization of the kernel $k_1$, its spectrum and the spanned function space. Our results theoretically prove that the spike transforms primarily the top and linear eigenspaces of the operator (\emph{c.f.} \cref{thm:lin-eigenfn},~\ref{thm:approx-top-eigenfn}). Our numerical validations support our theoretical findings and also illustrate the connection between data-adaptive kernels and neural networks.\\


These results provide a precise characterization of how the function space evolves during early training. In particular, they show that this evolution is modulated by the choice of step size, with larger step sizes inducing stronger transformations and increasing the contribution of both linear and higher-order non-linear features. This also bridges between data-adaptive kernels and neural networks, indicating a possibility of exploring proper initialization for feature learning.

\subsection{Related works}

The study of function spaces in deep learning theory stems from RKHS \citep{jacot2018neural_mini} and \citep{lee2017deep_mini}. However, these naturally operate in a lazy training regime \citep{chizat2019lazy_mini}, hence their ability for active feature learning is limited \citep{ghorbani2019limitations_mini}.

Recent works on feature learning in two-layer NNs are exemplified by how parameters change at early stages of training \citep{ba2022high,dandi2023two}. Collectively, these show that following a single step of gradient descent, the updated weight matrix $\bm W_1$ can be approximately decomposed into a deterministic rank-one signal component and a vanishing noise term. Moreover, \cite{moniri2023theory,cui2024asymptotics,dandi2024random} show this low-rank deformation injects informative spikes into the spectrum of the feature matrix $\sigma(\bm W_1^\top \bm x)$, contributing to the alignment towards the target function.
In parallel, \cite{Xu2024NeuralFL} studies feature learning via the \emph{feature geometry} perspective, unifying statistical dependence and feature representations in an inner-product space. However, their focus is on learning optimal features with standard networks rather than understanding how features evolve during training. Furthermore, \cite{dou2021training} studied the conjugate kernel RKHS $\mathcal{H}_t$ and the neural tangent kernel RKHS $\mathcal{K}_t$ through a signed measure. Nevertheless, their framework does not explicitly describe how the function space evolves after certain steps of gradient descent, nor which features are learned through optimization.

%% file: arxiv/setting.tex
\section{Problem Setting and Preliminaries}
\label{sec:setting}

Consider a supervised regression problem with training data $\mathcal{D}=(\bm x_{i}, y_{i})_{i=1}^{n+N}$, which we will assume is drawn from a Gaussian single-index model:
\begin{equation}\label{eq:sim}
    y_i = f^*(\bm x_i)  + \varepsilon_i := g(\langle \bm w^*, \bm x_i \rangle) + \varepsilon_i, \quad \bm x_i \sim \mathcal{N}(\bm 0, \bm I_d)\,,
\end{equation}
where $\bm w^*\in \mathbb S^{d-1}$, $g: \mathbb{R} \rightarrow \mathbb{R}$ is a link function and $\varepsilon_i$ are i.i.d. sub-Gaussian noise random variables with zero mean and variance $\sigma_\varepsilon^2$. This synthetic data model has been the subject of several different works in the theoretical literature, where it was studied as a testbed for separation between lazy and feature learning regimes in the high-dimensional limit \citep{arous2021online, damian2022neural, ba2020generalization, dandi2023two}. In particular, it was shown that despite being efficiently learnable with $n=\Theta(d)$ samples for generic $g$ \citep{li1991sliced,babichev2018,barbier2019optimal, damian2024computational,troiani25a}, non-adaptive kernel methods require infinite data to learn it with arbitrary precision.  
Given the training data, we will consider the problem of learning it with a two-layer neural network defined by $f( \bm x; \bm W, \bm a) = \frac{1}{\sqrt{m}} \sum_{j=1}^m a_j \sigma(\bm w_j^{\top} \bm x)$, where $\bm W \in \mathbb R^{d \times m}, \bm a \in \mathbb R^m$ are the first and second layer weights, respectively, and $\sigma:\mathbb R \to \mathbb R$ is an element-wise activation function. 
We make the following standard assumptions:
\begin{assumption}[Main assumptions]
\label{ass:main-assumptions}~
    \begin{enumerate}[leftmargin=*,topsep=0.5mm, itemsep=0.mm]
        \item(\textbf{Initialization}) At initialization, the first layer weights are distributed as $\bm w^{0}_{j}\sim \mathcal N(0, \frac{1}{d}\bm I_d)$ and the second-layer weights are distributed as $\bm a^{0} \sim \mathcal N(0, \frac{1}{m}\bm I_m)$. 
        \item (\textbf{High-dimensional proportional regime}) We work under the proportional regime scaling, defined as the limit where $n,m,\eta, d\to\infty$ at fixed ratios: \begin{align}\label{eq:joint_limit}
            \alpha \coloneqq \frac{n}{d}, \qquad \beta \coloneqq \frac{m}{d}, \qquad \tilde{\eta}\coloneqq \frac{\eta}{d^\zeta}
        \end{align} where the step size parameter admits $\eta = \Theta(d^{\zeta})$ for some $\zeta \in [1/2, 1)$.
        \item (\textbf{Activation and target functions}) The activation function $\sigma$ is uniformly $L_{\sigma}$-Lipschitz  and $g$ is uniformly bounded and $L_g$-Lipschitz with $\mathbb{E}_{z \sim \mathcal{N}(0,1)}[g(z)]=0$ and $\mu_1 \coloneqq \mathbb{E}_{z\sim \mathcal{N}(0,1)}[g'(z)] \neq 0$. Equivalently, in the language of \cite{arous2021online} we assume $g$ has information exponent $1$.
    \end{enumerate}
\end{assumption}
We employ a two-stage training scheme under the empirical risk minimization via the squared loss as in \cite{ba2022high, dandi2023two, moniri2023theory, cui2024asymptotics, dandi2024random}. Splitting the training data into two parts we have the following framework:
\begin{itemize}[leftmargin=*,topsep=0.5mm, itemsep=0.mm]
    \item Assume we use $n$ i.i.d samples from Eq.~\eqref{eq:sim} to train the first-layer by one single step, while keeping the second-layer fixed:
    \begin{align}
        \label{eq:training_step}
         \bm w_{j}^1 & = \bm w_{j}^0 - \eta \bm g_j^0, \quad \forall j \in [m] \\
         \bm g_j^0 & = \frac{1}{n \sqrt m} \!\sum_{i=1}^n  \left(f(\bm x_i; \bm W^{0},\bm a^{0}) - y_i \right) a_j^0 \bm x_i \sigma^\prime ({\bm w_j^0}^{\top} \bm x_i)\,. \notag
    \end{align}
    \item Given the updated weights $\bm W^{1}$, we update the second-layer weights via ridge regression using another $N$ samples i.i.d from Eq.~\eqref{eq:sim} 
    \begin{align}
        \label{eq:def:main:erm}
        \hat{\bm a}_{\lambda}&=\underset{\bm a\in\mathbb{R}^{m}}{\rm argmin} \sum_{i = 1}^N \left(y_i -f(\bm x_i ;\bm a,\bm W^{1})\right)^{2}+\lambda \| \bm a \|_{2}^{2} =\left(\nicefrac{\bm \Phi^{\top} \bm \Phi}{m}+\lambda I_{m}\right)^{-1}\nicefrac{\bm \Phi^{\top} \bm y}{\sqrt{m}}\,,
    \end{align}
    where the feature matrix $\bm \Phi \in\mathbb{R}^{N\times m}$ with elements $\phi_{i j}=\sigma(\bm x_i^{\top}{\bm w^{1}_{j}})$ and the label vector $\bm y = [y_1, y_2, \cdots, y_N]^{\!\top}$.
\end{itemize}
The effect of training on the first-layer weights is known to depend on the scaling of $\eta$. Considering the data generating process from \cref{eq:sim}, denote by $\bm X \in \mathbb R^{n \times d}$ and $\bm y \in \mathbb R^{n}$ the data matrix and label vector seen during the training step in \cref{eq:training_step}. Defining $\bar{\bm A} := \frac{\mu_1\eta}{\sqrt{m}}\frac{\bm X^\top \bm y \bm a^\top}{n}$, the first-layer weight matrix admits the following description:
\begin{equation}\label{eq:weightapp}
    \bm W_t = \bm W_0 + \bar{\bm A} + \bm E_t\,,    
\end{equation}
where $\bm E_t$ collects asymptotically negligible fluctuations, satisfying $\|\bm E_t\|_{\text{op}}=\tilde{\mathcal O}(1 / \sqrt{d})$.
It holds in the regime $\eta=\Theta(\sqrt{d})$ with any fixed training step $t\in\mathbb{N}$ \citep{ba2022high,wang2024nonlinear}.
Besides, at the first step ($t=1$), this formulation still holds for an intermediate step-size $\eta = \Theta(d^{\zeta})$ with $\zeta \in [1/2, 1)$ in \citep{moniri2023theory} as well as a large step-size $\eta = \Theta(d)$ \citep{cui2024asymptotics,dandi2024random}.
That means, the gradient matrix in \cref{eq:weightapp} can be well approximated by the initial gradient and a rank-one matrix for i) constant gradient steps using small step-size and ii) the first gradient step using an intermediate or large step-size.
Though \cref{eq:weightapp} implies that the gradient matrix has only one spike, the learned feature matrix $\sigma(\bm W_1 \bm X)$ can include more spikes for nonlinear function learning. We will discuss this from the perspective of kernel methods in this paper. 

{\bf Notation:}
We denote vectors in high dimensional spaces by bold lowercase letters ($\bm v$) and matrices/operators by bold uppercase letters ($\bm A$). Functions ($f$) and functional operators ($T$) are represented in standard typeface. We let $\rho_{\mathcal{X}}$ be the measure induced by the input distribution $\mathcal{N}(0, \bm I_d)$, and $L^2(\rho_{\mathcal{X}})$ denote the associated Hilbert function space. The notation $\|\cdot\|$ refers specifically to the standard Euclidean norm in $\mathbb{R}^d$. Whenever an alternative norm is employed, it will be explicitly denoted with the appropriate subscript (e.g., $\|\cdot\|_{L^2(\rho_{\mathcal X})}$ or $\|\cdot\|_\text{op}$). Using standard asymptotic notation $\mathcal{O}, o, \Omega, \Theta$, we track dependencies on $d$ and on the spike strength $B$. We further use the shorthand notation $o_d(f) \coloneqq o(f \cdot d^{-c})$ for some $c \in (0, 1)$ to identify terms that vanish slower than $o(f/d)$.

%% file: arxiv/main-results.tex
\section{Feature learning as a distribution transformation in the kernels}\label{sec:three}

We may now discuss how feature learning can be regarded as a target-dependent distribution transformation.
Under Gaussian initialization, the network implicitly induces the baseline kernel $k_0(\bm{x}, \bm{x}') = \mathbb{E}_{\bm{w} \sim \mathcal{N}(0, \frac{1}{d}\bm{I}_d)}[\sigma(\langle\bm{w}, \bm{x}\rangle)\sigma(\langle\bm{w}, \bm{x}'\rangle)]$ as mentioned before. To capture the network's capacity for feature learning, our analysis isolates the structural deformation of this kernel driven solely by a rank-one update to the weights.

As discussed in \cref{sec:setting}, after one gradient step under the step-size $\eta = \Theta(d^{\zeta})$ with $\zeta \in [1/2, 1)$, we know that $(\bm W_1^\top \bm x)_j = \langle\bm w^0_j, \bm x\rangle +  \langle \bm v_j, \bm x\rangle$ for vectors $\bm v_j \in \mathbb R^d$ deriving from the deterministic update $\bar{\bm A} = \frac{\mu_1\eta}{\sqrt{m}}\frac{\bm X^\top \bm y \bm a^\top}{n}$. Hence the new feature map $\phi \left( \bm x, \bm w + \bm v\right)$ induces the following spiked conjugate kernel, where $\mu$ is the Gaussian measure:
\begin{equation}\label{kernelt}
    k_{1}^*(\bm x, \bm x') = \int_{\mathcal{W}} \phi \left( \bm x, \bm w + \bm v\right)\phi \left( \bm x', \bm w + \bm v\right) \mathrm{d}\mu(\bm w)\,.  
\end{equation}
Note that, $ k_{t}^*$ for constant steps $t$ under the small step-size $\eta=\Theta(\sqrt{d})$ also admits this formulation.
But for a unifying analysis framework, we study the impact of the learning rate coming from the regime \cref{eq:joint_limit} under the first step. 
In the following, we will investigate how the spike $\bm v$ impact the kernel as well as the associated function spaces.

As two positive semi-definite kernels, $k_0$ and $k^*_1$ uniquely determine two different RKHS: $\mathcal H_0, \mathcal H_1^* \subset L^2(\rho_{\mathcal X})$. We let $\rho$ be an unknown probability distribution on $\mathcal{X} \times \mathcal{Y}$ satisfying $\int_{\mathcal{X} \times \mathcal{Y}} y^2 \mathrm{d}\rho(\bm{x}, y) < \infty$, and denote its corresponding marginal distribution over the inputs as $\rho_{\mathcal{X}}$. Following \cite{bach2017equivalence}, we can associate each kernel with a self-adjoint, positive semi-definite, trace-class integral operator. For the two conjugate kernels $k_{0},k_{1}^{\star}$, the operators $T_0,T_1 : L^2(\rho_{\mathcal{X}}) \to L^2(\rho_{\mathcal{X}})$ are given by:
\begin{equation}\label{eq:int-ops}
    (T_0 f)(\bm{x}) = \int_{\mathcal{X}} k_0(\bm{x}, \bm{x}') f(\bm{x}') \mathrm{d}\rho_{\mathcal{X}}(\bm{x}') \quad \text{and} \quad (T_1^* f)(\bm{x}) = \int_{\mathcal{X}} k_1^*(\bm{x}, \bm{x}') f(\bm{x}') \mathrm{d}\rho_{\mathcal{X}}(\bm{x}')\, .
\end{equation}
By the spectral theorem, both operators admit different spectral decompositions, and Mercer's theorem states that we can represent each kernel by its respective spectral decomposition. Therefore, we will study the distribution of the learned features, the formulation of the kernel $k_1^{*}$, as well as the spectrum of function spaces from $\mathcal H_{0}$ to $\mathcal H_{1}^{*}$.

\subsection{Distribution and moments of the learned features}

Let $\bm \varsigma := \sum_{i=1}^n y_i \bm x_i$. The learned feature is given by $\bm z := \bm w + \frac{\mu_1\eta}{n \sqrt{m}}a\bm \varsigma$. Since $a$ is a scalar Gaussian random variable and $\bm w$ is a Gaussian random vector, both independent, $\bm z$ has zero mean. Moreover, conditioned on the dataset (or equivalently conditioned on $\bm \varsigma$) we have:
\begin{equation*}
    \bm z \mid \bm \varsigma \sim \mathcal{N}\!\left(0,\ \frac{1}{d} \bm I_d + \frac{\mu_1^2 \eta^2}{n^2 m^2} \bm \varsigma \bm \varsigma^\top \right)\,.
\end{equation*}
Therefore, conditionally on $\bm \varsigma$, $\bm z$ is a Gaussian vector with a spiked covariance along the random direction $\bm \varsigma$. Without conditioning, the distribution of $\bm z$ is complex. In particular, its covariance matrix is given by 
\begin{equation}\label{eq:covz}
\begin{split}
   \frac{1}{d}\bm \Gamma := \mathbb E[\bm z\bm z^\top] & = \underbrace{\left( 1 + \frac{\eta^2\lambda_{\alpha, \beta}}{d^2}\mathbb E[f^*(\bm x)^2] + \frac{\eta^2\lambda_{\alpha, \beta}}{d^2}\sigma_\varepsilon^2\right)}_{:= A} \frac{\bm I_d}{d} + \underbrace{\left[\frac{\eta^2}{d}\frac{\mu_1^4}{\alpha \beta^2}\Big(\alpha - \frac{1}{d} + \frac{2s}{\mu_1^2d}\Big)\right]}_{:=B} \frac{\bm w^* (\bm w^*)^\top}{d} \,,
\end{split}
\end{equation}
where $\lambda_{\alpha, \beta} := \frac{\mu_1^2}{\alpha \beta^2}$, $\alpha, \beta$ are constants as defined in \cref{eq:joint_limit}, $\mu_1$ is the first Hermite coefficient of $g$ and,  by denoting~$h(t):=[g(t)]^2$, we have
\[
    s \coloneqq \mathbb E[g'(\langle \bm w^*,\bm x\rangle)^2]+ \mathbb E[g(\langle \bm w^*,\bm x\rangle)g''(\langle \bm w^*,\bm x\rangle)] = \frac{1}{2} \mathbb{E}[h''(\langle \bm w^*,\bm x\rangle)]\,.
\]
For short, we write $\bm \Gamma := A\bm I_d + B \bm w^* (\bm w^*)^\top$, and defer the full derivation of these quantities to \cref{app:deriv-2nd-moment}. See the discussion on when $s \gtreqqless 0$ and some examples for particular choices of $g$ in \cref{app:spike-coef}. Lastly, note that $B$ has three terms and $\alpha$ will dominate in the high-dimensional asymptotic regime.
Accordingly, we can ensure $\bm \Gamma$ to be positive definite.

As previously discussed, the (unconditional) distribution of $\bm z$ is complex, but in the next subsection we will show that it can be well approximated by a Gaussian distribution with matching covariance $\bm \Gamma$. This will allow us to derive an approximation for the associated kernel.

\subsection{The kernel formulation}

To tractably analyze this we present our main theorem to approximate the true kernel $k_1^*$ in \cref{kernelt} by a Gaussian kernel $k_1$ governed by $\bm \Gamma$, with the proof deferred to \cref{app:thm-approx-true-kernel}.
\begin{theorem}\label{thm:approx-true-kernel}
    Denote $\bm \varsigma := \sum_{i=1}^n y_i \bm x_i$ and $\bm z := \bm w + \frac{\mu_1\eta}{n \sqrt{m}}a\bm \varsigma$ and let $\mu^*$ be the distribution of $\bm z$. Then, under \cref{ass:main-assumptions}, given the matrix $\bm \Gamma$ defined by \cref{eq:covz}, $k_1^*$ under $\mu^*$ can be approximated by the following kernel 
    \begin{equation}\label{eq:kernel1}
    k_1(\bm x, \bm x') = \mathbb E_{\bm w \sim \mathcal N(0, \frac{1}{d}\bm \Gamma)}[\sigma(\langle \bm w, \bm x\rangle)\sigma(\langle \bm w, \bm x'\rangle)] = k_0(\bm \Gamma^{1/2} \bm x, \bm \Gamma^{1/2}\bm x')\,,
    \end{equation}
    such that their respective integral operators $T_1^*$ and $T_1$ admit
    \[
        \|T_1^* - T_1\|_{\text{HS}} = \|k_1^* - k_1\|_{L^2(\rho_{\mathcal X})\times L^2(\rho_{\mathcal X})} = \mathcal O\left(\frac{\eta^4 \ln^3 d}{d^5}\right)\, .
    \]
    Moreover, defining the pushforward measure of the standard Gaussian measure $\nu = (\rho_{\mathcal X} \circ \bm \Gamma^{-1/2})$ such that $k_0(\bm z, \bm z') = \sum_{i=0}^\infty \omega_i \bm e_i(\bm z)\bm e_i(\bm z')$ with respect to $\nu$, the spectral decomposition of $k_1$ with respect to the original input distribution $\rho_{\mathcal X}$ is given by $k_1(\bm x, \bm x') = \sum_{i=0}^\infty \omega_i \bm e_i(\bm \Gamma^{1/2}\bm x)\bm e_i(\bm \Gamma^{1/2}\bm x')$.
\end{theorem}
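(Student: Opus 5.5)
The plan is to compare the two kernels through the law of the feature vector $\bm z$. The kernel $k_1^*$ integrates $\sigma(\langle \bm z,\bm x\rangle)\sigma(\langle \bm z,\bm x'\rangle)$ against the true law $\mu^*$ of $\bm z=\bm w+c\,a\bm\varsigma$, with $c:=\frac{\mu_1\eta}{n\sqrt m}$. The kernel $k_1$ integrates the same function against $\mathcal N(0,\frac1d\bm\Gamma)$.

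The first reduction conditions on $(a,\bm\varsigma)$. Given these, $\langle \bm z,\bm x\rangle$ and $\langle \bm z,\bm x'\rangle$ are jointly Gaussian with mean $c a(\langle\bm\varsigma,\bm x\rangle,\langle\bm\varsigma,\bm x'\rangle)$ and covariance $\frac1d$ times the Gram matrix of $\bm x,\bm x'$. So $k_1^*(\bm x,\bm x')=\mathbb E_{a,\bm\varsigma}[\,k_0^{\mathrm{shift}}(\bm x,\bm x';c a\bm\varsigma)\,]$, where $k_0^{\mathrm{shift}}(\cdot;\bm u)$ is the Gaussian kernel with mean shift $\bm u$.

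Next I decompose $\bm\varsigma=\mathbb E\bm\varsigma+(\bm\varsigma-\mathbb E\bm\varsigma)$. Here $\mathbb E\bm\varsigma=n\mu_1\bm w^*$ by Stein's lemma, and the fluctuation is a sum of $n$ i.i.d. sub-Gaussian vectors, hence of size $\sqrt{nd}$ in norm and $\sqrt n$ in any fixed direction. Thus $c a\bm\varsigma$ has a dominant component $c a n\mu_1\bm w^*$ and a roughly isotropic component. Averaged over $a$ and $\bm\varsigma$, these two pieces reproduce exactly $B\bm w^*\bm w^{*\top}/d$ and the correction inside $A$ in \eqref{eq:covz}.

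I then Taylor expand the integrand $F(\bm z):=\sigma(\langle\bm z,\bm x\rangle)\sigma(\langle\bm z,\bm x'\rangle)$, in its Gaussian-smoothed form over $\bm w$, in the perturbation $\bm u=c a\bm\varsigma$ to fourth order. Let $G(\bm u)=\mathbb E_{\bm w}F(\bm w+\bm u)$. This $G$ is smooth even for merely Lipschitz $\sigma$, because the Gaussian convolution supplies derivatives through Hermite-type integration by parts. The comparison kernel corresponds to replacing $\bm u$ by a centered Gaussian $\bm u_G$ with the same covariance $\mathbb E[\bm u\bm u^\top]=\frac1d(\bm\Gamma-\bm I_d)$. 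Since $\bm u$ and $\bm u_G$ are both symmetric, odd moments vanish, and second moments agree by construction. Hence
\[
k_1^*-k_1=\mathbb E[G(\bm u)]-\mathbb E[G(\bm u_G)]
\]
is controlled by the fourth-order term $\nabla^4G[\bm u^{\otimes4}]$ minus its Gaussian counterpart, plus a higher remainder. The key quantity is the fourth cumulant of $\bm u$, which is nonzero because $\bm u$ is a product of the Gaussian $a$ with $\bm\varsigma$. It has scale $c^4\,\mathbb E[a^4]\,\|\bm\varsigma\|^4$-type contributions projected onto $\bm x,\bm x'$. With $c^2\asymp\eta^2/(d^2\cdot d)$ up to constants and $\langle\bm\varsigma,\bm x\rangle$ of order $\sqrt n\,\|\bm x\|$ plus the spike, this should give a pointwise difference of order $\eta^4/d^{5}$ times polylog factors.

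Finally I integrate the square of the pointwise bound over $\rho_{\mathcal X}\otimes\rho_{\mathcal X}$. The $\ln^3 d$ factor arises from truncating on the high-probability events $\|\bm x\|^2\le d+C\sqrt{d\ln d}$, $|\langle\bm w^*,\bm x\rangle|\le\sqrt{\ln d}$, and $|a|\le\sqrt{\ln d/m}$, and from using boundedness and Lipschitzness of $g$ (sub-Gaussianity of $\bm\varsigma$ via Hanson--Wright/Bernstein) off these events. The identity $\|T_1^*-T_1\|_{\mathrm{HS}}=\|k_1^*-k_1\|_{L^2\times L^2}$ is standard for integral operators.

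The last claim is a change of variables. Write $k_1(\bm x,\bm x')=k_0(\bm\Gamma^{1/2}\bm x,\bm\Gamma^{1/2}\bm x')$, which follows by substituting $\bm w=\bm\Gamma^{1/2}\bm w_0$ with $\bm w_0\sim\mathcal N(0,\frac1d\bm I_d)$. Take Mercer's decomposition of $k_0$ in $L^2(\nu)$, where $\nu$ is the law of $\bm\Gamma^{1/2}\bm x$. Then $\bm e_i\circ\bm\Gamma^{1/2}$ is orthonormal in $L^2(\rho_{\mathcal X})$ because $\int \bm e_i(\bm\Gamma^{1/2}\bm x)\bm e_j(\bm\Gamma^{1/2}\bm x)\,d\rho_{\mathcal X}=\int\bm e_i\bm e_j\,d\nu=\delta_{ij}$, and the eigen-relation transfers in the same way.

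The main obstacle is the fourth-order cumulant estimate. It requires bounding the fourth derivatives of the smoothed kernel $G$ uniformly in the relevant directions, and carefully tracking the mixed spike/isotropic moments of $a\bm\varsigma$ so that no lower-order mismatch survives. The most delicate part is showing that the non-Gaussian corrections from the $\bm w^*$-projection of $\bm\varsigma$ (which involve $g$ beyond its first Hermite coefficient) cancel at second order. My first step there would be to verify that the covariance $\bm\Gamma$ in \eqref{eq:covz} already absorbs the exact second moment, including the $s$ term.
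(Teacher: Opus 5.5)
\paragraph{Comparison of routes.} Your route differs from the paper's, but both hinge on the same quantity. The paper first integrates out $a$, so that $\bm z\mid\bm\varsigma\sim\mathcal N\big(0,\frac1d\bm I_d+\frac{c^2}{m}\bm\varsigma\bm\varsigma^\top\big)$. It then compares one-dimensional characteristic functions via a second-order Taylor expansion of $x\mapsto e^{-\alpha x}$, which leaves $\frac{u^4}{8}\operatorname{Var}(X)$ with $X=\frac{c^2}{m}\langle\bm\varsigma,\bm t\rangle^2$. Finally it passes to the kernel through a truncation/Parseval lemma for products of Lipschitz functions, which is the source of the $\ln^3 d$.

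Your Lindeberg-type expansion of the Gaussian-smoothed integrand in the shift $\bm u=ca\bm\varsigma$ is a legitimate alternative that avoids the Fourier step. Odd moments vanish by symmetry of $a$, and second moments match by the definition of $\bm\Gamma$. The leading error it isolates is the fourth cumulant of $U=\langle\bm u,\bm t\rangle$. Since $a$ is Gaussian, $\kappa_4(U)=\mathbb E U^4-3(\mathbb E U^2)^2=3\operatorname{Var}(X)$, which is exactly the paper's object. Your pushforward argument for the spectral decomposition coincides with the paper's.

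\paragraph{The gap: the claimed size of this fourth-order term.} With your own ingredients, $\langle\bm\varsigma,\bm x\rangle=\sum_i y_i\langle\bm x_i,\bm x\rangle$ has mean $n\mu_1\langle\bm w^*,\bm x\rangle\asymp d$ and fluctuations $\sqrt{n\,\mathbb E[y^2]}\,\|\bm x\|\asymp d$. Hence
\[
c^4\,\mathbb E[a^4]\,\langle\bm\varsigma,\bm x\rangle^4\asymp\frac{\eta^4}{d^6}\cdot\frac{1}{d^2}\cdot d^4=\frac{\eta^4}{d^4},
\]
not $\eta^4/d^5$. Nothing cancels here. For $\bm t=u_1\bm x+u_2\bm x'$ the mean and fluctuation of $\langle\bm\varsigma,\bm t\rangle$ are of the same order, so $\operatorname{Var}(\langle\bm\varsigma,\bm t\rangle^2)\asymp d^4\|\bm u\|^4$ and $\kappa_4(U)\asymp\eta^4\|\bm u\|^4/d^4$.

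The fourth-order Taylor coefficient multiplying this cumulant is also generically nonzero. For a non-homogeneous Lipschitz activation such as $\tanh$, a direct computation at typical $(\bm x,\bm x')$ gives a leading term proportional to $\mathbb E[\sigma'(Z)]\,\mathbb E[\sigma'''(Z)]\,\langle\bm w^*,\bm x\rangle\langle\bm w^*,\bm x'\rangle\,\eta^4/d^4$, with $Z\sim\mathcal N(0,1)$. Its $L^2(\rho_{\mathcal X}\otimes\rho_{\mathcal X})$ norm is of exact order $\eta^4/d^4$.

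So your argument, carried out correctly, yields $\|k_1^*-k_1\|_{L^2(\rho_{\mathcal X})\times L^2(\rho_{\mathcal X})}=\mathcal O(\eta^4/d^4)$ up to logarithms, i.e.\ $\tilde{\mathcal O}(B^2/d^2)$. No sharper bookkeeping will bring this to $\eta^4\ln^3 d/d^5$.

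\paragraph{The same issue in the paper's proof.} The paper reaches $d^{-5}$ only because its bound on $\operatorname{Var}(X)$ is effectively computed for a test vector with $\|\bm t\|=\mathcal O(1)$, where $\operatorname{Var}(\langle\bm\varsigma,\bm t\rangle^2)=\mathcal O(d^3)$. It then applies that bound to $\bm t_{\bm u}=u_1\bm x+u_2\bm x'$, whose norm is of order $\sqrt d\,\|\bm u\|$; redoing that step gives the same $\eta^4/d^4$.

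The rate $\tilde{\mathcal O}(B^2/d^2)$ is still $o_d(|B|/d)$, so it suffices for every later use of the theorem. That is the rate you should state and prove.
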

\begin{remark}\label{rmk:approx-true-kernel}
    Our result builds upon macroscopic results of order $\mathcal O\left(\frac{|B|}{d}\right)$, by repeatedly showing the trailing terms of our approximations decay with rate $o_d\left(\frac{|B|}{d}\right)$. This bound shows that the approximation of $k_1^*$ via $k_1$ respects the decay needed in order for the result to hold. Indeed, since $|B| = \Theta\left(\frac{\eta^2}{d}\right)$, we have $\mathcal O\left(\frac{\eta^4 \ln^3 d}{d^5}\right) = \mathcal O\left(\frac{|B|^2\ln^3 d}{d^3}\right) \subset o_d\left(\frac{|B|}{d}\right)$. Thanks to \cref{thm:approx-true-kernel}, we can work with $k_1$ throughout this paper. Lastly, note that this result also guarantees an asymptotic approximation in the most aggressive learning rate regime where $\eta = \Theta(d)$.
\end{remark}
\noindent {\bf Data-adaptive kernel:} The relationship between $k_0$ and $k_1$ in \cref{eq:kernel1} can be characterized by a distribution transformation in either parameters or data. From the {\bf parameter-space} perspective, feature learning transforms the parameter distribution from its initial standard Gaussian distribution into a target-dependent distribution shaped by the training objective.
From the {\bf data-space} perspective, feature learning induces a shift in the representation of the input data, moving it toward a structure that is more aligned with the target function or labels.
This can be a data-adaptive kernel, similar in spirit to \cite{follain2024enhanced,Huang2025} with a fixed base kernel with a low-dimensional linear map. 
In contrast, our transformation acts in the full ambient space, leading to anisotropic amplification of signal directions rather than dimensionality reduction.

\section{Feature learning as an additional target-dependent kernel}\label{sec:four}
In this section, we investigate the spectrum of $k_1$ and its dynamics. As a preliminary step, we have the following theorem

\begin{theorem}\label{thm:eigenv-decay}
    Consider the integral operators $T_1$ and $T_0$ defined in \cref{eq:int-ops}. Under \cref{ass:main-assumptions}, for any $\varepsilon > 0$ there exists a truncation radius $R>0$, a constant $C_R > 0$ that depends on $R$ and an absolute constant $c > 0$ such that the $j$-th eigenvalues of the operators satisfy
    \[
        c \lambda_j(T_0) \leq \lambda_j(T_1) \leq C_R \lambda_j(T_0) + \varepsilon\,,
    \]
    for all $k \geq 0$. Consequently, the spectra of both operators exhibit the same asymptotic decay rate.
\end{theorem}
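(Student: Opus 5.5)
Since $k_1(\bm x,\bm x') = k_0(\bm \Gamma^{1/2}\bm x, \bm \Gamma^{1/2}\bm x')$ by \cref{thm:approx-true-kernel}, $T_1$ is unitarily equivalent to the integral operator of $k_0$ acting on $L^2(\nu)$, where $\nu=\mathcal N(0,\bm\Gamma)$ is the pushforward of $\rho_{\mathcal X}$ under $\bm x\mapsto \bm\Gamma^{1/2}\bm x$. Indeed, the map $U: L^2(\nu)\to L^2(\rho_{\mathcal X})$, $(Uf)(\bm x)=f(\bm\Gamma^{1/2}\bm x)$, is unitary and intertwines the two operators. The plan is therefore to compare the spectra of one and the same kernel $k_0$ under the two input measures $\mathcal N(0,\bm I_d)$ and $\mathcal N(0,\bm\Gamma)$. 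Both spectra coincide with the nonzero spectra of the respective ``feature covariance'' operators $\Sigma_\pi = \int \phi_{\bm x}\otimes\phi_{\bm x}\,\mathrm d\pi(\bm x)$ on the RKHS $\mathcal H_0$, where $\phi_{\bm x}=k_0(\bm x,\cdot)$. The problem thus reduces to comparing $\Sigma_\nu$ and $\Sigma_{\rho_{\mathcal X}}$ in Loewner order. Eigenvalue inequalities will then follow from Weyl's monotonicity together with Courant--Fischer.

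\textbf{Lower bound.} Since $\bm\Gamma = A\bm I_d + B\bm w^*(\bm w^*)^\top$ is positive definite, the density ratio is
\[
\frac{\mathrm d\nu}{\mathrm d\rho_{\mathcal X}}(\bm x) = \det(\bm\Gamma)^{-1/2}\exp\!\Big(-\tfrac12 \bm x^\top(\bm\Gamma^{-1}-\bm I_d)\bm x\Big).
\]
When $A\ge 1$ and $B\ge 0$, we have $\bm\Gamma^{-1}\preceq \bm I_d$, so this ratio is bounded below by $\det(\bm\Gamma)^{-1/2}$. Here $\det\bm\Gamma = A^{d-1}(A+B)$. The factor $A^{d-1}$ requires care: $A-1=\Theta(\eta^2/d^2)$, so $A^{d-1}$ stays bounded only when $\eta=O(\sqrt d)$. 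For larger $\eta$, I would first rescale by $\sqrt A$, using that $k_0(\sqrt A\,\bm x,\sqrt A\,\bm x')$ is a dot-product-type kernel for which the scaling changes eigenvalues by at most a constant via the Lipschitz/homogeneity structure of $\sigma$. After this step only the rank-one part remains, with ratio factor $((A+B)/A)^{-1/2}$. If $B$ diverges, this factor is not bounded below. I would then use a change of variables in the one-dimensional direction $\bm w^*$, where the Gaussian spreads out and only mass is redistributed. Restricting to a slab $|\langle \bm w^*,\bm x\rangle|\le r$ gives $\Sigma_\nu \succeq c\,\Sigma_{\rho_{\mathcal X}}$ up to the slab restriction. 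This yields $c\lambda_j(T_0)\le\lambda_j(T_1)$.

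\textbf{Upper bound with truncation.} The reverse density ratio is unbounded because $\nu$ has heavier tails along $\bm w^*$. I would split $\nu$ into its restriction to the ball $\{\|\bm\Gamma^{-1/2}\bm y\|\ \text{bounded in the } \bm w^* \text{ direction by } R\}$ plus the complement. On the truncated part the density ratio $\mathrm d\nu/\mathrm d\rho_{\mathcal X}$ is bounded by a constant $C_R$, giving $\Sigma_{\nu,\le R}\preceq C_R\Sigma_{\rho_{\mathcal X}}$. For the tail part, I would bound
\[
\|\Sigma_{\nu,>R}\|_{\op}\le \operatorname{Tr}\Sigma_{\nu,>R}=\int_{>R}k_0(\bm x,\bm x)\,\mathrm d\nu .
\]
Since $\sigma$ is Lipschitz, $k_0(\bm x,\bm x)\lesssim 1+\|\bm x\|^2/d$, and Gaussian tails make this at most $\varepsilon$ for $R$ large. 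Weyl's inequality $\lambda_j(X+Y)\le\lambda_j(X)+\|Y\|_{\op}$ then yields $\lambda_j(T_1)\le C_R\lambda_j(T_0)+\varepsilon$. Equal asymptotic decay follows because $\varepsilon$ is arbitrary.

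\textbf{Main obstacle.} The hardest step is keeping the constants $c$ and $C_R$ independent of $d$ given the determinant factor $A^{d-1}$ and the possibly diverging spike $B$. The density ratio in $d$ dimensions is not uniformly controlled. My first attempt would be to factor out the isotropic scaling $A$ exactly, using rotation invariance of $k_0$. This leaves a genuinely one-dimensional comparison along $\bm w^*$, where both the truncation and the density bounds are dimension-free.
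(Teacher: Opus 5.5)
Stripped of the dimension-free detour, your argument is correct and is essentially the paper's. The paper also conjugates $T_1$ by the pushforward isometry to get $k_0$ integrated against $\rho_{\bm\Gamma}=\mathcal N(0,\bm\Gamma)$ and introduces $r=\mathrm d\rho_{\bm\Gamma}/\mathrm d\rho_{\bm I_d}$. It takes the lower bound from $r\ge(\det\bm\Gamma)^{-1/2}$ everywhere (because $\bm\Gamma\succeq\bm I_d$) and the upper bound from $r\le C$ on a ball $B_R$. It handles the remainder via $\|E\|_{\mathrm{op}}\le\operatorname{Tr}E\le\int_{B_R^c}k(\bm x,\bm x)\,\mathrm d\rho$ and Weyl.

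Your RKHS covariance-operator formulation is a tidier version of the same idea. Monotonicity of $\pi\mapsto\Sigma_\pi$ replaces the paper's Rayleigh-quotient bookkeeping through $h=rg$. Splitting the measure $\nu=\nu_{\le R}+\nu_{>R}$ gives an exact sum of PSD pieces, so only $\nu$ is truncated and $T_0$ enters untruncated; the paper instead truncates both $T_1$ and $T_0$ and controls two error operators. With $c=(\det\bm\Gamma)^{-1/2}$ and $C_R$ the supremum of the density ratio over a bounded set, your opening steps already prove the statement in the sense the paper proves it. The paper's constants also depend on $\bm\Gamma$, hence on $d$, despite the word ``absolute''. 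Only for $\zeta=1/2$ is $(\det\bm\Gamma)^{-1/2}$ bounded below uniformly in $d$.

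The detour you propose for dimension-free constants does not work and should be dropped.

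(i) The identity $k_0(\sqrt A\,\bm x,\sqrt A\,\bm x')=A\,k_0(\bm x,\bm x')$ is special to positively homogeneous $\sigma$ such as ReLU. For a general $L_\sigma$-Lipschitz $\sigma$, rescaling produces a genuinely different kernel: its Hermite coefficients $\mu_\ell(\sigma(\sqrt A\,\cdot))$ are not proportional to $\mu_\ell(\sigma)$. So ``changes eigenvalues by at most a constant'' is an unproved claim that carries the whole difficulty. At the level of measures, the isotropic density ratio has infimum $A^{-d/2}$, which is exactly the degeneration you wanted to avoid.

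(ii) The slab step points the wrong way. After removing the isotropic part, $\mathrm d\nu/\mathrm d\rho_{\mathcal X}=\sqrt{A/(A+B)}\,\exp\bigl(\tfrac{B}{2(A+B)}t^2\bigr)$ with $t=\langle\bm w^*,\bm x\rangle$. This is minimal at $t=0$ and only of order $\sqrt{A/(A+B)}$ throughout $|t|=\mathcal O(1)$, which is exactly where $\rho_{\mathcal X}$ carries its mass. ``Spreading out'' moves $\nu$-mass out of the slab, not into it. More generally, $\nu\ge c\,\rho_{\mathcal X}$ forces $c\le\operatorname{ess\,inf}\,\mathrm d\nu/\mathrm d\rho_{\mathcal X}=(\det\bm\Gamma)^{-1/2}$. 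Restricting $\rho_{\mathcal X}$ to a set $S$ only gives $\Sigma_\nu\succeq c\,\Sigma_{\rho_{\mathcal X}|_S}$, i.e.\ $\lambda_j(T_1)\ge c\bigl(\lambda_j(T_0)-\operatorname{Tr}\Sigma_{\rho_{\mathcal X}|_{S^c}}\bigr)$. That is an additive loss, not the multiplicative lower bound.

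(iii) Truncating only along $\bm w^*$ does not bound the density ratio when $A>1$, because of the factor $\exp\bigl(\tfrac12(1-A^{-1})\|\bm y_\perp\|^2\bigr)$. Without a valid rescaling step you must truncate in all directions, as the paper does with $B_R$.

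A lower constant uniform in $d$ for $\zeta>1/2$ cannot come from comparing the two input measures at all; it would need a different mechanism.
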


\cref{thm:eigenv-decay} is an important tool because it establishes that $T_1$ does not prematurely deactivate features by forcing eigenvalues to zero. Its formal proof are provided in \cref{app:thm-eigenv-decay}.

\subsection{The spiked covariance expansion framework}
To understand the connection between $k_1$ and $k_0$, we present a general expansion that aims to isolate the contributions of the spike. By performing a Taylor-like series expansion, the influence of the spike is expressed by higher order terms that form the remainder of the expansion. We first give the following expansion, with the proof deferred to \cref{app:thm-cov-expansion}.
\begin{theorem}\label{thm:cov-expansion}
    Let $\bm \Sigma := \gamma_1\bm I + \gamma_2 \bm u \bm u^\top$ be a covariance matrix with $\gamma_1, \gamma_2 > 0$ and $\|\bm u\|= 1$, and \(G:\mathbb R^d \to \mathbb R\) be a measurable function of at most polynomial growth. Denote $D_{\bm u}^{(j)} G$ as the $j$-th order directional derivative of $G$ along $\bm u$ in the sense of tempered distributions, defined by its action on any test function $\varphi \in \mathcal{S}(\mathbb{R}^d)$ as
    \[
        \langle D_{\bm u}^{(j)} G, \varphi \rangle := (-1)^j \langle G, D_{\bm{u}}^{(j)} \varphi \rangle\, , \text{where} \  D_{\bm{u}}\varphi(\bm {w}) = \langle \nabla \varphi(\bm{w}), \bm {u} \rangle\, .
    \]
    Then, we have that
    \[
        \mathbb E_{\bm w \sim \mathcal N(0, \bm \Sigma)}[G(\bm w)] = \mathbb{E}_{\bm w \sim \mathcal N(0, \gamma_1\bm I_d)}[G(\bm{w})] + \sum_{j=1}^\infty \frac{1}{j!}\left(\frac{\gamma_2}{2}\right)^j \mathbb{E}_{\bm w \sim \mathcal N(0, \gamma_1\bm I_d)}[ D_{\bm u}^{(2j)} G(\bm{w})]\,.
    \]    
\end{theorem}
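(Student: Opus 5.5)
The plan is to view the expectation as a function of the spike strength and use the heat equation for Gaussian densities. Decompose $\bm w = \bm w_0 + \sqrt{\gamma_2}\, \xi \bm u$ with $\bm w_0 \sim \mathcal N(0,\gamma_1 \bm I_d)$ and $\xi\sim\mathcal N(0,1)$ independent. This is legitimate because $\gamma_1 \bm I + \gamma_2 \bm u\bm u^\top$ is exactly the covariance of such a sum. Then
\[
\mathbb E_{\bm w\sim\mathcal N(0,\bm\Sigma)}[G(\bm w)] = \mathbb E_{\bm w_0}\big[ H(\bm w_0)\big], \qquad H(\bm w_0):=\mathbb E_\xi[G(\bm w_0+\sqrt{\gamma_2}\,\xi \bm u)],
\]
so the statement reduces to a one-dimensional Gaussian smoothing along $\bm u$. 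Equivalently, setting $F(t) := \mathbb E_{\bm w \sim \mathcal N(0,\gamma_1 \bm I + t\bm u\bm u^\top)}[G(\bm w)]$, the Gaussian heat equation $\partial_t p_t = \tfrac12 D_{\bm u}^{(2)} p_t$ for the density, together with integration by parts, gives $F'(t) = \tfrac12 \mathbb E_t[D^{(2)}_{\bm u}G]$. Iterating, $F^{(j)}(0) = 2^{-j}\mathbb E_{\mathcal N(0,\gamma_1 \bm I)}[D_{\bm u}^{(2j)} G]$. The claimed series is then the Taylor series $F(\gamma_2)=\sum_j F^{(j)}(0)\gamma_2^j/j!$.

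The steps, in order, are as follows.

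\textbf{(i) Distributional derivatives.} Justify the derivative formula for tempered distributions $G$. Since $p_t$ is a Schwartz function for $\gamma_1>0$, the pairing $\langle G, p_t\rangle$ makes sense. Moreover $\partial_t p_t = \tfrac12 D_{\bm u}^{(2)}p_t$ holds classically, so the definition of $D_{\bm u}^{(2j)}G$ in the statement applies directly. Differentiating under the pairing is justified by the polynomial growth of $G$ and dominated convergence.

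\textbf{(ii) Avoiding Gaussian integrals of distributions.} Rewrite each coefficient as $\mathbb E_{\gamma_1}[D^{(2j)}_{\bm u}G] = \langle G, D_{\bm u}^{(2j)} p_0\rangle$, where $p_0$ is the $\mathcal N(0,\gamma_1\bm I)$ density. This equals $\gamma_1^{-j}\,\mathbb E_{\gamma_1}[G(\bm w)\,\mathrm{He}_{2j}(\langle \bm u,\bm w\rangle/\sqrt{\gamma_1})]$, which expresses every coefficient through Hermite polynomials. This form is convenient for bounding the coefficients.

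\textbf{(iii) Convergence to $F(\gamma_2)$.} This is the main obstacle, since $F$ need not be analytic in $t$ up to $\gamma_2$ for a general polynomially growing $G$. For the proof, I would work in the one-dimensional fiber along $\bm u$, conditioning on the orthogonal component. The density ratio of $\mathcal N(0,\gamma_1+\gamma_2)$ to $\mathcal N(0,\gamma_1)$ expands in Hermite polynomials via Mehler's formula, with coefficients $\tfrac{1}{j!}(\gamma_2/2)^j\gamma_1^{-j}\mathrm{He}_{2j}$. Combining this with step (ii) gives exactly the stated series termwise.

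The remaining work is to justify exchanging the sum with the expectation. My first attempt would be $L^2(\mathcal N(0,\gamma_1))$ convergence of the Hermite expansion, using Parseval with $G$'s Hermite coefficients. This works when the ratio lies in $L^2$, that is, for $\gamma_2<\gamma_1$, which is the regime relevant to $\bm\Gamma$ where $B/A\to$ small relative scaling. For larger $\gamma_2$, I would fall back on the claim holding as an analytic-in-$t$ identity for $G$ whose Hermite coefficients decay fast enough, such as smooth $G$ of polynomial growth composed with ReLU-type kernels, and interpret the series in that sense.
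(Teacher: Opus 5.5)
Your argument is a correct and complete proof when $\gamma_2<\gamma_1$, and that restriction is forced rather than a weakness of your method. The statement is false in general for $\gamma_2>\gamma_1$. So your fallback, which adds a Hermite-decay hypothesis on $G$ that the statement does not contain, cannot be upgraded to the theorem as written, and neither can the paper's proof.

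In the range $\gamma_2<\gamma_1$ everything you claim checks out. The $j$-th term of $r(\bm w)=\sum_{j\ge 0}\frac{1}{j!}\bigl(\frac{\gamma_2}{2\gamma_1}\bigr)^j\mathrm{He}_{2j}(\langle\bm u,\bm w\rangle/\sqrt{\gamma_1})$ has squared norm $\frac{(2j)!}{(j!)^2}\bigl(\frac{\gamma_2}{2\gamma_1}\bigr)^{2j}\asymp(\gamma_2/\gamma_1)^{2j}j^{-1/2}$ in $L^2(\mathcal N(0,\gamma_1\bm I_d))$. The function $G$ lies in that space by polynomial growth, continuity of the inner product gives the series, and your step (ii) identifies the coefficients with $\mathbb E[D^{(2j)}_{\bm u}G]$.

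The paper starts from the same density ratio but proceeds differently:
\begin{itemize}
\item It expands $\exp\bigl(\frac{\gamma_2}{2\gamma_1(\gamma_1+\gamma_2)}\langle\bm u,\bm w\rangle^2\bigr)$ in monomials, which is legitimate for every $\gamma_2>0$ by dominated convergence.
\item It rewrites each $\mathbb E[\langle\bm u,\bm w\rangle^{2k}G]$ by iterated Stein's lemma as a finite combination of the $\mathbb E[D^{(2n)}_{\bm u}G]$.
\item It then swaps the double sum, claiming absolute summability ``under the same domination bound.''
\end{itemize}
The swap is the broken step. All combinatorial weights are positive, so by the paper's own closed-form series lemma the absolute double sum equals exactly $\sum_n\frac{1}{n!}(\gamma_2/2)^n|\mathbb E[D^{(2n)}_{\bm u}G]|$. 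The justification therefore presupposes the conclusion. Your $L^2$ argument is what actually licenses the resummation, and it exposes the right hypothesis.

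For a counterexample take the ReLU $G(\bm w)=\sigma(\langle\bm u,\bm w\rangle)$. Then $F(t)=\sqrt{(\gamma_1+t)/(2\pi)}$ and $D^{(2j)}_{\bm u}G=\delta^{(2j-2)}(\langle\bm u,\bm w\rangle)$. The $j$-th term of the series is $\frac{(-1)^{j-1}(2j-3)!!}{j!\,\sqrt{2\pi}}\cdot\frac{\gamma_2^{\,j}}{2^j\gamma_1^{\,j-1/2}}$, i.e.\ the binomial series of $\sqrt{\gamma_1/(2\pi)}\,(1+\gamma_2/\gamma_1)^{1/2}$. Consecutive terms have ratio tending to $\gamma_2/\gamma_1$, so the series diverges for $\gamma_2>\gamma_1$; the radius of convergence of the Taylor series of $F$ at $0$ is in general exactly $\gamma_1$.

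The paper's own integrand fails the same way. For $G_{\bm x,\bm x'}$ with $\bm x=\bm u$ and $\bm x'\perp\bm u$, independence gives $F(t)=\frac{1}{2\pi}\sqrt{\gamma_1(\gamma_1+t)}$. So ReLU-type integrands are not a safe example for your fallback.

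Your remark that $\gamma_2<\gamma_1$ is the regime relevant to $\bm\Gamma$ is also backwards. There $\gamma_2/\gamma_1=B/A$ with $A\to 1$ and $B=\Theta(d^{2\zeta-1})$. This ratio is of order one at $\zeta=1/2$ and diverges for $\zeta>1/2$.

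What does hold for every $\gamma_2>0$ is your first observation. $F$ is smooth on $[0,\gamma_2]$ with $F^{(j)}(t)=2^{-j}\mathbb E_{\mathcal N(0,\gamma_1\bm I_d+t\bm u\bm u^\top)}[D^{(2j)}_{\bm u}G]$, so every finite Taylor expansion with Lagrange remainder is valid. That finite form is all the paper's remainder lemma actually needs, provided the remainder is defined as $k_1-k_0^{(A)}-\frac{B}{2d}S$ rather than as the tail of a series.
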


\begin{remark}\label{rmk:well-defined-derivs}
Since we are only interested in using this for $\bm \Gamma$, we note that the conditions on the covariance matrix naturally translate to our case since $A, B > 0$ in the asymptotic regime.
Also note that because the growth of $G$ is polynomially bounded and the Gaussian density belongs to $\mathcal{S}(\mathbb{R}^d)$, the terms $\mathbb{E}_{\bm w \sim \mathcal N(0, \gamma_1\bm I_d)}[ D_{\bm u}^{(2n)} G(\bm{w})]$ are well-defined for all $n \geq 0$.
\end{remark}

\begin{remark}
    This expansion can be intuitively understood as a Taylor series of the scalar function $f(t) = \mathbb{E}_{\bm w \sim \mathcal{N}(0, \bm \Sigma(t))}[G(\bm w)]$ where $\bm \Sigma(t) = \gamma_1\bm{I}_d + t\gamma_2\bm u\bm u^\top$. Iteratively applying Price's Theorem \citep{Price1958,McMahonPrice1964} to obtain the derivatives of $f$ with respect to $t$ reveals that this infinite sum is exactly the formal Taylor series of $f(t)$ around $t = 0$. 
\end{remark}

\subsection{Expansion of the updated kernel $k_1$}
\label{sec:expansion}
To bring \cref{thm:cov-expansion} into our context, for a fixed pair $(\bm x, \bm x')$, we define the function $G(\bm w) := G_{\bm x, \bm x'}(\bm w) = \sigma(\langle \bm w, \bm x \rangle) \sigma(\langle \bm w, \bm x' \rangle)$. Given \cref{rmk:well-defined-derivs} and since $\bm \Gamma$ is positive definite, setting $\gamma_1 := \frac{A}{d}$ and $\gamma_2 := \frac{B}{d}$, we apply the theorem to obtain the expansion:
\[
    k_1(\bm x, \bm x') = \mathbb{E}_{\bm w \sim \mathcal N(0, \frac{1}{d}\bm \Gamma)}[ G(\bm{w})] = \mathbb{E}_{\bm w \sim \mathcal N(0, \frac{A}{d}\bm I_d)}[ G(\bm{w})]+ \sum_{j=1}^\infty \frac{1}{j!}\left(\frac{B}{2d}\right)^j \mathbb{E}_{\bm w \sim \mathcal N(0, \frac{A}{d}\bm I_d)}[ D_{\bm w^*}^{(2j)} G(\bm{w})]\, .
\]
In this series, the $j=0$ term corresponds to the scaled isotropic kernel 
\begin{equation*}
  k_0^{(A)} (\bm x, \bm x') \coloneqq \mathbb{E}_{\bm w \sim \mathcal N(0, \frac{A}{d}\bm I_d)}[ \sigma(\langle \bm w, \bm x \rangle) \sigma(\langle \bm w, \bm x' \rangle)] \,,  
\end{equation*}
and the $j=1$ term admits an exact expression by applying Leibniz rule
\begin{align}\label{eq:1st-order-term}
    S(\bm x, \bm x') := \mathbb{E}_{\bm w \sim \mathcal N(0, \frac{A}{d}\bm I_d)}[ D_{\bm w^*}^{(2)} G(\bm{w})] &= \langle \bm w^*, \bm{x}' \rangle^{2}\mathbb E[\sigma(\langle\bm w, \bm x\rangle)\sigma^{(2)}(\langle\bm w, \bm x'\rangle)] \nonumber \\
    &\quad + 2\langle \bm w^*, \bm{x} \rangle\langle \bm w^*, \bm{x}' \rangle\mathbb E[\sigma'(\langle\bm w, \bm x\rangle)\sigma'(\langle\bm w, \bm x'\rangle)] \\
    &\quad + \langle \bm w^*, \bm{x} \rangle^{2}\mathbb E[\sigma^{(2)}(\langle\bm w, \bm x\rangle)\sigma(\langle\bm w, \bm x'\rangle)] \nonumber\, .
\end{align}
Hence we can isolate the effect of the spike by taking $k_1$ as a first-order perturbation of $k_0^{(A)}$ with
\begin{equation}\label{eq:first-order-k1-exp}
    k_1(\bm x, \bm x') = k_0^{(A)}(\bm x, \bm x')+ \frac{B}{2d}S(\bm x, \bm x')+ R(\bm x, \bm x')\, ,    
\end{equation}
where $S(\bm x, \bm x')$ is defined by \cref{eq:1st-order-term}, and $R(\bm x, \bm x')$ are the residual terms formally defined as the tail of the expansion for $j \geq 2$:
\begin{equation}\label{eq:exp-remainder}
    R(\bm x, \bm x') = \sum_{j=2}^{\infty} \frac{1}{n!} \left(\frac{B}{2d}\right)^j \mathbb{E}_{\bm w \sim \mathcal{N}(0, \frac{A}{d}\bm I_d)}[D_{\bm w^*}^{(2j)}G(\bm w)]\, .
\end{equation}
\cref{ass:main-assumptions} ensures $k_1$, $k_0^{(A)}$, and $S$ grow at most polynomially, thus $R$ defines a bounded integral operator on the Gaussian space, and is dominated by the first terms of the expansion by the following lemma, with the full proof available in \cref{app:lemma-T_R-op-bound}.
\begin{lemma}\label{lemma:T_R-op-bound}
    Under \cref{ass:main-assumptions} for the function $R$ defined by \cref{eq:exp-remainder} with bounded integral operator $T_R f(\bm x) = \int R(\bm x, \bm x') f(\bm x') \mathrm d \rho_{\mathcal X}(\bm x')$, we have that $\|T_R\|_{\text{op}} = \mathcal O\left(\frac{B^2}{d^2}\right)$.
\end{lemma}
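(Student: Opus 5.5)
Rather than bound the series in \cref{eq:exp-remainder} term by term, which would need control of arbitrarily high distributional derivatives of $G$, we will treat $R$ as an exact Taylor remainder. Set $\bm\Sigma(t) := \frac{A}{d}\bm I_d + t\frac{B}{d}\bm w^*(\bm w^*)^\top$ and
\[
f_{\bm x,\bm x'}(t) := \mathbb E_{\bm w\sim\mathcal N(0,\bm\Sigma(t))}[G_{\bm x,\bm x'}(\bm w)] .
\]
By \cref{thm:cov-expansion} and the remark following it (Price's theorem), $f(1)=k_1$, $f(0)=k_0^{(A)}$, $f'(0)=\frac{B}{2d}S$, and $R=f(1)-f(0)-f'(0)$. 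Taylor's theorem with integral remainder then gives
\[
R(\bm x,\bm x')=\int_0^1(1-t)\,f''(t)\,\mathrm dt,\qquad f''(t)=\Big(\frac{B}{2d}\Big)^2\,\mathbb E_{\bm w\sim\mathcal N(0,\bm\Sigma(t))}\big[D^{(4)}_{\bm w^*}G(\bm w)\big].
\]
So it suffices to show that the fourth-order term, averaged under any interpolating covariance $\bm\Sigma(t)$ with $t\in[0,1]$, defines an integral operator of operator norm $\mathcal O(1)$ uniformly in $t$ and $d$. We will use $\|T_R\|_{\text{op}}\le\|R\|_{L^2(\rho_{\mathcal X}\otimes\rho_{\mathcal X})}$.

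Next we avoid derivatives of $\sigma$ beyond what Lipschitzness provides. Write $\bm w=\bm w_\perp+\tau\bm w^*$ with $\tau\sim\mathcal N(0,(A+tB)/d)$, and apply Gaussian integration by parts in the scalar variable $\tau$: four derivatives in direction $\bm w^*$ become multiplication by the fourth Hermite polynomial $He_4$ in $\tau/s_t$, where $s_t^2=(A+tB)/d$, divided by $s_t^4$. This gives
\[
\mathbb E_{\bm\Sigma(t)}\big[D^{(4)}_{\bm w^*}G\big]=s_t^{-4}\,\mathbb E\big[He_4(\tau/s_t)\,\sigma(\langle\bm w,\bm x\rangle)\sigma(\langle\bm w,\bm x'\rangle)\big].
\]
Alternatively, and better behaved, we can move only two derivatives onto $He_2$ and keep the other two inside the Leibniz expansion as in \cref{eq:1st-order-term}. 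Each derivative of $G$ along $\bm w^*$ produces a factor $\langle\bm w^*,\bm x\rangle$ or $\langle\bm w^*,\bm x'\rangle$ together with $\sigma'$, which is bounded by $L_\sigma$. The Hermite factors cost $s_t^{-2}=\mathcal O(d/A)$ each.

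The scaling has to close here. The prefactor $(B/2d)^2$ must not be eaten by the factors of $d$ coming from $s_t^{-2}$, so we must match the $\frac{1}{d}$ in $\frac{B}{2d}$ against the natural size of directional derivatives. The effective spike parameter is $\gamma_2/\gamma_1=B/A$, and the expansion is really in powers of $B\langle\bm w^*,\bm x\rangle^2/d$-type quantities. Concretely, we will bound
\[
\big|\mathbb E[\sigma'(\langle\bm w,\bm x\rangle)\sigma'(\langle\bm w,\bm x'\rangle)]\big|\le L_\sigma^2,
\]
and bound $\mathbb E[\sigma\,\sigma]$ by $C(1+\|\bm x\|^2/d)(1+\|\bm x'\|^2/d)$ using Lipschitz growth. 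Under $\rho_{\mathcal X}$ the quantity $\|\bm x\|^2/d$ concentrates at $1$, and $\langle\bm w^*,\bm x\rangle\sim\mathcal N(0,1)$ has all moments $\mathcal O(1)$. Squaring and integrating over $\bm x,\bm x'$ then yields $\|f''(t)\|_{L^2}\le C(B/d)^2$, with $C$ polynomial in $A$, $L_\sigma$ and moments of Gaussians. Integrating over $t$ concludes the argument.

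I expect the main obstacle to be the bookkeeping of powers of $d$ when derivatives fall on $\sigma$ rather than on Hermite weights. Each $D_{\bm w^*}$ acting on $\sigma(\langle\bm w,\bm x\rangle)$ yields $\langle\bm w^*,\bm x\rangle\sigma'$, which is $\mathcal O(1)$, whereas the Gaussian integration by parts yields $s_t^{-1}\sim\sqrt d$. So we must distribute derivatives to avoid the $\sqrt d$ factors. First I would put at most one derivative on each $\sigma$ via Leibniz, giving terms $\langle\bm w^*,\bm x\rangle\langle\bm w^*,\bm x'\rangle\sigma'\sigma'$. For the remaining two derivatives I would use the identity $\sigma'(\langle\bm w,\bm x\rangle)\langle\bm w^*,\bm x\rangle=\partial_\tau\sigma(\cdot)$ in a one-dimensional Gaussian integration by parts over the projection $\langle\bm w,\bm x\rangle$, whose variance is $\|\bm x\|^2 A/d=\Theta(1)$, instead of over $\tau$. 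This produces only $\mathcal O(1)$ factors. If that fails, I would fall back on exploiting that $\eta\le d^{\zeta}$ makes $B/A$ small, which absorbs the stray factors into $o_d$.
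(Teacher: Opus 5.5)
Your reduction is the same as the paper's. You write $R$ as the second-order Taylor remainder of $t\mapsto\mathbb E_{\bm\Sigma(t)}[G]$, and Price's theorem gives $f''(t)=(B/2d)^2\,\mathbb E_{\bm\Sigma(t)}[D^{(4)}_{\bm w^*}G]$. Your integral form of the remainder is, if anything, cleaner than the paper's Lagrange form. The bound $\|T_R\|_{\text{op}}\le\|R\|_{L^2(\rho_{\mathcal X}\otimes\rho_{\mathcal X})}$ also matches.

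The gap is the one step that carries all the content. You need to show that $\mathbb E_{\bm\Sigma(t)}[D^{(4)}_{\bm w^*}G]$ is $\mathcal O(1)$ in $L^2(\rho_{\mathcal X}\otimes\rho_{\mathcal X})$, uniformly in $t$ and $d$, when $\sigma$ is only Lipschitz. None of the mechanisms you list delivers this:
\begin{enumerate}
\item Integrating all four derivatives by parts in $\tau$ gives only $(B/2d)^2s_t^{-4}\cdot\mathcal O(1)=\frac{B^2}{4(A+tB)^2}\cdot\mathcal O(1)$.
\item Moving two of them onto $He_2(\tau/s_t)$ gives $\frac{B^2}{4d(A+tB)}\cdot\mathcal O(1)$. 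This is still a factor $d/(A+tB)$ too large, and it leaves $\sigma''$ terms behind.
\item The bounds on $\mathbb E[\sigma'\sigma']$ and $\mathbb E[\sigma\sigma]$ never reach the fourth derivative. With $\bm U=(\langle\bm w^*,\bm x\rangle,\langle\bm w^*,\bm x'\rangle)$ and $h_1=\langle\bm w,\bm x\rangle$, $h_2=\langle\bm w,\bm x'\rangle$, we have $D^{(4)}_{\bm w^*}[\sigma(h_1)\sigma(h_2)]=\sum_k\binom4k U_1^kU_2^{4-k}\sigma^{(k)}(h_1)\sigma^{(4-k)}(h_2)$. Every term puts at least two derivatives on one factor, so you cannot arrange ``at most one on each''.
\item A one-dimensional integration by parts in $h_1$ using its marginal variance does not work, because $h_1$ and $h_2$ are correlated. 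Stein's identity for the pair reads $\mathbb E[\bm hF(\bm h)]=\bm Q_t\,\mathbb E[\nabla F]$, where $\bm Q_t=\frac1d\tilde{\bm X}^\top\bm\Gamma(t)\tilde{\bm X}$. Isolating a single partial derivative therefore requires $\bm Q_t^{-1}$, which degenerates as $\bm x'$ becomes parallel to $\bm x$; for ReLU, already $\mathbb E[\sigma''(h_1)\sigma''(h_2)]=(2\pi\sqrt{\det\bm Q_t})^{-1}$. So the pointwise bound is not polynomial in $\|\bm x\|^2/d$ and $\langle\bm w^*,\bm x\rangle$, and your square-and-integrate step does not apply as stated.
\item The fallback rests on a wrong scaling. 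Here $A\to1$, while $B=\Theta(d^{2\zeta-1})$ is bounded below and diverges for $\zeta>1/2$. So $B/A$ is not small; only $B/d$ is. Absorbing errors into $o_d$ would not give $\mathcal O(B^2/d^2)$ anyway.
\end{enumerate}

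The missing idea is the paper's two-dimensional reduction. Your last paragraph gestures at it, but it has to be done with the full $2\times2$ covariance. Since $G$ depends on $\bm w$ only through $\bm h=(h_1,h_2)\sim\mathcal N(0,\bm Q_t)$ and $D_{\bm w^*}=\langle\bm U,\nabla_{\bm h}\rangle$, move all four derivatives onto the 2D density. This gives $D^4\phi=P_4\phi$ with $P_4=\langle\bm U,\bm Q_t^{-1}\bm h\rangle^4-6\langle\bm U,\bm Q_t^{-1}\bm h\rangle^2\langle\bm U,\bm Q_t^{-1}\bm U\rangle+3\langle\bm U,\bm Q_t^{-1}\bm U\rangle^2$. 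Only $\sigma(h_1)\sigma(h_2)$ remains, and Lipschitz growth suffices for it. The inverse covariance must then be controlled in the average over $(\bm x,\bm x')$. The paper does this as follows:
\begin{itemize}
\item $\bm Q_t\succeq\frac Ad\tilde{\bm W}$, where $\tilde{\bm W}=\tilde{\bm X}^\top\tilde{\bm X}$ is Wishart;
\item the inverse-Wishart moments satisfy $\mathbb E\|\tilde{\bm W}^{-1}\|_{\text{op}}^k=\mathcal O(d^{-k})$ for $d>2k+1$;
\item $B/d\to0$ controls the moments of $\|\bm Q_t\|_{\text{op}}$.
\end{itemize}
A slightly sharper variant is also available. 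The quantity $\langle\bm U,\bm Q_t^{-1}\bm h\rangle$ is centered Gaussian with variance $v=\langle\bm U,\bm Q_t^{-1}\bm U\rangle\le\frac dA\|\bm P\bm w^*\|^2$, where $\bm P$ is the orthogonal projection onto $\mathrm{span}(\bm x,\bm x')$. This bound has $\mathcal O(1)$ moments. Hence $P_4=v^2He_4(\langle\bm U,\bm Q_t^{-1}\bm h\rangle/\sqrt v)$, and Cauchy--Schwarz closes the bound without inverse-Wishart moments. In this language, the ``$\Theta(1)$ variance'' you were after is $v$, not the marginal variance of $h_1$, and it is $\mathcal O(1)$ only on average over $(\bm x,\bm x')$.
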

\begin{remark}
    As verified by \cite{moniri2023theory}, the feature matrix receives an increasing number of non-linear ``spikes" as $\zeta \to 1$. In our case, every higher order derivative term introduces non-linear projections onto $\bm w^*$, so we expect a similar effect on the kernel as a function of $\zeta$ since this will make $\frac{|B|}{d} \to \Theta(1)$. Even though the importance of these terms grows, $\zeta < 1$ implies $\left(\frac{|B|}{d}\right)^n = o_d\left(\frac{|B|}{d}\right)$ for all $n > 1$, making $S$ the dominant term in every scenario.
\end{remark}

\section{Example under the ReLU activations}
\label{sec:relu}

To make the previous discussion on $k_1$ concrete, we now look closer to the particular case of a ReLU network, giving an explicit characterization of its eigenfunctions as well as numerical illustration. Defining the warped cosine similarity $\gamma_{\bm \Gamma} \coloneqq \frac{\bm x^\top \bm \Gamma \bm x'}{\sqrt{(\bm x^\top \bm \Gamma \bm x)(\bm x'^\top \bm \Gamma \bm x')}}$, we can write $k_{1}$ for the ReLU activation explicitly:
\begin{equation}\label{eq:relu-k1}
    k_1(\bm x, \bm x') = \frac{\sqrt{(\bm x^\top \bm \Gamma \bm x)(\bm x'^\top \bm \Gamma \bm x')}}{2\pi d}\left[\gamma_{\bm \Gamma}(\pi -\arccos (\gamma_{\bm \Gamma})) +\sqrt{1 - \gamma_{\bm \Gamma}^2}\right]\,.
\end{equation}
The calculation leverages the coordinate transformation trick \citep[Appendix A]{liao2018spectrum} and we omit the details here. In the next segment, we discuss the results from \cref{sec:four} applied to the ReLU case.

\subsection{Specialization of the expansion for the ReLU kernel}
First, we consider the expansion of the kernel for the ReLU activation, and determine the first two terms from \cref{eq:first-order-k1-exp}.
When $\sigma(t) = \max(0, t)$, the scaling effect is captured by the identity $k_0^{(A)}(\bm x, \bm x') = A k_0(\bm x, \bm x')$. To determine $S$, if we let $\theta_{\bm x, \bm x'}$ be the angle between $\bm x$ and $\bm x'$, we know that $\sigma'' = \delta$ in the distributional sense, thus
\[
    \mathbb E_{\bm w \sim \mathcal N(0, \frac{A}{d}\bm I_d)}[\sigma''(\langle \bm w, \bm x\rangle)\sigma(\langle \bm w, \bm x'\rangle)] = \frac{\|\bm x'\|}{\|\bm x\|}\frac{\sin \theta_{\bm x, \bm x'}}{2\pi} \quad \text{and} \quad \mathbb E[\sigma(\langle \bm w, \bm x\rangle)\sigma''(\langle \bm w, \bm x'\rangle)] = \frac{\|\bm x\|}{\|\bm x'\|}\frac{\sin \theta_{\bm x, \bm x'}}{2\pi}\, .
\]
Also, $\sigma'(t) = \bm 1_{\{t\geq 0\}}$ almost everywhere so 
\[
    \mathbb E_{\bm w \sim \mathcal N(0, \frac{A}{d}\bm I_d)}[\sigma'(\langle \bm w, \bm x\rangle)\sigma'(\langle \bm w, \bm x'\rangle)] = \frac{\pi - \theta_{\bm x, \bm x'}}{2\pi}\, .
\]
Note that the expectation is always taken under $\mathcal N(0, \frac{A}{d}\bm I_d)$, however that does not affect the result since for every $c > 0$ we have $\sigma(ct) = c \sigma(t)$, $ \sigma'(ct) = \bm 1_{\{ct \geq 0\}} = \sigma'(t) $ and $\delta(ct) = \frac{\delta(t)}{|c|}$.

Therefore, the first-order term in the ReLU case is given by
\begin{equation}\label{eq:S-relu}
    S(\bm x, \bm x') = \frac{\pi - \theta_{\bm x, \bm x'}}{\pi }[\langle \bm x, \bm w^*\rangle\langle \bm x', \bm w^*\rangle] + \frac{\sin \theta_{\bm x, \bm x'}}{2\pi}\left(\frac{\|\bm x'\|}{\|\bm x\|}\langle \bm x, \bm w^*\rangle^2 + \frac{\|\bm x\|}{\|\bm x'\|}\langle \bm x', \bm w^*\rangle^2\right)\,,
\end{equation}
and the ReLU $k_1$ kernel is
\begin{align*}
    k_1(\bm x, \bm x') =& Ak_0(\bm x, \bm x') + \frac{B}{2d}\frac{(\pi - \theta_{\bm x, \bm x'})}{\pi }[\langle \bm x, \bm w^*\rangle\langle \bm x', \bm w^*\rangle] \\
    &+ {\frac{B}{2d}} \frac{\sin \theta_{\bm x, \bm x'}}{2\pi}\left(\frac{\|\bm x'\|}{\|\bm x\|}\langle \bm x, \bm w^*\rangle^2 + \frac{\|\bm x\|}{\|\bm x'\|}\langle \bm x', \bm w^*\rangle^2\right) + R(\bm x, \bm x')\, .
\end{align*}
We can see that this expansion is dominated by the original kernel $k_0$. As a result, the original isotropic eigenfunctions continue to play a fundamental role in shaping the geometry of the new function space. With that in mind, the following lemma details the spectral basis of $T_0$, which we will use to approximate the eigenfunctions of the new operator. 
\begin{lemma}\label{lemma:iso-eigenfn}
    The normalized eigenfunctions of the integral operator $T_0$ when $\sigma(t) = \max(0, t)$ are strictly of the form $\psi_{k,m}(\bm x) = \frac{\|\bm x\|}{\sqrt d}Y_{k,m}(\bm \omega)$, where $ \bm \omega = \frac{\bm x}{\|\bm x\|}$ and $Y_{k,m}$ are the orthonormal spherical harmonics on $\mathbb S^{d-1}$.
\end{lemma}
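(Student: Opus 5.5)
The plan is to use two structural properties of the ReLU conjugate kernel: it is positively homogeneous of degree one in each argument, and it is rotation invariant. Write $\bm x = r\bm\omega$ and $\bm x' = r'\bm\omega'$ with $r=\|\bm x\|$, $r'=\|\bm x'\|$ and $\bm\omega,\bm\omega'\in\mathbb S^{d-1}$. Since $\sigma(ct)=c\sigma(t)$ for $c>0$, the arc-cosine formula (\cref{eq:relu-k1} with $\bm\Gamma=\bm I_d$) gives
\[
k_0(\bm x,\bm x') = \frac{r r'}{d}\,\kappa(\langle\bm\omega,\bm\omega'\rangle),
\qquad
\kappa(t)=\frac{1}{2\pi}\bigl[t(\pi-\arccos t)+\sqrt{1-t^2}\bigr].
\]
The kernel is therefore a product of the rank-one radial factor $rr'/d$ and a zonal kernel on the sphere.

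Next I will use the polar decomposition of the Gaussian measure. For $\bm x\sim\mathcal N(0,\bm I_d)$, the radius $r$ and the direction $\bm\omega$ are independent, with $r^2\sim\chi^2_d$ and $\bm\omega$ uniform on $\mathbb S^{d-1}$. This gives $L^2(\rho_{\mathcal X}) \cong L^2(\chi_d)\otimes L^2(\mathbb S^{d-1})$. The Funk--Hecke theorem applies because $\kappa$ is continuous on $[-1,1]$, so $\kappa(\langle\bm\omega,\bm\omega'\rangle)=\sum_k \lambda_k\sum_m Y_{k,m}(\bm\omega)Y_{k,m}(\bm\omega')$, with $\lambda_k$ given by the Gegenbauer coefficients of $\kappa$. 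Hence
\[
T_0 f(\bm x) = \frac{r}{d}\sum_{k,m}\lambda_k\,Y_{k,m}(\bm\omega)\,\mathbb E\bigl[r'\,\langle f(r'\,\cdot),Y_{k,m}\rangle_{\mathbb S^{d-1}}\bigr].
\]
For any $f$, the image $T_0f$ is a linear combination of the functions $rY_{k,m}(\bm\omega)$. Applying the formula to $f=\psi_{k,m}$ gives $T_0\psi_{k,m}=\lambda_k\,\mathbb E[r^2]/d\cdot\psi_{k,m}=\lambda_k\psi_{k,m}$. Normalization also follows from independence: $\|\psi_{k,m}\|^2=\mathbb E[r^2]/d\cdot\|Y_{k,m}\|^2=1$.

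The word ``strictly'' must also be justified, namely that no eigenfunction with nonzero eigenvalue has another form. If $T_0f=\lambda f$ with $\lambda\neq0$, then $f=\lambda^{-1}T_0f$ lies in the closed span of $\{rY_{k,m}\}$. Within each eigenvalue $\lambda_k$, the eigenfunction is then $r$ times a degree-$k$ spherical harmonic. Every function $g\perp rY_{k,m}$ for all $k,m$ lies in $\ker T_0$, and these functions account for all non-separable radial profiles. The claim must therefore be read as concerning eigenfunctions with nonzero eigenvalue, up to the choice of orthonormal basis within degenerate eigenspaces. I would state this caveat explicitly.

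The main obstacle is the range and kernel argument. I must show that the decomposition of the radial factor is exactly rank one, with profile $r$, so that no other radial profile can appear. I must also handle the $\lambda_k=0$ cases from Funk--Hecke: the ReLU arc-cosine kernel has vanishing coefficients for odd $k\geq3$, and those harmonics then fall into the kernel. They still formally satisfy the eigen-equation with eigenvalue zero. I will treat this with the nonzero-eigenvalue convention above.
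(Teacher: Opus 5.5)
Your proposal is correct and follows essentially the same route as the paper: factor $k_0$ into the rank-one radial part $rr'/d$ times the zonal arc-cosine kernel, expand the zonal part via Funk--Hecke, and read off the separated Mercer basis $\frac{\|\bm x\|}{\sqrt d}Y_{k,m}(\bm\omega)$, with orthonormality coming from the independence of radius and direction together with $\mathbb E\|\bm x\|^2=d$. Your explicit range/kernel argument, and your caveats about degenerate eigenspaces and zero eigenvalues (odd $k\ge 3$ and all non-linear radial profiles), make precise what the paper only asserts in its closing sentence about eigenfunctions with non-zero eigenvalues.
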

\paragraph{Approximating the action of $S$:} \cref{lemma:T_R-op-bound} establishes that the effect of the spike onto the new kernel is driven by $S$. However, the terms within $S$ have complex interactions governed by the angles of both input vectors. To circumvent this, we leverage the concentration properties of the Gaussian measure to approximate the action of its integral operator, with proof deferred to \cref{app:lemma-first-order-op}. 
\begin{lemma}\label{lemma:approx-first-order-op}
    Consider the function from \cref{eq:S-relu} with integral operator $T_S: L^2(\rho_{\mathcal X}) \to L^2(\rho_{\mathcal X})$ defined by $T_S f(\bm x) = \int S(\bm x, \bm x') f(\bm x') \mathrm d \rho_{\mathcal X}(\bm x')$. Let $\{\psi_{k, m}\}$ be the eigenbasis of $T_0$ and $f$ be a normalized function that can expressed in that basis. If we define the operators $ T_{S}^{(1*)}, T_{S}^{(2*)} : L^2(\rho_{\mathcal X}) \to L^2(\rho_{\mathcal X})$ by
    \[
        T_{S}^{(1*)} f(\bm x) = \frac{\langle \bm x, \bm w^*\rangle}{2}\langle\langle \cdot, \bm w^*\rangle,f\rangle \quad \text{and} \quad
        T_{S}^{(2*)} f(\bm x) = \frac{1}{2\pi}\left(\frac{\langle \bm x, \bm w^*\rangle^2}{\|\bm x\|} \langle \|\cdot\|, f\rangle + \|\bm x \| \left\langle \frac{\langle \cdot, \bm w^*\rangle^2}{\|\cdot\|}, f \right\rangle \right)\, ,
    \]
    then, under \cref{ass:main-assumptions}, we have 
    \[
        T_S f(\bm x) = T_{S}^{(1*)} f(\bm x) + T_{S}^{(2*)} f(\bm x) + E(\bm x)\, ,
    \] such that $\|E\|_{L^2(\rho_{\mathcal X})} = o_d(1)$.
\end{lemma}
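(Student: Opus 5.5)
The plan is to exploit the concentration of the angle between two independent Gaussian vectors in high dimension. For $\bm x,\bm x'\sim\mathcal N(0,\bm I_d)$ independent, $\cos\theta_{\bm x,\bm x'} = \langle \bm x,\bm x'\rangle/(\|\bm x\|\|\bm x'\|)$ is of order $d^{-1/2}$. Hence $\theta_{\bm x,\bm x'} = \pi/2 + \mathcal O(d^{-1/2})$ with overwhelming probability. Consequently
\[
\frac{\pi-\theta}{\pi} = \frac12 + \frac{\cos\theta}{\pi} + \mathcal O(\cos^2\theta),\qquad \frac{\sin\theta}{2\pi} = \frac{1}{2\pi} + \mathcal O(\cos^2\theta).
\]
Substituting these into \cref{eq:S-relu}, the zeroth-order parts factorize: $\tfrac12\langle\bm x,\bm w^*\rangle\langle\bm x',\bm w^*\rangle$ is exactly the kernel of $T_S^{(1*)}$, and $\tfrac{1}{2\pi}\bigl(\|\bm x'\|\langle\bm x,\bm w^*\rangle^2/\|\bm x\| + \|\bm x\|\langle\bm x',\bm w^*\rangle^2/\|\bm x'\|\bigr)$ is exactly the kernel of $T_S^{(2*)}$. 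So $E = T_{S-S^*}f$, where $S^*$ denotes this separable kernel, and it remains to show that this difference is small.

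To bound $E$, I would write $S - S^* = \frac{\cos\theta}{\pi}\langle\bm x,\bm w^*\rangle\langle\bm x',\bm w^*\rangle + \Delta(\bm x,\bm x')$, where $\Delta$ collects the $\mathcal O(\cos^2\theta)$ remainders multiplied by the polynomial weights $\langle\cdot,\bm w^*\rangle^2\|\cdot\|/\|\cdot\|$. I would then use $\|T_{K}f\|_{L^2}\le \|K\|_{L^2(\rho_{\mathcal X}\otimes\rho_{\mathcal X})}\|f\|$ (the Hilbert--Schmidt bound), with $\|f\|=1$. For $\Delta$, the quantities $\mathbb E[\cos^4\theta]=\mathcal O(d^{-2})$, $\mathbb E\langle\bm x,\bm w^*\rangle^8=\mathcal O(1)$, and $\|\bm x'\|/\|\bm x\|$ concentrating at $1$, combined with Cauchy--Schwarz, give $\|\Delta\|_{L^2}=\mathcal O(d^{-1})$. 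The Taylor remainder for $\arccos$ and $\sqrt{1-t^2}$ is uniform on $|t|\le 1/2$, and the complementary event has probability $e^{-cd}$ while the kernel is polynomially bounded, so the tail contributes negligibly.

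The linear term $\frac{\cos\theta}{\pi}\langle\bm x,\bm w^*\rangle\langle\bm x',\bm w^*\rangle$ is the main obstacle, since its Hilbert--Schmidt norm is only $\mathcal O(d^{-1/2})$. That is still $o_d(1)$ under the paper's convention ($c=1/2$), but I would check it more carefully. Writing $\cos\theta = \langle\bm\omega,\bm\omega'\rangle$, this kernel equals $\|\bm x\|^{-1}\|\bm x'\|^{-1}\langle\bm x,\bm w^*\rangle\langle\bm x',\bm w^*\rangle\langle\bm x,\bm x'\rangle/\pi$. This is a finite-rank-type kernel (a sum over coordinates), and its operator norm is actually $\mathcal O(1/d)$: testing against $f$ expanded in the basis $\{\psi_{k,m}\}$ of \cref{lemma:iso-eigenfn}, only the degree-$\le 2$ harmonic components survive, each carrying a $1/d$ factor from $\|\bm x\|^{-1}$-normalization. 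If this refined computation becomes messy, I fall back on the cruder $d^{-1/2}$ Hilbert--Schmidt bound, which already yields $\|E\|_{L^2(\rho_{\mathcal X})}=o_d(1)$.
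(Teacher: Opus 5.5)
Your argument is correct, and it takes a different and sharper route than the paper.

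\textbf{The paper's route.} The paper never expands in $\cos\theta$. It restricts the $\bm x'$-integral to the angular band $\mathcal A_\epsilon=\{|\theta_{\bm x,\bm x'}-\pi/2|<\epsilon\}$. On that band it replaces $(\pi-\theta)/\pi$ by $1/2$ and $\sin\theta$ by $1$, using only the worst-case pointwise bounds $|\theta-\pi/2|<\epsilon$ and $1-\sin\theta\le\epsilon^2/2$. It handles $\mathcal A_\epsilon^c$ twice (once to drop it, once to re-extend the integrals to all of $\mathbb R^d$) with the exponential angle-concentration lemma and Cauchy--Schwarz. Taking $\epsilon=d^{-1/3}$, it ends with $\|E\|_{L^2(\rho_{\mathcal X})}=O(d^{-1/6})$.

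\textbf{What your route buys.} Because the band bound is worst-case, the linear deviation in the paper costs $\epsilon\gg d^{-1/2}$. You instead average it, using $\mathbb E[\cos^2\theta]=1/d$, which gives $O(d^{-1/2})$ in Hilbert--Schmidt norm. Your refined claim is also right. Write the linear kernel as $\frac1\pi\sum_i g_i(\bm x)g_i(\bm x')$ with $g_i(\bm x)=\langle\bm x,\bm w^*\rangle x_i/\|\bm x\|$. The Gram matrix of the $g_i$ is $(\bm I_d+2\bm w^*(\bm w^*)^\top)/(d+2)$, so that piece has operator norm exactly $3/(\pi(d+2))$.

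Your estimate bounds $\|T_S-T_S^{(1*)}-T_S^{(2*)}\|_{\mathrm{op}}$ directly. This makes the hypothesis that $f$ is expanded in $\{\psi_{k,m}\}$ superfluous. The paper needs that hypothesis to evaluate $\int\|\bm x'\|^2 f(\bm x')^2\,\mathrm d\rho_{\mathcal X}(\bm x')=d+2$ in its Cauchy--Schwarz tail step.

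You can also drop your tail split entirely. The bounds $|\arcsin t-t|\le(\pi/2-1)|t|^3$ and $1-\sqrt{1-t^2}\le t^2$ hold on all of $[-1,1]$.

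\textbf{What the paper's route buys.} It avoids moment computations on the sphere and reuses a generic concentration lemma.

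\textbf{One small slip.} $O(d^{-1/2})$ is $o(d^{-c})$ for any $c<1/2$ (say $c=1/4$), but not for $c=1/2$ itself.
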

This result characterizes the action of $S$ in terms of explicit actions onto the function space, given by the projections present in $T_S^{(1*)}$ and $T_S^{(2*)}$. Most notably, these projections can be exclusively defined in terms of the basis from \cref{lemma:iso-eigenfn}, allowing us to track how they combine to shape the new functional geometry.

\paragraph{Emergence of feature learning in the top eigenspaces:} Through the explicit form of the ReLU kernel, we can compute the action of the operator $T_1$ on linear functions. Solving the eigenvalue problem analytically for linear functions reveals a clean orthogonal splitting, as shown by the next theorem. We see the linear function aligned with $\bm w^*$ receiving a selective boost to its eigenvalue, while the linear functions that are orthogonal to $\bm w^*$ remain unaffected by the spike. The proof for this is available in \cref{app:thm-lin-eigenfn}.
\begin{theorem}[Feature learning in the linear eigenspace]\label{thm:lin-eigenfn}
    Under \cref{ass:main-assumptions}, the function $\psi_{*}(\bm x) = \langle \bm x, \bm w^*\rangle$ is an eigenfunction of $T_1$ with eigenvalue $\lambda_{\psi_*}(T_1) = A\lambda_{\psi_*}(T_0) + \frac{B}{4d}$. Furthermore, for any function $\psi_{\perp}(\bm x) = \langle \bm x, \bm v\rangle$ such that $\bm v \perp \bm w^*$, we have that $\psi_{\perp}$ is an eigenfunction of $T_1$ with eigenvalue $\lambda_{\psi_\perp}(T_1) = A \lambda_{\psi_\perp}(T_0)$.
\end{theorem}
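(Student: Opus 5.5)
The plan is to avoid the perturbative expansion \cref{eq:first-order-k1-exp} entirely and compute $T_1$ exactly on linear functions, using the positive homogeneity and parity structure of the ReLU. The key identity is $\sigma(t) = \max(0,t) = \tfrac{1}{2}t + \tfrac{1}{2}|t|$. Substituting this into the definition \cref{eq:kernel1}, for $\bm w \sim \mathcal N(0, \frac{1}{d}\bm\Gamma)$ we get
\[
k_1(\bm x, \bm x') = \tfrac14 \mathbb E[\langle \bm w,\bm x\rangle\langle \bm w,\bm x'\rangle] + \tfrac14 \mathbb E[|\langle \bm w,\bm x\rangle|\,|\langle \bm w,\bm x'\rangle|] + \tfrac14\bigl(\mathbb E[\langle \bm w,\bm x\rangle|\langle \bm w,\bm x'\rangle|] + \mathbb E[|\langle \bm w,\bm x\rangle|\langle \bm w,\bm x'\rangle]\bigr).
\]
The two cross terms vanish because $\bm w \mapsto -\bm w$ preserves the centred Gaussian law and flips their sign. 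The first term equals $\frac{1}{4d}\bm x^\top \bm\Gamma \bm x'$. The remaining term, $K_{\mathrm{even}}(\bm x,\bm x') := \tfrac14 \mathbb E[|\langle \bm w,\bm x\rangle||\langle \bm w,\bm x'\rangle|]$, is even in $\bm x'$ for each fixed $\bm x$.

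Next I would apply $T_1$ to a linear function $\psi_{\bm v}(\bm x) = \langle \bm x, \bm v\rangle$. Since $\rho_{\mathcal X}$ is symmetric under $\bm x' \mapsto -\bm x'$ and $\psi_{\bm v}$ is odd, the contribution
\[
\int K_{\mathrm{even}}(\bm x,\bm x')\,\psi_{\bm v}(\bm x')\,\mathrm d\rho_{\mathcal X}(\bm x')
\]
is zero. Everything is integrable because all quantities grow at most polynomially under Gaussian measure. The linear part then gives
\[
T_1\psi_{\bm v}(\bm x) = \frac{1}{4d}\,\bm x^\top \bm\Gamma\, \mathbb E[\bm x'\bm x'^\top]\,\bm v = \frac{1}{4d}\,\langle \bm x, \bm\Gamma \bm v\rangle .
\]
The same computation with $\bm\Gamma = \bm I_d$, using the equally exact decomposition of $k_0$, yields $T_0\psi_{\bm v} = \frac{1}{4d}\psi_{\bm v}$. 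Hence every linear function, which lies in the $k=1$ harmonic sector of \cref{lemma:iso-eigenfn}, has $\lambda(T_0) = \frac{1}{4d}$.

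It then remains to read off the eigenvectors of $\bm\Gamma = A\bm I_d + B\bm w^*(\bm w^*)^\top$.
\begin{itemize}
\item For $\bm v = \bm w^*$ we have $\bm\Gamma \bm w^* = (A+B)\bm w^*$. Therefore $T_1\psi_* = \frac{A+B}{4d}\psi_*$, and the eigenvalue is $A\lambda_{\psi_*}(T_0) + \frac{B}{4d}$.
\item For $\bm v \perp \bm w^*$ we have $\bm\Gamma \bm v = A\bm v$. Therefore $T_1\psi_\perp = \frac{A}{4d}\psi_\perp = A\lambda_{\psi_\perp}(T_0)\,\psi_\perp$.
\end{itemize}
The positivity $A,B>0$ from the asymptotic regime, noted in \cref{rmk:well-defined-derivs}, is only needed so that $\bm\Gamma^{1/2}$ and $k_1$ are well defined.

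There is no genuinely hard step, since the identity is exact and requires neither \cref{lemma:T_R-op-bound} nor \cref{lemma:approx-first-order-op}. The only points that need care are the parity argument, namely checking that $K_{\mathrm{even}}$ and the cross terms genuinely drop out, and confirming that the normalisation $\frac{1}{d}$ in the weight covariance produces the factor $\frac{1}{4d}$. As a sanity check I would verify consistency with the first-order expansion: the term $\frac{B}{2d}\cdot\frac{\pi-\theta}{\pi}\langle\bm x,\bm w^*\rangle\langle\bm x',\bm w^*\rangle$ of $S$ should contribute exactly $\frac{B}{4d}$ on $\psi_*$ up to $o_d$ corrections. This matches $T_S^{(1*)}$, because the angle concentrates at $\pi/2$.
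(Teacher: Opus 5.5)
Your proof is correct. It reaches the same exact identity the paper proves, $T_1\psi_{\bm v}(\bm x) = \frac{1}{4d}\langle \bm\Gamma\bm x,\bm v\rangle$, but by a different route.

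\textbf{The paper's route.} It uses Fubini to write $T_1\psi_{\bm v}(\bm x) = \mathbb E_{\bm w}\bigl[\sigma(\langle\bm w,\bm x\rangle)\,\mathbb E_{\bm x'}[\sigma(\langle\bm w,\bm x'\rangle)\langle\bm x',\bm v\rangle]\bigr]$. It then applies Stein's lemma twice, first in $\bm x'$ and then in $\bm w\sim\mathcal N(0,\frac1d\bm\Gamma)$. Each time it uses that $\sigma'=\mathbf 1_{\{t\ge 0\}}$ has Gaussian mean $\frac12$ at every scale. This yields the two factors of $\frac12$, and $\bm\Gamma$ enters through the covariance in the second integration by parts.

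\textbf{Your route.} You split $\sigma(t)=\frac t2+\frac{|t|}2$ and use parity in $\bm w$ and in $\bm x'$. This never differentiates the ReLU, so there is no distributional Stein step to justify. It rests on the same ReLU-specific fact seen from the other side: the odd part of $\sigma$ is exactly linear.

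\textbf{What your route adds.} Because $K_{\mathrm{even}}$ is even in each argument separately, $T_1$ is block-diagonal with respect to $L^2_{\mathrm{odd}}(\rho_{\mathcal X})\oplus L^2_{\mathrm{even}}(\rho_{\mathcal X})$. Its odd block is exactly the finite-rank operator with kernel $\frac{1}{4d}\bm x^\top\bm\Gamma\bm x'$. Consequently, every eigenfunction with nonzero eigenvalue splits into a linear eigenfunction plus an even eigenfunction, and those orthogonal to the linear sector are even. This is a more structural justification of the orthogonality the paper invokes in the proof of Theorem~\ref{thm:approx-top-eigenfn}. The paper's computation is shorter, but it only describes $T_1$ on linear functions.

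\textbf{Minor point.} The algebra only needs $\bm\Gamma\succeq 0$, not $A,B>0$.
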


\cref{thm:lin-eigenfn} implies that the other eigenfunctions of $T_1$ must be orthogonal to the linear function. Furthermore, while the top eigenfunction of $T_0$ is directly related to the constant harmonic $Y_0$, the terms in $S$ strongly interact with the degree-2 zonal harmonic $Y_2$. This results in a superposition of $Y_0$ and $Y_2$ within the new top eigenspace, as detailed in the following theorem (see the proof in \cref{app:thm-approx-top-eigenfn}). 

\begin{theorem}[Feature learning in the top eigenspace]\label{thm:approx-top-eigenfn}
    Let $\Psi$ be the top eigenfunction of the integral operator $T_1$ with associated eigenvalue $\lambda_{\max}(T_1)$. Also, define $\bm \omega = \frac{\bm x}{\|\bm x\|}$ and consider the functions given by $Y_0(\bm \omega) = 1$ and $\hat{Y}_2(\bm \omega, \bm w^*) = \langle \bm \omega, \bm w^*\rangle^2 - \frac{1}{d}$ such that $T_0 \Big[\|\bm x\| Y_0(\bm \omega)\Big] = \lambda_{\max}(T_0)\Big[\|\bm x\| Y_0(\bm \omega)\Big]$ and $T_0 \Big[\|\bm x\| \hat{Y}_2(\bm \omega, \bm w^*)\Big] = \lambda_{2}(T_0)\Big[\|\bm x\| \hat{Y}_2(\bm \omega, \bm w^*)\Big]$. We define the approximate eigenvalue and eigenfunction as
    \begin{equation}\label{eq:approx-top-eigenfn}
        \tilde{\lambda} = A\lambda_{\max}(T_0) + \frac{B}{2\pi d} + o_d\left(\frac{|B|}{d}\right) \,, \quad     
        \tilde{\Psi}(\bm x) = \frac{1}{\sqrt{N}}\frac{\|\bm x\|}{\sqrt d}\Big[ Y_0(\bm \omega)+ \tau Y_2(\bm \omega, \bm w^*) \Big] \, ,
    \end{equation}
    where $N$ is a normalization constant, $\tau := \tau(B) = \frac{1}{d}\sqrt{\frac{2d - 2}{d+2}}\left[\frac{B}{4 \pi (\tilde{\lambda} -A\lambda_2(T_0))}\right]$ and $Y_2 = \frac{\hat{Y}_2}{\| \hat{Y}_2 \|}$ is the normalized quadratic zonal harmonic.
    Then, under \cref{ass:main-assumptions}, we have 
    \[
        T_1\tilde{\Psi} = \tilde{\lambda}\tilde{\Psi} + e
    \] where $\|e\|_{L^2(\rho_{\mathcal X})} = o_d\left(\frac{|B|}{d}\right)$.
    Also, $\|\Psi - \tilde{\Psi}\|_{L^2(\rho_{\mathcal X})}  = o_d\left(\frac{|B|}{d}\right)$ and $|\lambda_{\max}(T_1) - \tilde{\lambda}| = o_d\left(\frac{|B|}{d}\right)$.
\end{theorem}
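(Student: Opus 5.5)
The plan is a degenerate perturbation argument in the two-dimensional subspace spanned by the radial constant mode and the target-aligned quadratic zonal mode. I would start from the expansion \cref{eq:first-order-k1-exp}, specialized to ReLU:
\[
T_1 = A T_0 + \frac{B}{2d} T_S + T_R ,
\]
with $\|T_R\|_{\op} = \mathcal O(B^2/d^2) = o_d(|B|/d)$ by \cref{lemma:T_R-op-bound}. By \cref{lemma:approx-first-order-op}, I replace $T_S$ by $T_S^{(1*)} + T_S^{(2*)}$ at the cost of an $o_d(1)$ error. After multiplication by $B/(2d)$, this error is $o_d(|B|/d)$. It therefore suffices to analyze the model operator
\[
\widetilde T = A T_0 + \frac{B}{2d}\left(T_S^{(1*)} + T_S^{(2*)}\right).
\]

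\textbf{Step 1: action of the model operator on the two modes.} Set $\phi_0 = \frac{\|\bm x\|}{\sqrt d} Y_0$ and $\phi_2 = \frac{\|\bm x\|}{\sqrt d} Y_2(\bm\omega, \bm w^*)$; both are eigenfunctions of $T_0$ by \cref{lemma:iso-eigenfn}. The operator $T_S^{(1*)}$ annihilates both, since they are even and $\langle \cdot, \bm w^*\rangle$ is odd. For $T_S^{(2*)}$, write
\[
\frac{\langle \bm x, \bm w^*\rangle^2}{\|\bm x\|} = \|\bm x\|\Big(\hat Y_2 + \tfrac1d\Big).
\]
Then $T_S^{(2*)}$ maps $\mathrm{span}\{\phi_0, \phi_2\}$ into itself. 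Its entries are built from the inner products $\langle \|\cdot\|, \phi_0\rangle$ and $\langle \|\cdot\|\hat Y_2, \phi_i\rangle$, which I compute using independence of the radius and the direction together with $\mathbb E\|\bm x\|^2 = d$. I also need
\[
\|\hat Y_2\|^2_{L^2(\mathbb S^{d-1})} = \frac{2(d-1)}{d^2(d+2)},
\]
which is where the factor $\frac1d\sqrt{\frac{2d-2}{d+2}}$ in $\tau$ comes from. The $(0,0)$ entry should come out as $\frac{1}{\pi}\cdot$(mass) and yield the shift $\frac{B}{2\pi d}$. The off-diagonal entry should be $\frac{B}{4\pi d}\cdot\frac1d\sqrt{\frac{2d-2}{d+2}}$ up to $o_d$ corrections. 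The $(2,2)$ entry is lower order.

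\textbf{Step 2: solve the $2\times 2$ eigenproblem to first order.} The matrix is
\[
\begin{pmatrix} A\lambda_{\max}(T_0) + \frac{B}{2\pi d} & c \\ c & A\lambda_2(T_0) + \cdots \end{pmatrix}.
\]
Its top eigenvector is $(1, \tau)$ with $\tau = c/(\tilde\lambda - A\lambda_2(T_0))$, which matches the statement. The eigenvalue correction from the off-diagonal entry is $c\tau = o_d(|B|/d)$. Substituting back, the $\phi_0$-component of $T_1\tilde\Psi - \tilde\lambda\tilde\Psi$ vanishes up to $o_d$. The $\phi_2$-component is zero by the choice of $\tau$. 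Leakage of $T_S^{(2*)}\phi_2$ outside the span has size $\frac{B}{d}\tau = o_d(|B|/d)$. Combining these with the errors of $T_R$ and $E$ gives $\|e\| = o_d(|B|/d)$.

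\textbf{Step 3: from residual to eigenpair.} Since $T_1$ is self-adjoint, a normalized $\tilde\Psi$ with residual $e$ gives an eigenvalue of $T_1$ within $\|e\|$ of $\tilde\lambda$. To identify this eigenvalue as $\lambda_{\max}(T_1)$ and to get eigenvector closeness, I would use Davis–Kahan, which requires a spectral gap between $\tilde\lambda$ and the rest of the spectrum of $T_1$. The gap $A(\lambda_{\max}(T_0) - \lambda_2(T_0))$ of the unperturbed operator is $\Theta(1)$ in the normalization of \cref{lemma:iso-eigenfn}. Weyl's inequality then shows it survives the $\mathcal O(|B|/d)$ perturbation, with \cref{thm:eigenv-decay} controlling the tail. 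The linear eigenspace is handled separately by \cref{thm:lin-eigenfn}: its eigenvalue $A\lambda_1(T_0) + \frac{B}{4d}$ remains strictly below $\tilde\lambda$ provided $\lambda_{\max}(T_0) > \lambda_1(T_0)$, which I would verify from the ReLU harmonic coefficients. Davis–Kahan then yields $\|\Psi - \tilde\Psi\| \le \|e\|/\mathrm{gap} = o_d(|B|/d)$ after fixing the sign.

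\textbf{Main obstacle.} I expect the hardest part to be the bookkeeping in Step 1. One must show that the $o_d(1)$ error $E$ from \cref{lemma:approx-first-order-op} is uniform over the two test functions, and compute the radial moments exactly enough that the off-diagonal constant matches $\tau$. The cleanest approach is to do everything in polar coordinates.
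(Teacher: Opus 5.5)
Your proposal is correct and follows essentially the paper's route. You use the expansion $T_1 = AT_0 + \frac{B}{2d}T_S + T_R$, Lemma~\ref{lemma:approx-first-order-op}, the exact $2\times 2$ eigenproblem on $\mathrm{span}\{\|\bm x\|Y_0, \|\bm x\|\hat{Y}_2\}$, and a spectral-gap argument, and the differences are cosmetic. You work in an orthonormal basis and kill $T_S^{(1*)}$ by parity, where the paper uses an unnormalized non-symmetric matrix and orthogonality to the linear eigenfunction. To isolate $\lambda_{\max}(T_1)$ you use Weyl plus Davis--Kahan with $\|T_S\|_{\text{op}} \le \|S\|_{L^2(\rho_{\mathcal X})\times L^2(\rho_{\mathcal X})} = \mathcal O(1)$, where the paper appeals to Theorem~\ref{thm:eigenv-decay}; your way is the cleaner one.

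Fix one slip in Step 1. In the orthonormal basis the off-diagonal entry is $\frac{B}{2d}\cdot\frac{d\|\hat{Y}_2\|}{2\pi} = \frac{B}{4\pi d}\sqrt{\frac{2d-2}{d+2}}$, without your extra factor $\frac{1}{d}$. Only this value reproduces the stated $\tau$ and makes the residual $o_d(|B|/d)$; with your value the $\phi_2$-component of the residual stays of order $|B|/d$. Also, there is no leakage to account for, since $T_S^{(2*)}$ has range exactly $\mathrm{span}\{\|\bm x\|, \langle \bm x, \bm w^*\rangle^2/\|\bm x\|\}$.
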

\begin{remark}
    {Theorems \ref{thm:lin-eigenfn} and \ref{thm:approx-top-eigenfn} show the boost to the linear eigenfunction is of the same order of the boost to the top eigenvalue and of the coefficient of $Y_2$ in the top eigenfunction. This conforms with the empirical findings demonstrated in \cite{moniri2023theory}, where the effects of the quadratic and the linear terms are introduced with similar strength on the spectrum of the feature matrix.}
\end{remark}

%% file: arxiv/experiments.tex
\subsection{Experiments}
\begin{figure}[htb]
    \centering
    \subfigure[Feature alignment]{\label{fig-f}
        \centering
        \includegraphics[width=0.35\textwidth, keepaspectratio]{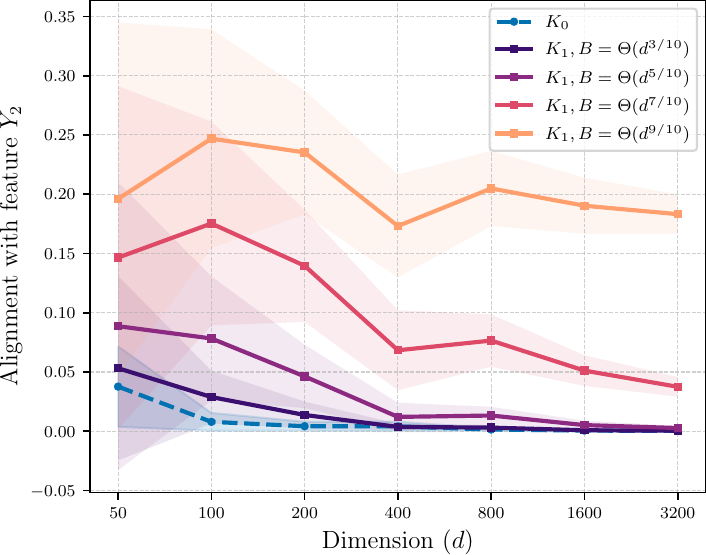}
    }
    \hspace{0.5cm}
    \subfigure[Test MSE]{\label{fig-t}
        \centering
        \includegraphics[width=0.35\textwidth,keepaspectratio]{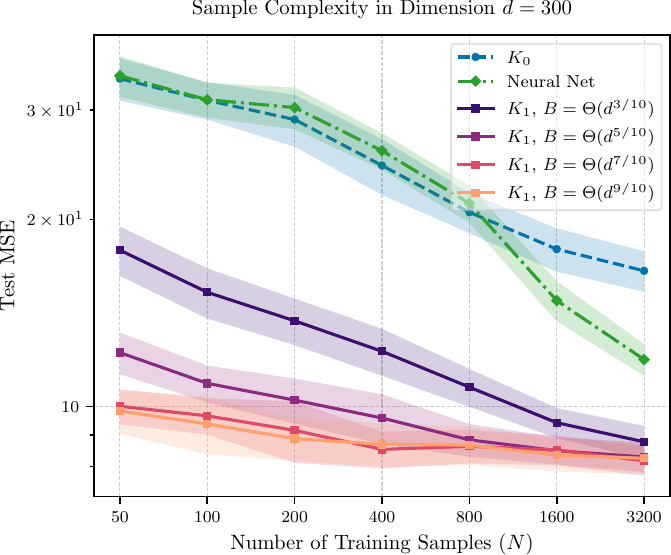}
    }
    \caption{(a) Alignment with the directional feature $Y_2$. (b) Generalization performance at $d=300$.}
    \label{fig:exp}
\end{figure}

Here we provide numerical results to validate and understand our formal results. Over $10$ trials, with variance shown as shaded areas around the curves, we sample $\bm{Z} \in \mathbb{R}^{N \times d}$ with $\bm{z}_i \sim \mathcal{N}(0, \bm{I}_d)$, and compare the ReLU kernel $k_0$, the ReLU kernel $k_1$ from \cref{eq:relu-k1} and a two-layer ReLU MLP (width 400). In \cref{fig-f}, we compute the alignment $\langle \bm{v}_i^{\text{top}}, Y_2(\bm{Z}) \rangle$, where $\bm{v}_i^{\text{top}}$ is the lead eigenvector of the kernel matrices for $i \in \{0, 1\}$. As expected, the alignment for $k_1$ grows with $B$, while $k_0$ always remains near-zero. In \cref{fig-t} we track the Mean Squared Error ($d = 300$) on a test set of $600$ samples when learning $g(t) = 2t^2 + 3t + 4\sin(2t)$, comparing Kernel Ridge Regression and the network across varying training sample sizes $(N)$. We observe the network converging toward the spiked kernel’s performance; noting that while the kernels have privileged access to $\bm w^*$, the network must recover it from the data. Refer to \cref{app:experimental_details} for full details of the experimental setup.

%% file: arxiv/conclusion.tex
\section{Conclusion}
In this work, we studied how deterministic updates of first-layer weights in a two-layer network shape the induced function space. With a data-dependent kernel we showed how the update can be expressed as a distribution shift of the original kernel. We expanded the shifted kernel to reveal mixing between eigenfunctions and selective changes to eigenvalues, favoring functions aligned with the target vector, and determined that this transformation is governed by the scaling of step size.  Our results suggest that working under a distribution shift could be a way to ``skip" the early phases of training of these networks. Future research includes analyzing the higher-order terms in our expansion, particularly in the regime $\eta = \Theta(d)$, where they appear to remain relevant.

%% file: appendixes/appendix-2nd-moment-matrix.tex
\allowdisplaybreaks

\subsection{Derivation of the second moment matrix}\label{app:deriv-2nd-moment}

In the random design, we have $y_i = f^*(\bm x_i) +\varepsilon_i$ and $\bm x_i \sim N(0, \bm I_d)$ where $\varepsilon_i$ are i.i.d. sub-Gaussian noise random variables with zero mean and variance $\sigma_\varepsilon^2$. This way $\bm \varsigma = \sum_{i=1}^n y_i\bm x_i$ is random, the random vector $\bm z = \bm w + \frac{\mu_1\eta}{n\sqrt m}a\bm \varsigma$ has mean $0$ and covariance 
\[
    \mathbb E[\bm z \bm z^\top] = \mathbb E[\bm w \bm w^\top] + \frac{\mu_1^2 \eta^2}{n^2m}\mathbb E[a^2 \bm \varsigma \bm \varsigma^\top]\, .
\]
Since $a \sim \mathcal N(0, \frac{1}{m})$ independently, we have 
\[
    \mathbb E[a^2 \bm \varsigma \bm \varsigma^\top] = \frac{1}{m}\mathbb E[\bm \varsigma \bm \varsigma^\top] = \frac{1}{m}\Big\{ n\mathbb E[y^2 \bm x \bm x^\top] + n(n-1)\mathbb E[y\bm x] \mathbb E[y \bm x^\top] \Big\}\, .
\]
Using that $y = f^*(\bm x) + \varepsilon$ we have
\[
    \mathbb E[y^2 \bm x \bm x^\top] = \mathbb E[f^*(\bm x)^2 \bm x \bm x^\top] + \sigma_{\varepsilon}^2 \bm I_d\, .
\]

Thus,
\[
    \mathbb E[a^2 \bm \varsigma \bm \varsigma^\top] = \frac{1}{m}\Big(n \Big \{\mathbb E[f^*(\bm x)^2 \bm x \bm x^\top]  + \sigma_{\varepsilon}^2\bm I_d\Big\}+ n(n-1)\mathbb E[y\bm x] \mathbb E[y \bm x^\top]\Big )\, .
\]
Hence, the covariance of $\bm z$ is given by the second moment matrix
\begin{align*}
    \mathbb E[\bm z \bm z^\top] &= \mathbb E_{(\bm w, \bm x, a, \varepsilon)}\left[\left(\bm w + \frac{\mu_1\eta}{n \sqrt{m}}a \bm \varsigma\right) \left(\bm w + \frac{\mu_1\eta}{n \sqrt{m}}a \bm \varsigma\right)^\top\right]\\
    &= \frac{1}{d} \bm I_d + \frac{\mu_1^2\eta^2}{n^2 m^2}\Big(n \Big \{\mathbb E[f^*(\bm x)^2 \bm x \bm x^\top]  + \sigma_{\varepsilon}^2\bm I_d\Big\}+ n(n-1)\mathbb E[y\bm x] \mathbb E[y \bm x^\top]\Big )\\
    &= \frac{1}{d} \bm I_d + \frac{\mu_1^2\eta^2}{n m^2}\Big(\Big \{\mathbb E[f^*(\bm x)^2 \bm x \bm x^\top]  + \sigma_{\varepsilon}^2\bm I_d\Big\}+ (n-1)\mathbb E[y\bm x] \mathbb E[y \bm x^\top]\Big )\, .
\end{align*}
Finally, using $n = \alpha d$ and $m = \beta d$, we can rewrite this as
\begin{align*}
    &= \frac{1}{d} \bm I_d + \frac{\mu_1^2\eta^2}{\alpha \beta^2 d^3}\Big(\Big \{\mathbb E[f^*(\bm x)^2 \bm x \bm x^\top]  + \sigma_{\varepsilon}^2\bm I_d\Big\}+ (\alpha d-1)\mathbb E[y\bm x] \mathbb E[y \bm x^\top]\Big )\\
    &= \frac{1}{d} \left(\bm I_d + \frac{\mu_1^2\eta^2}{\alpha \beta^2 d^2}\mathbb E[f^*(\bm x)^2 \bm x \bm x^\top]  + \frac{\mu_1^2\eta^2}{\alpha \beta^2 d^2}\sigma_{\varepsilon}^2\bm I_d + \frac{\mu_1^2\eta^2}{\alpha \beta^2 d^2}(\alpha d-1)\mathbb E[y\bm x] \mathbb E[y \bm x^\top] \right)\\
    &= \frac{1}{d} \left(\bm I_d + \frac{\mu_1^2\eta^2}{\alpha \beta^2 d^2}\mathbb E[f^*(\bm x)^2 \bm x \bm x^\top]  + \frac{\mu_1^2\eta^2}{\alpha \beta^2 d^2}\sigma_{\varepsilon}^2\bm I_d + \frac{\mu_1^2\eta^2}{\alpha \beta^2 d}\left(\alpha -\frac{1}{d}\right)\mathbb E[y\bm x] \mathbb E[y \bm x^\top] \right)\\
    &= \frac{1}{d} \left(\bm I_d + \frac{\eta^2\lambda_{\alpha, \beta}}{d^2}\mathbb E[f^*(\bm x)^2 \bm x\bm x^\top] + \frac{\eta^2\lambda_{\alpha, \beta}}{d^2}\sigma_\varepsilon^2 \bm I_d  + \frac{\eta^2\lambda_{\alpha, \beta}}{d}\Big(\alpha - \frac{1}{d}\Big)\mathbb E[f^*(\bm x)\bm x]\mathbb E[f^*(\bm x)\bm x^\top]\right)\, ,
\end{align*}
where $\lambda_{\alpha, \beta} := \frac{\mu_1^2}{\alpha \beta^2}$ and $\alpha, \beta$ are the same constants as defined in \cref{eq:joint_limit}. By using Stein's Lemma the last term simplifies to
\[
    \mathbb E[f^*(\bm x)\bm x]\mathbb E[f^*(\bm x)\bm x^\top] = \mathbb E[g'(\langle \bm w^*, \bm x\rangle)]^2\bm w^* (\bm w^*)^\top = \mu_1^2\bm w^* (\bm w^*)^\top\, .
\]

We can further rewrite the second term by letting $h(\boldsymbol{x}) = f^*(\boldsymbol{x})^2\boldsymbol{x} \in \mathbb{R}^d$ such that $J_h \in \mathbb{R}^{d \times d}$, and applying Stein's Lemma:
\begin{align*}
    \mathbb{E}[f^*(\boldsymbol{x})^2\boldsymbol{x}\boldsymbol{x}^\top] &= \mathbb{E}[h(\boldsymbol{x})\boldsymbol{x}^\top] \\
        &= \mathbb{E}[J_h(\boldsymbol{x})] = \mathbb{E}[J_{\{f^*(\boldsymbol{x})^2\boldsymbol{x}\}}] \\
        &= \mathbb{E}[\boldsymbol{x}(\nabla f^*(\boldsymbol{x})^2)^\top] + \mathbb{E}[f^*(\boldsymbol{x})^2]\boldsymbol{I}_d \\
        &= 2\mathbb{E}[g(\langle\boldsymbol{w}^*,\boldsymbol{x}\rangle)g'(\langle\boldsymbol{w}^*,\boldsymbol{x}\rangle)\boldsymbol{x}](\boldsymbol{w}^*)^\top + \mathbb{E}[f^*(\boldsymbol{x})^2]\boldsymbol{I}_d.
\end{align*}
In particular, if $g$ is smooth enough, we can apply the same idea to obtain 
\begin{align*}
    \mathbb E[f^*(\bm x)^2 \bm x\bm x^\top] &= 2\mathbb E[\nabla f^*(\bm x) \nabla f^*(\bm x)^\top] + 2\mathbb E[f^*(\bm x) \bm H_{f^*}(\bm x)] + \mathbb E[f^*(\bm x)^2]\bm I_d\\
    &= 2\{\mathbb E[g'(\langle \bm w^*,\bm x\rangle)^2]+ \mathbb E[g(\langle \bm w^*,\bm x\rangle)g''(\langle \bm w^*,\bm x\rangle)]\}\bm w^*(\bm w^*)^\top + \mathbb E[f^*(\bm x)^2]\bm I_d\\
    &\coloneqq 2 s \bm w^*(\bm w^*)^\top + \mathbb E[f^*(\bm x)^2]\bm I_d\,,
\end{align*}
where $\bm H_{f^*}$ is the Hessian matrix of $f^*$. Remarkably in this case we have an isotropic term dependent on $\mathbb E[f(\bm x)^2]$, but also an explicit projection onto the one dimensional space created by the target weights $\bm w^*$. Note that even when $g$ is not twice differentiable, the fact that $g$ is Lipschitz and bounded implies $g$ has weak derivatives in the sense of distribution; thus the quantity above is well-defined regardless of the existence of the classical derivatives of $g$.

As a conclusion, the second moment matrix can be written as
\begin{align*}
    \mathbb E[\bm z\bm z^\top] &= \frac{1}{d} \left(\bm I_d + \frac{\eta^2\lambda_{\alpha, \beta}}{d^2} \Big\{2 s \bm w^*(\bm w^*)^\top + \mathbb E[f^*(\bm x)^2]\bm I_d\Big\} + \frac{\eta^2\lambda_{\alpha, \beta}}{d^2}\sigma_\varepsilon^2 \bm I_d  + \frac{\eta^2\lambda_{\alpha, \beta}}{d}\Big(\alpha - \frac{1}{d}\Big) \mu_1^2\bm w^* (\bm w^*)^\top\right)\\
    &= \frac{1}{d} \left(\bm I_d + \frac{\eta^2\lambda_{\alpha, \beta}}{d^2} \mathbb E[f^*(\bm x)^2]\bm I_d + \frac{\eta^2\lambda_{\alpha, \beta}}{d^2}\sigma_\varepsilon^2 \bm I_d  + \frac{\eta^2\lambda_{\alpha, \beta}}{d}\Big(\alpha - \frac{1}{d} + \frac{2s}{\mu_1^2d}\Big) \mu_1^2\bm w^* (\bm w^*)^\top\right)\\
    &=\frac{1}{d}\left(\bm I_d + \frac{\eta^2\lambda_{\alpha, \beta}}{d^2}\mathbb E[f^*(\bm x)^2]\bm I_d  +\frac{\eta^2\lambda_{\alpha, \beta}}{d^2}\sigma_\varepsilon^2 \bm I_d + \frac{\eta^2}{d}\frac{\mu_1^4}{\alpha \beta^2}\Big(\alpha - \frac{1}{d} + \frac{2s}{\mu_1^2d}\Big) \bm w^* (\bm w^*)^\top\right)\, . 
\end{align*}

\subsection{Characterization of the coefficient $s$ in the new covariance matrix}\label{app:spike-coef}

Remember that, for $h(t) := [g(t)]^2$, we have
\[
    s = \mathbb E[g'(\langle \bm w^*,\bm x\rangle)^2]+ \mathbb E[g(\langle \bm w^*,\bm x\rangle)g''(\langle \bm w^*,\bm x\rangle)] = \frac{1}{2}\mathbb E[h''(\langle \bm w^*, \bm x\rangle)]\, .
\]
To characterize the behavior of $s$, considering that $\|\bm w^*\| = 1$ without loss of generality, we write $ Z:= \langle \bm w^*, \bm x\rangle \sim \mathcal N(0, 1)$ and note that for the standard Gaussian measure the following identity holds
\[
    \mathbb E_Z[h''(Z)] = \mathbb E_Z[(Z^2 - 1) h(Z)] = \mathbb E_Z[(Z^2 - 1) g(Z)^2]\, .
\]
We have
\[
    \mathbb E_Z[(Z^2 - 1) g(Z)^2] = \mathbb E_Z[Z^2g(Z)^2] - \mathbb E_Z[g(Z)^2] = \text{Cov}(Z^2, g(Z)^2)\, .
\]
Also note that $g$ is bounded, therefore
\[
    \mathbb E_Z[(Z^2 - 1) g(Z)^2] \leq M^2_g\mathbb E[|Z^2 - 1|] \leq C_g\, ,
\]
where $C_g$ depends exclusively on $M_g$ and not on the dimension $d$.

Since $g(Z)^2$ is always non-negative, the sign of $s$ depends entirely on the $(Z^2 - 1)$ weighting factor: if $Z \in (-1, 1)$, then $(Z^2 - 1)$ is negative; if $|Z| > 1$, then $(Z^2 - 1)$ is positive. Therefore, $s$ will be negative if  $g(Z)^2$ concentrates most of its mass inside the interval $(-1, 1)$ and decays to zero before the positive regions ($|Z| > 1$) can outweigh it. In other words, $s < 0$ when the link function $g$ is highly localized or ``bump-like" near the origin.

To illustrate the behavior of $s$ across different link functions, we discuss some concrete examples.
\paragraph{Cases when $s < 0$:} To guarantee a negative $s$, we need functions that heavily prioritize the interval $(-1, 1)$ and ignore the tails.

{\bf Indicator function:} Let $g(t) = 1$ if $|t| < 1$, and $0$ elsewhere. This choice makes $g(t)^2$ be exactly $1$ entirely inside the region where $(Z^2 - 1)$ is negative, and it evaluates to $0$ everywhere $(Z^2 - 1)$ is positive. We have $s = \frac{1}{2}\mathbb{E}[(Z^2 - 1)g(Z)^2] = \frac{1}{2}\int_{-1}^{1} (t^2 - 1) \frac{1}{\sqrt{2\pi}} e^{-t^2/2} \mathrm d t$. Because the integrand is strictly negative everywhere in this domain, $s$ must be negative.

{\bf Gaussian bump:} Let $g(t) = \exp(-t^2/2)$. This is a very a localized function. It smoothly peaks at the origin and decays rapidly, heavily weighting the "negative zone" and suppressing the "positive zone. For this case we have $g(t)^2 = \exp(-t^2)$ and evaluating the quantity gives $s = \frac{1}{2}\mathbb{E}[(Z^2 - 1)\exp(-Z^2)]$. This is equivalent to taking the integral of $(t^2 - 1)$ against a tighter Gaussian density $\mathcal{N}(0, 1/3)$. Since the variance is $\frac{1}{3}$, the integral evaluates to something proportional to $(\frac{1}{3} - 1) = -\frac{2}{3}$, yielding a strictly negative $s$.

\paragraph{Cases when $s > 0$:}
To guarantee a positive $s$, we need monotonic functions, or functions that deliberately target the extreme tails of the distribution.

{\bf Identity function:} Let $g(t) = t$. This is the simplest possible case. We have $h(t) = t^2$ and taking the second derivative $h''(t) = 2$. Using the definition $s = \frac{1}{2}\mathbb{E}[h''(Z)]$, we get $s = \frac{1}{2} \mathbb{E}[2] = 1 > 0$.

{\bf Indicator function of the complement:} Let $g(t) = 1$ if $|t| > 1$, and $0$ elsewhere. This function completely zeroes out the region where $(Z^2 - 1)$ is negative. Computing $s = \frac{1}{2}\mathbb{E}[(Z^2 - 1)g(Z)^2] = \int_{1}^{\infty} (t^2 - 1) \frac{1}{\sqrt{2\pi}} e^{-t^2/2} \mathrm d t$. Because $(t^2 - 1)$ is strictly positive for all $t > 1$, the integral is definitively positive.

{\bf Quadratic function:} Let $g(t) = t^2$. Polynomials naturally place massive weight on the tails where numbers grow large, easily overpowering the center. If we compute the quantities necessary $h(t) = t^4$, so $h''(t) = 12 t^2$. Evaluating $s = \frac{1}{2}\mathbb{E}[12Z^2] = 6\mathbb{E}[Z^2] = 6 > 0$.


{\bf ReLU function:} Here we consider the case that $g$ is the ReLU activation, i.e. $g(t) = \max(0,t)$.
Let
\[
    Z = \langle \bm w^*, \bm x\rangle \sim \mathcal N(0, \|\bm w^*\|^2)
\]
then 
\[
    \mathbb E[g(\langle \bm w^*, \bm x\rangle) g''(\langle \bm w^*, \bm x\rangle)] = \mathbb E[g(Z) g''(Z)]
\]
and the second derivative must be interpreted distributionally. Since $g$ is continuous and $g(0) = 0$, the multiplication of the Dirac distribution by $g$ is well-defined and
\[
    gg'' = g \delta_0 = g(0) \delta_0 = 0.
\]

Therefore,
\[
    \mathbb E[g(Z) g''(Z)] = 0.
\]
Moreover $g'(t) = \bm 1_{\{t \geq 0\}}$, so by the symmetry of the centered Gaussian 
\[
    \mathbb E[g'(Z)^2] = \mathbb P(Z \geq 0) = \frac{1}{2}\, .
\]
In the end, combining both results into the definition of $s$ we get
\[
    s = \mathbb E[g'(Z)^2] + \mathbb E[g(Z)g''(Z)] = \frac{1}{2}\,.
\]

%% file: appendixes/appendix-lemmas.tex
\allowdisplaybreaks
\subsection{Characteristic-function comparison for activation products}\label{lemma:activ-prod}
\begin{lemma}
    Let $\sigma: \mathbb{R} \to \mathbb{R}$ be an $L$-Lipschitz function, and define $F(u, v) = \sigma(u)\sigma(v)$. Let $\mu$ and $\nu$ be probability measures on $\mathbb{R}^2$ with characteristic functions $\phi_\mu$ and $\phi_\nu$ such that $|\phi_\mu(\xi) - \phi_\nu(\xi)| \le \delta |\xi|^2 e^{-\xi^\top \Sigma\xi}$ for a positive semidefinite matrix $\Sigma$ such that $\Trarg{\Sigma} \leq C$. Furthermore, assume that $\mu$ and $\nu$ satisfy $P(|x| > R) \le C_0 e^{-\alpha R}$ for some $\alpha > 0$. Then
    \[
        \left| \int F(x) d\mu(x) - \int F(x) d\nu(x) \right| = \mathcal{O}\left(\delta \log^3(1/\delta)\right)
    \]
\end{lemma}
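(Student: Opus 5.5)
Since $F(u,v)=\sigma(u)\sigma(v)$ is neither integrable nor smooth, we cannot compare the two integrals directly through the characteristic functions. The plan is to truncate at a radius $R$, mollify, and pass to Fourier space. We take $R\asymp \alpha^{-1}\log(1/\delta)$ and a mollification scale $\epsilon\asymp\delta$, and then balance three error sources: the tail, the smoothing, and the Fourier comparison. The $\log^3(1/\delta)$ factor should come from polynomial powers of $R$ appearing in the Fourier comparison.

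\textbf{Step 1: truncation.} Let $\chi_R$ be a smooth cutoff equal to $1$ on $\{|x|\le R\}$ and supported in $\{|x|\le 2R\}$, with $\|\nabla^k\chi_R\|_\infty\lesssim R^{-k}$, and set $F_R=F\chi_R$. Lipschitzness gives $|F(x)|\le (|\sigma(0)|+L|x|)^2$. The exponential tail assumption then yields
\[
\int |F-F_R|\,d\mu\lesssim \int_{|x|>R}(1+|x|^2)\,d\mu\lesssim R^2e^{-\alpha R},
\]
and the same bound holds for $\nu$. Choosing $R=\frac{2}{\alpha}\log(1/\delta)$ makes this $\mathcal O(\delta)$.

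\textbf{Step 2: smoothing.} Let $\rho_\epsilon$ be a Gaussian mollifier and set $F_{R,\epsilon}=F_R*\rho_\epsilon$. Since $F_R$ is Lipschitz with constant $\lesssim L(|\sigma(0)|+LR)$, we get $\|F_R-F_{R,\epsilon}\|_\infty\lesssim R\epsilon$. The cost of smoothing under either measure is therefore $\mathcal O(R\epsilon)$, which is $\mathcal O(\delta\log(1/\delta))$ for $\epsilon=\delta$.

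\textbf{Step 3: Fourier comparison.} By Parseval,
\[
\int F_{R,\epsilon}\,d(\mu-\nu)=(2\pi)^{-2}\int \widehat{F_R}(\xi)\,\hat\rho_\epsilon(\xi)\,\overline{(\phi_\mu-\phi_\nu)(\xi)}\,d\xi,
\]
which is bounded by $\delta\int|\widehat{F_R}(\xi)|\,|\xi|^2e^{-\xi^\top\Sigma\xi}e^{-\epsilon^2|\xi|^2/2}\,d\xi$. We control $\widehat{F_R}$ in two ways. First, $|\widehat{F_R}|\le\|F_R\|_{L^1}\lesssim R^4$. Second, integrating by parts once (note $\nabla F_R\in L^1$, with $\|\nabla F_R\|_{L^1}\lesssim R^3$) gives $|\xi|\,|\widehat{F_R}(\xi)|\lesssim R^3$. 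Using the second bound,
\[
\int|\widehat{F_R}|\,|\xi|^2 e^{-\epsilon^2|\xi|^2/2}\,d\xi\lesssim R^3\int|\xi|\,e^{-\epsilon^2|\xi|^2/2}\,d\xi\sim R^3\epsilon^{-3}.
\]
This is far too large if we rely on the mollifier alone, so the decay must come from the factor $e^{-\xi^\top\Sigma\xi}$.

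\textbf{Main obstacle.} The difficulty is exactly this: $\Sigma$ is only PSD, with the trace bound $\operatorname{Tr}(\Sigma)\le C$ constraining it from above rather than below. My plan is to use the product structure $F=\sigma\otimes\sigma$. I would write $\sigma_R=\sigma\chi^{(1)}_R$ with one-dimensional cutoffs, so that $\widehat{F_R}$ factors and each factor satisfies $|\hat\sigma_R(t)|\lesssim\min(R^2,R/|t|)$. Each one-dimensional factor then contributes an $\int\min(R^2,R/|t|)\,dt$-type quantity, which is $\mathcal O(R\log(R/\epsilon))$ once cut off at the mollifier scale $|t|\lesssim 1/\epsilon$.

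The factor $|\xi|^2$ is the remaining problem. I would absorb it through the identity $|\xi|^2\widehat{G}=-\widehat{\Delta G}$ and use the distributional second derivative of $\sigma$, a finite measure of mass $\lesssim L$ on the support. In the worst case this leaves something like $\mathcal O(\log^2(1/\delta))$ from the logarithms together with a residual power of $R$. With $R$ and $\log(1/\epsilon)$ both of order $\log(1/\delta)$, I would hope the total reduces to $\delta\log^3(1/\delta)$.

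If the cleaner route fails, the fallback is to use the nondegenerate part of $\Sigma$ that is available in the application (the rescaled Gaussian covariance) to dominate $|\xi|^2$. Collecting Steps 1–3 then gives the claimed $\mathcal O(\delta\log^3(1/\delta))$.
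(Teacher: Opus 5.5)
Steps 1 and 2 are sound, but Step 3 is never completed. The obstruction is more serious than a missing estimate: under the stated hypotheses ($\Sigma$ only positive semidefinite with $\operatorname{Tr}\Sigma\le C$) the lemma is false, so no route can close it.

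\emph{Counterexample.} Take $\Sigma=0$, $\sigma(u)=|u|+1$, $\mu=\delta_{(0,0)}$ and $\nu=\tfrac12(\delta_{(a,0)}+\delta_{(-a,0)})$ with $0<a\le 1$. Both measures have bounded support, so the tail condition holds uniformly in $a$. Moreover $|\phi_\mu(\xi)-\phi_\nu(\xi)|=1-\cos(a\xi_1)\le\tfrac{a^2}{2}|\xi|^2$, so the hypothesis holds with $\delta=a^2/2$. Yet $\int F\,d\nu-\int F\,d\mu=a=\sqrt{2\delta}$. Tensoring both measures with an independent $\mathcal N(0,1)$ in the second coordinate gives the same conclusion for the nonzero singular matrix $\Sigma=\operatorname{diag}(0,\tfrac12)$.

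\emph{Why your repair fails on its own terms.} First, for a general Lipschitz $\sigma$, $\sigma'$ is only $L^\infty$, not of bounded variation, so $\sigma''$ need not be a finite measure (e.g.\ $\sigma'(u)=\sin(1/u)$). Second, even for ReLU, trading $|\xi|^2\widehat{F_R}$ for $-\widehat{\Delta F_R}$ produces the term $\widehat{\sigma_R''}(\xi_1)\,\widehat{\sigma_R}(\xi_2)$, which does not decay in $\xi_1$. Along a null direction of $\Sigma$ only the mollifier cuts the integral off, at a cost of a factor $\epsilon^{-1}$. Balancing $\delta\epsilon^{-1}\,\mathrm{poly}(R)$ against the smoothing error $R\epsilon$ gives at best $\sqrt{\delta}\,\mathrm{poly}\log(1/\delta)$, which is exactly the order of the counterexample. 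Your sentence ``I would hope the total reduces to $\delta\log^3(1/\delta)$'' is therefore where the argument breaks.

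\emph{What is actually needed.} The missing ingredient is a lower bound $\Sigma\succeq\lambda\bm I_2$ with $\lambda>0$; the trace bound controls the wrong side. With that hypothesis your Step 3 closes immediately and Step 2 becomes unnecessary. From $|\widehat{F_R}(\xi)|\lesssim R^3/|\xi|$, the Fourier term is at most a constant times $\delta R^3\int_{\mathbb R^2}|\xi|e^{-\lambda|\xi|^2}\,d\xi\asymp\delta R^3\lambda^{-3/2}$. Choosing $R\asymp\alpha^{-1}\log(1/\delta)$ then yields $\mathcal O(\delta\log^3(1/\delta))$, with a constant depending on $\lambda$.

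\emph{Relation to the paper.} That corrected argument is exactly the paper's, and the paper's proof contains precisely the defect you diagnosed. It asserts that $\int|\xi|e^{-\frac12\xi^\top\Sigma\xi}\,d\xi$ is bounded by an absolute constant because $\operatorname{Tr}\Sigma\le C$. In fact this integral diverges for singular $\Sigma$ and blows up as $\lambda_{\min}(\Sigma)\to0$. Your fallback of borrowing nondegeneracy from the application amounts to changing the statement. Even there it is not uniform, since $\Sigma=\tfrac1d[\bm x,\bm x']^\top[\bm x,\bm x']$ is singular whenever $\bm x\parallel\bm x'$.
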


\begin{proof}
    First, we note that because $\sigma$ is $L$-Lipschitz, its growth is at most linear: $|\sigma(x)| \le |\sigma(0)| + L|x|$. Consequently, the product $F(u,v)$ has at most quadratic growth
    \[
        |F(u,v)| \le (|\sigma(0)| + L|u|)(|\sigma(0)| + L|v|) \le C_1(1 + |x|^2)
    \]
    Let $\eta: \mathbb{R}^2 \to \mathbb{R}$ be a smooth ($C^\infty$) function such that $0 \le \eta(x) \le 1$ for all $x$, $\eta(x) = 1$ for all $|x| \le 1$ and $\eta(x) = 0$ for all $|x| \ge 2$.
    
    Because $\eta(x)$ is constant (either $1$ or $0$) outside the region where $1 < |x| < 2$, its gradient $\nabla \eta(x)$ is exactly zero everywhere except inside that closed, compact set. As a continuous function inside a compact set, and given this property of the gradient, there exists some absolute, finite constant $C$ such that
    \[
        |\nabla \eta(x)| \le C \quad \text{for all } x \in \mathbb{R}^2
    \]
    Now, we define the specific cutoff function $\chi_R(x)$ by
    \[
        \chi_R(x) = \eta\left(\frac{x}{R}\right)\, .
    \]
    That makes $\chi_R$ smooth and $\chi_R(x) = 1$ if $|x|\leq R$, $\chi_R(x) = 0$ if $|x| > 2R$, otherwise $1 \leq \chi_R(x) \leq 0$ when $R < |x| < 2R$.
    
    By the Chain Rule, we have
    \[
        \nabla \chi_R(x) = \nabla \left[ \eta\left(\frac{x}{R}\right) \right] = \frac{1}{R} (\nabla \eta)\left(\frac{x}{R}\right)
    \]
    and therefore 
    \[
        |\nabla \chi_R(x)| = \left| \frac{1}{R} (\nabla \eta)\left(\frac{x}{R}\right) \right| = \frac{1}{R} \left| (\nabla \eta)\left(\frac{x}{R}\right) \right| \leq \frac{C}{R}.
    \]
    
    Using that $\chi_R(x)$ supported on $B_{2R}$ with $|\nabla \chi_R| \le \frac{C}{R}$, we define $F_R(x) = F(x)\chi_R(x)$. We split the integral into the truncated core and the tail
    \[
        \int F(x) d\mu(x) = \int F(x) \chi_R(x) d\mu(x) + \int_{|x| > R} F(x)(1 - \chi_R(x)) d\mu(x)
    \]
    Using the quadratic growth bound $|F(x)| \le C_1(1 + |x|^2)$ and the sub-exponential tail of $\mu$, the tail error evaluates to
    \[
        \int_{|x| > R} F(x)(1 - \chi_R(x)) d\mu(x)\le \int_{|x|>R} C_1(1 + |x|^2) d\mu(x) \le C_2 R^2 e^{-\alpha R}
    \]
    The total error is bounded by the core difference plus the tail errors
    \[
        \left| \int F(x) d\mu(x) - \int F(x) d\nu(x) \right| \le \left| \int F(x)\chi_R(x) d\mu(x) - \int F(x)\chi_R(x) d\nu(x) \right| + 2C_2 R^2 e^{-\alpha R}\, .
    \]
    
    On the core term, if we let $F_R(x) \coloneqq F (x) \chi_R(x)$, we use Parseval's identity to obtain
    \[
        \int F(x)\chi_R(x) d\mu(x) - \int F(x)\chi_R(x) d\nu(x)  = \frac{1}{(2\pi)^2} \int_{\mathbb{R}^2} \widehat{F_R}(\xi) (\phi_\mu(-\xi) - \phi_\nu(-\xi)) d\xi\, .
    \]
    Next, we look at the distributional gradient $\nabla F_R$ which can be written as
    \[
        \nabla F_R = \chi_R \nabla F + F \nabla \chi_R\, ,
    \]
    and we will bound the $L^\infty$ norm of both terms on the support ball $B_{2R}$. 
    
    The gradient is given by $\nabla F = (\sigma'(u)\sigma(v), \sigma(u)\sigma'(v))$. Since $\sigma$ is $L$-Lipschitz, $|\sigma'| \le L$ almost everywhere. Also, on the ball of radius $2R$, the activation function is bounded by $|\sigma(u)| \leq C_{\sigma}R$ for some absolute constant $C_{\sigma} > 0$. Therefore, $|\nabla F| = \mathcal{O}(R)$. Furthermore, $|F|$ grows quadratically, and since $|\nabla \chi_R| \leq \frac{C}{R}$, their product is $\mathcal{O}(R)$. Thus, the maximum value of the gradient is bounded by $\|\nabla F_R\|_{L^\infty} \le C_3 R$.
    
    If the symbol $\mathcal{F}$ denotes the Fourier transform operator, by the properties of the Fourier transform, we have that
    \[
        \mathcal{F}\{\nabla F_R\} = i\xi \widehat{F_R}(\xi)\,.
    \]
    Next, the maximum absolute value of any Fourier transform is always bounded by 
    \[
        |\mathcal{F}\{\nabla F_R\}| \leq \|\nabla F_R\|_{L^1}
    \]
    and the $L^1$ norm is bounded by the $L^\infty$ norm times the area of the support ball, which is $\pi(2R)^2$, thus
    \[
        \|\nabla F_R\|_{L^1} \leq \|\nabla F_R\|_{L^\infty}\pi (2R)^2 \leq (C_3 R) \pi (2R)^2 = C_4 R^3\, .
    \]
    Combining these facts together we get
    \[
        |\mathcal{F}\{\nabla F_R\}| =|\xi| |\widehat{F_R}(\xi)| \le \|\nabla F_R\|_{L^1} \le C_4 R^3\,
    \]
    and finally arrive at
    \[
        |\widehat{F_R}(\xi)| \le \frac{C_4 R^3}{|\xi|}\, .
    \]

    Substituting this new bound and the anisotropic characteristic function estimate into the Parseval integral gives
    \[
        \left| \int F(x)\chi_R(x)d\mu(x) - \int F(x)\chi_R(x)d\nu(x) \right| \le \frac1{(2\pi)^2} \int_{\mathbb R^2} \left( \frac{C_4R^3}{|\xi|} \right) \left( \delta |\xi|^2 e^{-\frac12\xi^\top\Sigma\xi} \right)d\xi .
    \]
    
    Canceling one power of $|\xi|$ yields
    \[
        \left| \int F(x)\chi_R(x)d\mu(x) - \int F(x)\chi_R(x)d\nu(x) \right| \le C_5\delta R^3 \int_{\mathbb R^2} |\xi| e^{-\frac12\xi^\top\Sigma\xi}d\xi .
    \]
    
    Now diagonalize the covariance matrix:
    \[
        \Sigma = Q^\top \begin{pmatrix} \lambda_1 &0\\ 0&\lambda_2 \end{pmatrix} Q,
    \]
    with orthogonal $Q$, and perform the rotation $v=Q\xi$. Since orthogonal transformations preserve Lebesgue measure and Euclidean norm,
    \[
        \int_{\mathbb R^2}|\xi| e^{-\frac12\xi^\top\Sigma\xi} d\xi = \int_{\mathbb R^2} |v| e^{-\frac12(\lambda_1v_1^2+\lambda_2v_2^2)} dv .
    \]
    
    Because $\Sigma$ is positive semidefinite and
    \[
        \operatorname{Tr}(\Sigma) \le C,
    \]
    the Gaussian factor provides exponential decay in every nondegenerate direction. Even when one eigenvalue degenerates, the integral remains effectively one-dimensional in the degenerate direction and therefore finite. Consequently,
    \[
        \int_{\mathbb R^2} |v| e^{-\frac12(\lambda_1v_1^2+\lambda_2v_2^2)} dv \le C' ,
    \]
    for some absolute constant $C' > 0$
    
    Therefore,
    \[
        \left| \int F(x)\chi_R(x)d\mu(x) - \int F(x)\chi_R(x)d\nu(x) \right| \le C_6\delta R^3 .
    \]

    Thus, the total bound as a function of the truncation radius $R$ is
    \[
        \left| \int F(x) d\mu(x) - \int F(x) d\nu(x)\right| \le C_6 \delta R^3 + 2C_2 R^2 e^{-\alpha R}\, .
    \]
    We balance the terms by setting the tail decay equal to the $\delta$ parameter
    \[
        e^{-\alpha R} = \delta \implies R = \frac{1}{\alpha} \log(1/\delta)\, .
    \]
    
    Plugging this $R$ back into the total bound we obtain the result
    \[
        \left| \int F d\mu - \int F d\nu \right| \le C_6 \delta \left( \frac{1}{\alpha} \log(1/\delta) \right)^3 + 2C_2 \delta \left( \frac{1}{\alpha} \log(1/\delta) \right)^2 = \mathcal{O}\left( \delta \log^3(1/\delta) \right)\, .
    \]
\end{proof}

\subsection{Closed form of the infinite sums}
\begin{lemma}\label{lemma:ith-inf-series}
    Consider a point $y \in (-\infty, 1/2)$, then for a fixed $i \geq 0$, we have the following identity
    \[
        \sum_{k=i}^\infty \binom{2k}{2i}\frac{(2k-2i-1)!!}{k!} y^{k} = \frac{y^i}{i!}(1-2y)^{-(i+1/2)}.
    \]
\end{lemma}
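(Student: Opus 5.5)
The plan is to prove the identity by generating functions: write the left-hand side as the $i$-th coefficient of a bivariate series and resum. The identity has the flavour of a Hermite/Gaussian-moment computation, since $(2j-1)!!=\mathbb E[Z^{2j}]$ for $Z\sim\mathcal N(0,1)$. That suggests a probabilistic proof.

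Set $j=k-i\ge 0$ and use $\binom{2k}{2i}=\frac{(2k)!}{(2i)!(2j)!}$ together with $(2j-1)!!=\frac{(2j)!}{2^j j!}$. Each summand then becomes
\[
\binom{2k}{2i}\frac{(2j-1)!!}{k!}y^k=\frac{(2k)!}{(2i)!\,2^j j!\,k!}\,y^{i+j}.
\]
Next use $\frac{(2k)!}{k!}=2^k (2k-1)!!\,$, which turns the summand into
\[
\frac{2^{i}(2k-1)!!}{(2i)!\,j!}\,y^{i+j}, \qquad k=i+j .
\]
Since $(2i)!=2^i i!(2i-1)!!$, this simplifies to
\[
\frac{y^i}{i!}\cdot\frac{(2i+2j-1)!!}{(2i-1)!!}\cdot\frac{y^j}{j!}.
\]
The ratio of double factorials is $\prod_{l=0}^{j-1}(2i+1+2l)=2^j\,(i+\tfrac12)_j$, where $(\cdot)_j$ is the Pochhammer rising factorial. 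The sum over $j$ is therefore
\[
\sum_{j\ge0}\frac{(i+\tfrac12)_j}{j!}(2y)^j=(1-2y)^{-(i+1/2)},
\]
by the generalized binomial theorem. That gives exactly $\frac{y^i}{i!}(1-2y)^{-(i+1/2)}$.

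Two points need care. The first is the convention $(-1)!!=1$, which covers both $j=0$ and $i=0$. The second is the range of $y$. The binomial series converges only for $|2y|<1$, i.e.\ $y\in(-\tfrac12,\tfrac12)$. For $y\le -\tfrac12$ the series diverges, since its terms grow like $|2y|^j j^{i-1/2}$. The stated domain $(-\infty,\tfrac12)$ therefore has to be read as the analytic continuation of the right-hand side. The alternative is a Gaussian-integral interpretation, $\mathbb E[e^{yZ^2}]$-type moment identities, which are valid for all $y<\tfrac12$ whenever the sum arises from exchanging an expectation with a series.

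This domain issue is the step I expect to be the main obstacle. I would first prove the identity on $|y|<\tfrac12$ as a formal and absolutely convergent power series. I would then note that in the later applications it is used inside Gaussian expectations, via $\mathbb E[Z^{2i}e^{yZ^2}]\propto(1-2y)^{-(i+1/2)}$. There the right-hand side is the meaningful quantity on all of $(-\infty,\tfrac12)$, and the left-hand side should be interpreted as its Taylor series. The coefficient manipulation itself is routine.
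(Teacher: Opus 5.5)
Your proposal is correct and follows essentially the same route as the paper: the paper also rewrites the coefficient as $\frac{(2k-1)!!}{i!\,(k-i)!\,(2i-1)!!}$ and matches it term by term against the generalized binomial expansion of $(1-2y)^{-(i+1/2)}$. Your domain caveat is well founded and applies to the paper as well, since its proof invokes the binomial series only for $|2y|<1$ and so actually establishes the identity only on $(-\tfrac12,\tfrac12)$; this suffices for its use with $y=\gamma_2/(2(\gamma_1+\gamma_2))\in(0,\tfrac12)$ (one small correction to your divergence claim: at $y=-\tfrac12$ with $i=0$ the series converges conditionally, to $2^{-1/2}$).
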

\begin{proof}

    First, we check
    \[
        C_{i,k} := \binom{2k}{2i}\frac{(2k-2i-1)!!}{k!}.
    \]
    We note that $2k-2i-1$ is necessarily odd, thus we can write
    \[
        (2k-2i-1)!! = \frac{(2k-2i)!}{2^{k-i}(k-i)!},
    \]
    and if we expand the binomial coefficient we have
    \[
        \binom{2k}{2i} = \frac{2k!}{2i!(2k-2i)!}.
    \]
    Therefore
    \[
        C_{i,k} = \frac{(2k-2i)!}{2^{k-i}(k-i)!}\frac{2k!}{2i!(2k-2i)!}\frac{1}{k!} = \frac{1}{2^{k-i}(k-i)!}\frac{2k!}{2i!}\frac{1}{k!}.
    \]

    Now, we use the identity $2n! = 2^n(2n-1)!!$ to obtain
    \[
        C_{i,k} = \frac{2^kk!(2k-1)!!}{2^ii!(2i-1)!!}\frac{1}{2^{k-i}(k-i)!}\frac{1}{k!} = \frac{(2k-1)!!}{i!(k-i)!(2i-1)!!},
    \]
    so our goal will be to prove that
    \begin{equation}\label{eq:ith-inf-sum}
        \frac{y^i}{i!}(1-2y)^{-(i+1/2)} = \sum_{k=i}^\infty \frac{(2k-1)!!}{i!(k-i)!(2i-1)!!} y^{k}         
    \end{equation}

    Next, we study the function
    \[
        (1-2y)^{-(i+1/2)}.
    \]

    Consider the Maclaurin series of the function
    \[
        (1+x)^r = \sum \binom{r}{n} x^r,
    \]
    which is defined for all $|x|<1$ and real number $r.$
    
    If we let $x = -2y$ and $r = -(i+1/2)$ we have
    \begin{align*}
        \binom{-(i+1/2)}{n} &= \frac{-(i+1/2)-(i+3/2)\dots-(i+ n - 1/2)}{n!} \\
        &= (-1)^k\frac{[(2i+1)(2i+3)\dots (2i+2n-1)]}{2^n n!} \\
        &= (-1)^k\frac{(2i+2n-1)!!}{2^n n!(2i-1)!!}.
    \end{align*}
    And substituting everything back
    \begin{align*}
        (1-2y)^{-(i+1/2)} &= \sum_{n = 0}^\infty (-1)^n\frac{(2i+2n-1)!!}{2^n n!(2i-1)!!} (-2y)^n\\
        &= \sum_{n=0}^\infty (-1)^{2n}\frac{(2i+2n-1)!!}{2^n n!(2i-1)!!} 2^n y^n\\
        &=\sum_{n=0}^\infty \frac{(2i+2n-1)!!}{n!(2i-1)!!}y^n.
    \end{align*}

    Multiplying the expansion by $\frac{y^i}{i!}$ we get
    \[
       \frac{y^i}{i!}(1-2y)^{-(i+1/2)}  = \sum_{n=0}^\infty \frac{(2i+2n-1)!!}{i!n!(2i-1)!!}y^{n+i},
    \]
    and if we let $k = n+i$, rearranging the indexes leads to
    \[
       \frac{y^i}{i!}(1-2y)^{-(i+1/2)}  = \sum_{k=i}^\infty \frac{(2k-1)!!}{i!(k-i)!(2i-1)!!}y^k,
    \]
    which is exactly the expression from Equation \ref{eq:ith-inf-sum}.
\end{proof}

\subsection{Concentration for Gaussian integrals}
\begin{lemma}\label{lemma:integral-concentration-1}
    Consider a fixed $\bm x \in \mathbb R^d$. Let $\bm x' \sim \mathcal N(0, \bm I_d)$ and define the set
    \[
        \mathcal A_{\epsilon} = \left\{\bm x' \in \mathbb R^d: \left|\theta_{\bm x, \bm x'} - \frac{\pi}{2}\right|< \epsilon \right\}, \epsilon \in \left(0, \frac{\pi}{2}\right).
    \]

    Then 
    \[
        \rho_{\mathcal X}(\mathcal A^c_{\epsilon}) < 3e^{-c_0d\epsilon^2},
    \]
    and consequently, for any function $g \in L^2(\rho_{\mathcal X})$, we have
    \[
        \left|\int_{\mathcal A^c_{\epsilon}} \frac{(\pi - \theta_{\bm x, \bm x'})}{\pi}g(\bm x')\mathrm d\rho_{\mathcal X}(\bm x')\right| \leq \sqrt {3} e^{-c_1d\epsilon^2/2}\|g\|_{L^2(\rho_{\mathcal X})},
    \]
    and
    \[
        \left|\int_{\mathcal A^c_{\epsilon}} \frac{\sin \theta_{\bm x, \bm x'}}{2\pi}g(\bm x')\mathrm d\rho_{\mathcal X}(\bm x')\right| \leq \sqrt {3} e^{-c_2d\epsilon^2/2}\|g\|_{L^2(\rho_{\mathcal X})}.
    \]
    where $c_0, c_1, c_2 >0$ are absolute constants.
\end{lemma}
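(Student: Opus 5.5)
The statement has two parts: a small-measure bound for $\mathcal A_\epsilon^c$, and two integral bounds obtained from it by Cauchy--Schwarz.

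\textbf{Reduction to one dimension.} Write $\cos\theta_{\bm x,\bm x'} = \langle \bm\omega, \bm x'/\|\bm x'\|\rangle$ with $\bm\omega = \bm x/\|\bm x\|$ fixed. By rotation invariance of $\rho_{\mathcal X}$, the direction $\bm x'/\|\bm x'\|$ is uniform on $\mathbb S^{d-1}$. Hence $t := \cos\theta_{\bm x,\bm x'}$ has the law of the first coordinate $u_1$ of a uniform point on the sphere.

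\textbf{Small measure of $\mathcal A_\epsilon^c$.} Since $\cos$ is $1$-Lipschitz with slope bounded below near $\pi/2$, the event $|\theta - \pi/2| \ge \epsilon$ is equivalent to $|t| \ge \sin\epsilon$. For $\epsilon\in(0,\pi/2)$ we have $\sin\epsilon \ge 2\epsilon/\pi$. I would then apply the standard spherical concentration bound
\[
\mathbb P(|u_1| \ge s) \le 2e^{-ds^2/2},
\]
either from Lévy's isoperimetric inequality or directly from the density $\propto (1-s^2)^{(d-3)/2}$. This gives $\rho_{\mathcal X}(\mathcal A^c_\epsilon) \le 2e^{-2d\epsilon^2/\pi^2} < 3e^{-c_0 d\epsilon^2}$ with $c_0 = 2/\pi^2$. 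The constant $3$ leaves room for any small-$d$ slack, since one can absorb it by shrinking $c_0$.

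If I want to avoid citing spherical concentration, the fallback is a Gaussian route. Write $\langle\bm\omega,\bm x'\rangle = g_1\sim\mathcal N(0,1)$ and $\|\bm x'\|^2 \ge d/2$ with probability $\ge 1-e^{-cd}$ (Laurent--Massart). On that event, $|t|\ge s$ forces $|g_1| \ge s\sqrt{d/2}$, which has probability $\le 2e^{-ds^2/4}$. Summing the two failure probabilities gives a bound of the form $3e^{-c_0 d \epsilon^2}$, using $\epsilon<\pi/2$ to dominate $e^{-cd}$ by $e^{-c' d\epsilon^2}$. This union-bound bookkeeping is the only delicate point, and it is where the constant $3$ naturally comes from.

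\textbf{The two integral bounds.} Both weights are bounded by $1$: $(\pi-\theta)/\pi\in[0,1]$ and $\sin\theta/(2\pi)\le 1/(2\pi)$. Cauchy--Schwarz then gives
\[
\left|\int_{\mathcal A^c_\epsilon} w(\bm x')\,g(\bm x')\,\mathrm d\rho_{\mathcal X}\right| \le \rho_{\mathcal X}(\mathcal A^c_\epsilon)^{1/2}\,\|g\|_{L^2(\rho_{\mathcal X})} \le \sqrt3\, e^{-c_0 d\epsilon^2/2}\,\|g\|_{L^2(\rho_{\mathcal X})},
\]
so both inequalities hold with $c_1=c_2=c_0$. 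The main obstacle is only obtaining clean explicit constants in the concentration step; the rest is immediate.
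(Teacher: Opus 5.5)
Your proof is correct. Your fallback Gaussian argument is essentially the paper's proof. The paper writes $\cos\theta_{\bm x,\bm x'} = Z/\sqrt{Z^2+\|\bm y_\perp\|^2}\le |Z|/\|\bm y_\perp\|$ and splits on $\|\bm y_\perp\|^2\ge (d-1)/2$. It adds the Gaussian tail $2e^{-\sin^2(\epsilon)(d-1)/4}$ to the $\chi^2$ lower tail $e^{-c(d-1)}$, which is where $3=2+1$ comes from. It then finishes with the same Cauchy--Schwarz step, so in effect $c_1=c_2=c_0$.

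Your primary route is genuinely different. By rotation invariance you reduce to the first coordinate of a uniform point on $\mathbb S^{d-1}$ and invoke the spherical cap bound $\mathbb P(|u_1|\ge s)\le 2e^{-ds^2/2}$. This gives a single exponential with explicit $c_0=2/\pi^2$, and there is no competing $e^{-cd}$ term to absorb. The price is relying on a cap-measure lemma (L\'evy/Ball). The paper's decomposition instead uses only elementary Gaussian and $\chi^2$ tails.

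On the bookkeeping point you flag, your treatment is tighter than the paper's. The paper disposes of $e^{-c(d-1)}$ by assuming ``without loss of generality'' that $\epsilon$ is small relative to the $\chi^2$ constant. That is a real restriction on $\epsilon$: it is harmless for the later choice $\epsilon=d^{-1/3}$, but it is not implied by the lemma's hypotheses. Your observation that $\epsilon<\pi/2$ gives $e^{-cd}\le e^{-(4c/\pi^2)d\epsilon^2}$ covers all $\epsilon\in(0,\pi/2)$. The only cost is taking $c_0$ as the minimum of two constants, shrunk slightly if you want the strict inequality.
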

\begin{proof}
    If we consider the set
    \[
        \mathcal{A}_{\epsilon} = \left\{\left|\theta_{\bm x, \bm x'}-\frac{\pi}{2}\right| < \epsilon\right\},
    \] 
    for a fixed $\bm x$ we define
    \[
        \cos \theta_{\bm x, \bm x'} = \frac{\langle \bm x, \bm x'\rangle}{\|\bm x\| \|\bm x'\|} \coloneqq \frac{Z}{\sqrt{Z^2 +\|\bm y_{\perp}\|^2}}
    \]
    where $\bm y_{\perp}$ is a $(d-1)$ dimensional vector orthogonal to $\bm x$. We define the bad event
    \[
        \mathcal A^c_{\epsilon} = \left \{ \left| \theta_{\bm x, \bm x'} - \frac{\pi}{2} \right| > \epsilon \right \} = \Big\{|\cos \theta_{\bm x, \bm x'}| > \sin \epsilon\Big\}.
    \]

    Next we note that
    \[
        |\cos \theta_{\bm x, \bm x'}| = \frac{|Z|}{\sqrt{Z^2 +\|\bm y_{\perp}\|^2}} \leq \frac{|Z|}{\|\bm y_{\perp}\|}
    \]
    thus
    \[
        \mathbb P(\mathcal A_{\epsilon}^c) \leq \mathbb P\left( \frac{|Z|}{\|\bm y_{\perp}\|} \geq \sin \epsilon\right)\, .
    \]
    Splitting the event according to $\|\bm y_{\perp}\|^2 \geq \frac{d-1}{2}$ we have
    \[
        \mathbb P(\mathcal A_{\epsilon}^c) \leq \mathbb P\left( |Z| \geq \sin \epsilon \sqrt{\frac{d-1}{2}}\right) + \mathbb P\left( \|\bm y_{\perp}\|^2 < \frac{d-1}{2}\right).
    \]

    Using the standard Gaussian tail bound and the lower-tail concentration bound for the chi-squared distribution, there exists an absolute constant $c > 0$ such that
    \[
        \mathbb P(|Z| > t) \leq 2e^{-t^2/2}\, , \quad  \mathbb P\left( \|\bm y_{\perp}\|^2 < \frac{d-1}{2}\right) \leq e^{-c(d-1)}
    \]
    and we have
    \[
        \mathbb P(\mathcal A_{\epsilon}^c) \leq 2e^{-\sin(\epsilon)^2(d-1)/4} + e^{-c(d-1)}\, .
    \]
    Using the bound 
    \[
        \sin(\epsilon) \ge \frac{2}{\pi}\epsilon, \quad \forall \epsilon \in [0, \pi/2],
    \]
    and assuming without loss of generality that $\epsilon > 0$ is chosen such that $c > \frac{\epsilon^2}{\pi}$ we can write
    \[
        \mathbb P(\mathcal A_{\epsilon}^c) \leq 3e^{-c_0d\epsilon^2}
    \]
    for some absolute constant $c_0 > 0$.

    Given this, we study the integral
    \[
        \int_{\mathcal{A}^c_{\epsilon}} \frac{(\pi - \theta_{\bm x, \bm x'})}{\pi} g(\bm x') \mathrm d\rho_{\mathcal X}(\bm x')\, .
    \]

    Noting that $\theta_{\bm x, \bm x'} \in [0, \pi]$, we have $\left| \frac{\pi - \theta_{\boldsymbol{x},\boldsymbol{x}'}}{\pi} \right| \le 1$ and by the Cauchy-Schwarz inequality we can bound the integral over $\mathcal{A}^c$:
    \[
        \left| \int_{\mathcal{A}^c_{\epsilon}} \frac{(\pi - \theta_{\bm x, \bm x'})}{\pi} g(\bm x') \mathrm d\rho_{\mathcal X}(\bm x') \right| \leq \sqrt{\rho_{\mathcal X}(\mathcal{A}^c_{\epsilon})}. \sqrt{\int |g(\bm x')|^2 \mathrm d \rho_{\mathcal X}} \leq \sqrt 3 e^{-c_1d\epsilon^2/2}\|g\|_{L^2(\rho_{\mathcal X})}.
    \]

    Lastly, the result for 
    \[
        \left|\int_{\mathcal A^c_{\epsilon}} \frac{\sin \theta_{\bm x, \bm x'}}{2\pi}g(\bm x')\mathrm d\rho_{\mathcal X}(\bm x')\right|
    \]
    follows from the fact that $|\sin \theta| \leq 1$ and the same argument under the concentration of the measure.
\end{proof}

%% file: appendixes/appendix-main-proofs.tex
\subsection{Proof of Theorem \ref{thm:approx-true-kernel}}\label{app:thm-approx-true-kernel}
\begin{proof}
    We split the proofs into several parts. First, we show that the characteristic function of the true distribution is sufficiently close to the one from the Gaussian distribution governed by $\bm \Gamma$. Then, we use that to bound $\|k_1^* - k_1\|_{L^2(\rho_{\mathcal X})\times L^2(\rho_{\mathcal X})}$, which immediately implies the operators are also close in norm. Since these objects live in the $L^2(\rho_{\mathcal X})$ space, the inputs are inherently unbounded, thus we further separate this case to deal with a bounded set and use the exponential decay of the measure to control the tails. Lastly, we show the distribution shift identity by introducing a pushforward measure dictated by $\bm \Gamma$ to translate the weights and inputs to the context of isotropic weights.
    \item \paragraph{Convergence of the true distribution to the spiked Gaussian:}
    Unconditionally, $\bm z$ is not Gaussian in general (it is a mixture of sub-exponential random variables), but conditional on $\bm \varsigma$ it is Gaussian.
    To be specific, if we let $c \coloneqq \frac{\mu_1 \eta}{n \sqrt{m}}$, with $a \sim \mathcal N(0, \frac{1}{m})$ and $\bm w \sim \mathcal N(0, \frac{1}{d}\bm I_d)$, condition on $\bm \varsigma = \sum y_i \bm x_i$, we have
    \[
        \bm z \ | \ \bm \varsigma \sim \mathcal N(0, \bm \Sigma(\bm \varsigma)), \quad \bm \Sigma(\bm \varsigma) = \frac{1}{d}\bm I_d + \frac{c^2}{m}\bm \varsigma \bm \varsigma^\top\, .
    \]
    So conditioning on $\bm \varsigma$, $\bm z$ is Gaussian with an anisotropic rank-one spike along the direction $\bm \varsigma$. Therefore, if we fix a test vector $\bm t \in \mathbb R^d$ and consider the projection $\langle \bm z, \bm t\rangle$, conditionally on $\bm \varsigma$ we have
    \[
        \langle \bm z, \bm t\rangle \ | \ \bm \varsigma \sim \mathcal N\left(0, \frac{\|\bm t\|^2}{d} + \frac{c^2}{m}\langle \bm \varsigma, \bm t\rangle^2 \right)\, .
    \]
    And consequently, the unconditional characteristic function is given by
    \[
        \phi_{\langle \bm z, \bm t\rangle}(u) = \exp\left(- \frac{u^2}{2}\frac{\|\bm t\|^2}{d}\right) \mathbb E_{\bm \varsigma}\left[ \exp\left(- \frac{u^2}{2}\frac{c^2\langle \bm \varsigma, \bm t\rangle^2}{m}\right) \right]\, .
    \]
    
    Now, to investigate the concentration of this variable, we look at the difference
    \[
        \left| \mathbb E_{\bm \varsigma}\left[ \exp\left(- \frac{u^2}{2}\frac{c^2\langle \bm \varsigma, \bm t\rangle^2}{m}\right) \right] -  \exp\left(- \frac{u^2}{2}\frac{c^2\mathbb E_{\bm \varsigma}[\langle \bm \varsigma, \bm t\rangle^2]}{m}\right)\right|\, .
    \]
    
    If we define $X = \frac{c^2}{m}\langle \bm \varsigma, \bm t\rangle^2$, $\mu = \frac{c^2}{m}\mathbb E_{\bm \varsigma}[\langle \bm \varsigma, \bm t\rangle^2]$ and $\alpha = \frac{u^2}{2}$, we perform a second-order Taylor expansion of the function $f(x) = e^{-\alpha x}$ around $\mu$. By Taylor's theorem, there exists some $\xi$ between $X$ and $\mu$ such that
    \[
        e^{-\alpha X} = e^{-\alpha \mu} - \alpha e^{-\alpha \mu}(X - \mu) + \frac{\alpha^2 e^{-\alpha \xi}}{2}(X - \mu)^2
    \]

    Taking the expectation of both sides yields:$$\mathbb{E}[e^{-\alpha X}] = e^{-\alpha \mu} - \alpha e^{-\alpha \mu}\mathbb{E}[X - \mu] + \frac{\alpha^2}{2}\mathbb{E}[e^{-\alpha \xi}(X - \mu)^2]$$Because $\mu = \mathbb{E}[X]$, the first-order term cancels out exactly ($\mathbb{E}[X - \mu] = 0$), leaving:$$\mathbb{E}[e^{-\alpha X}] - e^{-\alpha \mu} = \frac{\alpha^2}{2}\mathbb{E}[e^{-\alpha \xi}(X - \mu)^2]$$

    Since $\alpha = u^2/2 > 0$ and $X \geq 0$ (as it is a scaled squared projection), it follows that $\xi \geq 0$ and therefore $e^{-\alpha \xi} \leq 1$. Taking the absolute value, we can bound the difference directly by the variance of $X$
    \[
        |\mathbb{E}[e^{-\alpha X}] - e^{-\alpha \mu}| \leq \frac{\alpha^2}{2}\mathbb{E}[(X - \mu)^2] = \frac{\alpha^2}{2}\text{Var}(X)
    \]
    Substituting our definitions back in, we have $\frac{\alpha^2}{2}\text{Var}(X) = \frac{u^4}{8}\text{Var}(X)$. Given that $\frac{c^2}{m} = \mathcal{O}(\frac{\eta^2}{d^4})$, the variance of the sub-exponential variable yields the following bound
    \begin{equation}\label{eq:char-fn-bound}
        |\mathbb{E}[e^{-\alpha X}] - e^{-\alpha \mu}| \leq \mathcal{O}\left(\frac{\eta^4}{d^5}\right)
    \end{equation}
    
    \item \paragraph{Bounding the kernel difference:}
    We let $p_{\bm \Gamma}$ be the density function of the Gaussian distribution $\mathcal N(0, \frac{1}{d}\bm \Gamma)$ and $p_1^* = \mathbb E_{\bm \varsigma}[p_{\bm I_d}(\bm z \ | \ \bm \varsigma)]$ be the true density function from the non-Gaussian distribution followed by $z$.
    
    Denote $G(\bm w) := \sigma(\langle \bm w, \bm x\rangle) \sigma(\langle \bm w, \bm x'\rangle)$, the true kernel after the deterministic update is given by
    \[
        k_1^*(\bm x, \bm x') = \int G(\bm w) p_1^*(\bm w) \mathrm d \bm v\, ,
    \]
    while the Gaussian kernel $k_1$ is
    \[
        k_1(\bm x, \bm x') = \int G(\bm w) p_{\bm \Gamma}(\bm w) \mathrm d \bm v\, .
    \]
    We define a 2-dimensional vector $\bm{v}$ representing the two projections
    \[
        \bm{v} = \begin{bmatrix} v_1 \\ v_2 \end{bmatrix} = \begin{bmatrix} \langle \bm{w}, \bm{x} \rangle \\ \langle \bm{w}, \bm{x}' \rangle \end{bmatrix} \in \mathbb{R}^2\, ,
    \]
    then we can write $G$ in terms of this 2D variable: $G(\bm{v}) = \sigma(v_1)\sigma(v_2)$. The error between both kernels is the difference in expectations over this 2D plane
    \[
        |k_1^*(\bm x, \bm x') - k_1(\bm x, \bm x')| = \left| \int_{\mathbb{R}^2} G(\bm{v}) p_1^*(\bm{v}) \mathrm d\bm{v} - \int_{\mathbb{R}^2} G(\bm{v}) p_{\bm \Gamma}(\bm{v}) \mathrm d\bm{v} \right|
    \]
    
    Taking the Fourier transforms, we map this into the 2D frequency domain. In this context, the frequency variable is $\bm{u} = (u_1, u_2)$ and the error is given by
    \[
        |k_1^*(\bm x, \bm x') - k_1(\bm x, \bm x')| = \frac{1}{(2\pi)^2} \left| \int_{\mathbb{R}^2} \hat{G}(\bm{u}) [\Phi_1^*(\bm{u}) - \Phi_{\bm \Gamma}(\bm{u})] \mathrm d\bm{u} \right|\, ,
    \]
    where $\Phi$ are the respective characteristic functions of the distributions. We can see that the inner product between $\bm u$ and $\bm v$ gives
    \[
        \langle \bm u, \bm v\rangle = u_1 \langle \bm w, \bm x\rangle + u_2 \langle \bm w, \bm x'\rangle = \langle \bm w, u_1 \bm x + u_2 \bm x' \rangle\, .
    \]
    Therefore, if we define our test vector as $\bm t_u = u_1 \bm x + u_2 \bm x' \in \mathbb R^d$, $\langle \bm u, \bm v\rangle = \langle \bm w, \bm t_u\rangle$, the characteristic function for $\bm u$ is given by
    \[
        \Phi(\bm u) = \mathbb E[\exp(i \langle \bm u, \bm v\rangle)] = \mathbb E[\exp(i \langle \bm w, \bm t_u \rangle)]\, .
    \]
    This implies the characteristic function acting on $\bm u$ is precisely the characteristic function of a 1D projection of $\bm w$, as considered in the bound from \cref{eq:char-fn-bound}. Thus, we can plug in the bound and factor the $d$-dependent estimate out of the integral
    \[
        |\Phi_1^*(\bm{u}) - \Phi_{\bm \Gamma}(\bm{u})| \leq \mathcal{O}\left(\frac{\eta^4\ln^3 d}{d^5}\right) \|\bm{u}\|^2 \exp\left(-\frac{1}{2}\bm{u}^T \bm \Sigma \bm{u}\right) \, ,
    \]
    where we write the covariance matrix $\bm \Sigma$ as
    \[
        \bm \Sigma = \frac{1}{d}\begin{bmatrix}
            \|\bm x\|^2 & \langle \bm x, \bm x' \rangle\\
            \langle \bm x, \bm x' \rangle & \|\bm x'\|^2
        \end{bmatrix}\, .
    \]
    
    We will prove the kernels are close in norm in the product space $L^2(\rho_{\mathcal X}) \times L^2(\rho_{\mathcal X})$, which leads to the same conclusion to the Hilbert-Schmidt norm and the operator norm of the difference of integral operators. For this we split this proof with a truncation argument considering a bounded domain and then use the Lipschitz property of $\sigma$ to ensure the tails decays exponentially. 
    Ultimately, we want analyze the integral
    \[
        \iint|k_1^*(\bm x, \bm x') - k_1(\bm x, \bm x')|^2 \mathrm d \rho_{\mathcal X}(\bm x)\mathrm d \rho_{\mathcal X}(\bm x')
    \]
    so we start controlling the difference inside a bounded set.
    \paragraph{Bound of the pointwise difference over a bounded set: }
    To avoid singularities near the origin, we redefine our bounded domain as the set
    \[
        B_{R,c} = \{\bm x \in \mathbb{R}^d : c\sqrt{d} \le \|\bm x\| < R\sqrt{d}\}
    \]
    and we consider the bounded product set $B_{R,c} \times B_{R,c}$.
    Then, since $\bm \Sigma$ is positive semidefinite and inside this set it is never degenerate, we have 
    \[
        \Trarg{\bm \Sigma} = \frac{\|\bm x\|^2 + \|\bm x'\|^2}{d} \leq 2R^2\, .
    \]
    Now, because both weight distributions are sub-exponential, using Lemma \ref{lemma:activ-prod}, we get that
    \[
        \sup_{\bm x, \bm x' \in B_{R, c} \times B_{R, c}}|k_1^*(\bm x, \bm x') - k_1(\bm x, \bm x')| = \mathcal O\left(\frac{\eta^4\ln^3 d}{d^5}\right)\, .
    \]

    \paragraph{Bounding the integral over the tails:}
    Since the integration over the bounded set is handled, we analyze the integral over $( B_{R, c} \times B_{R, c} )^c$. Because the activation function $\sigma$ is $L_{\sigma}$-Lipschitz, we have the inequality $|\sigma(t)| \le |\sigma(0)| + L_{\sigma}|t|$. Consequently, for both kernels, there exist absolute constants $C_1, C_2 > 0$ such that the diagonal grows at most quadratically with the input norm
    \[
        k(\bm{x}, \bm{x}) \le C_1 + C_2\mathbb E_{\bm w}[\langle \bm w, \bm x\rangle ^2].
    \]
    Because the weights in both kernels share the same covariance matrix $\frac{1}{d}\bm \Gamma$, we have that
    \[
        \mathbb E_{\bm w}[\langle \bm w, \bm x\rangle ^2] = \frac{1}{d}\bm x^\top \bm \Gamma \bm x\leq \frac{\lambda_{\max}(\bm \Gamma)}{d}\|\bm x\|^2\, ,
    \]
    in both cases. Since $\lambda_{\max}(\bm \Gamma) = A+B$ and $B = \mathcal O\left(\frac{\eta^2}{d}\right)$ we have
    \[
         \mathbb E_{\bm w}[\langle \bm w, \bm x\rangle ^2]\leq C'\frac{\eta^2\|\bm x\|^2}{d^2}
    \]
    for some absolute constant $C' > 0$. Hence, we can find $C_2' > 0$ such that
    \[
        k(\bm{x}, \bm{x}) \le C_1 + C_2'\frac{\eta^2\|\bm x\|^2}{d^2}.
    \]
    By the Cauchy-Schwarz inequality, absorbing all constants into $C > 0$, the off-diagonal terms are bounded by
    \[
        |k(\bm x, \bm x')| \le \sqrt{k(\bm x, \bm x) k(\bm x', \bm x')} \le \sqrt{\left(1 + C\frac{\eta^2\|\bm x\|^2}{d^2}\right)\left(1 + C\frac{\eta^2\|\bm x'\|^2}{d^2}\right)}
    \]
    and using that $\sqrt {ab} \leq (a+b)/2$, absorbing necessary constants into $C$ again, we have
    \[
        |k(\bm x, \bm x')| \leq C \left(1 + \frac{\eta^2\|\bm x\|^2}{d^2} + \frac{\eta^2\|\bm x'\|^2}{d^2}\right)
    \]
    and using the algebraic identity $(a-b)^2 \le 2a^2 + 2b^2$, we get
    \[
        |k_1^*(\bm x, \bm x') - k_1(\bm x, \bm x')|^2 \leq 2|k_1^*(\bm x, \bm x')|^2 + 2|k_1(\bm x, \bm x')|^2 
    \]
    and applying our bound gives
    \[
        |k_1^*(\bm x, \bm x') - k_1(\bm x, \bm x')|^2 \leq 4C \left(1 + \frac{\eta^2\|\bm x\|^2}{d^2} + \frac{\eta^2\|\bm x'\|^2}{d^2}\right)^2\, .
    \]
    Finally, using the identity $(a+b+c)^2 \leq 3(a^2 + b^2 + c^2)$, we have
    \[
        |k_1^*(\bm x, \bm x') - k_1(\bm x, \bm x')|^2 \leq M \left(1 + \frac{\eta^4\|\bm x\|^4}{d^4} + \frac{\eta^4\|\bm x'\|^4}{d^4}\right) \, ,
    \]
    where $M$ is a constant depending on $L_{\sigma}$, $\sigma(0)$ and $C$. Therefore, the integral over the unbounded set is bounded by
    \[
        \iint_{(B_{R, c}\times B_{R, c})^c}|k_1^* - k_1|^2 \mathrm d \rho_{\mathcal X}(\bm x)\mathrm d \rho_{\mathcal X}(\bm x') \leq \iint_{(B_{R, c}\times B_{R, c})^c} M \left(1 +  \frac{\eta^4\|\bm x\|^4}{d^4} +  \frac{\eta^4\|\bm x'\|^4}{d^4}\right)  \mathrm d \rho_{\mathcal X}(\bm x)\mathrm d \rho_{\mathcal X}(\bm x') \, .
    \]
    Noting that
    \begin{align*}
        \Big( B_{R,c} \times B_{R,c}\Big)^c &\subset \left\{\Big\{\|\bm x\| > R \sqrt d\Big\} \times \mathbb R^d\right\} \cup \left\{\Big\{\|\bm x\| < c \sqrt d\Big\} \times \mathbb R^d\right\} \\
        & \quad \cup \left\{\mathbb R^d \times \Big\{\|\bm x'\| > R \sqrt d\Big\}\right\} \cup \left\{\mathbb R^d \times \Big\{\|\bm x'\| < c \sqrt d\Big\}\right\}\, ,
    \end{align*}
    since $\rho_{\mathcal X}$ is a probability measure, by symmetry, using the union bound we have that
    \[
        \iint_{(B_{R, c}\times B_{R, c})^c} 1\mathrm d \rho_{\mathcal X}(\bm x)\mathrm d \rho_{\mathcal X}(\bm x') \leq 2\mathbb P\left(\|\bm x\| > R\sqrt {d}\right) + 2\mathbb P\left(\|\bm x\| < c\sqrt {d}\right)\, .
    \]
    If $\bm x \sim \mathcal{N}(0, \bm I_d)$, by standard Gaussian concentration we have the following probability bound
    \[
        \mathbb{P}\Big(\|\bm x\| > R\sqrt{d}\Big) = \mathbb{P}\Big(\|\bm x\| > \sqrt{d} + (R-1)\sqrt{d}\Big) \leq \exp\left(-\frac{(R-1)^2 d}{2}\right)\,.
    \]
    Furthermore, the probability mass of the excluded inner ball is bounded by
    \[
        \mathbb{P}(\|\bm x\| < c\sqrt{d}) = \mathbb{P}(\|\bm x\|^2 < c^2 d) \le \left[c^2 e^{1-c^2}\right]^{d/2}\, .
    \]
    For every fixed $0<c<1$, we have
    \[
        c^2 e^{1-c^2}<1\, .
    \]
    Defining $y := c^2 e^{1-c^2}$, the bound becomes
    \[
        \mathbb P(\|\bm x\|<c\sqrt d) \leq  y^{d/2}.\, 
    \]
    Since $y <1$, we have $\ln y<0$, and therefore
    \[
        y^{d/2} = \exp\!\left(\frac {d}{2} \ln y\right) = \exp(-c'd)\, ,
    \]
    where
    \[
        c' \coloneqq -\frac12\ln y = \frac12\bigl(c^2-1-2\ln c \bigr) >0\, .
    \]

    Give this, it suffices to estimate the term
    \[
        \iint_{\left\{\{\|\bm x\| > R \sqrt d\} \times \mathbb R^d\right\}}\frac{\eta^4}{d^4}\|\bm x\|^4\mathrm d\rho_{\mathcal X}(\bm x) \mathrm d\rho_{\mathcal X}(\bm x')\, ,
    \]
    and by symmetry the other will follow exactly the same.
    Since the integral does not depend on $\bm x'$, we have
    \[
        \iint_{\left\{\{\|\bm x\| > R \sqrt d\} \times \mathbb R^d\right\}}\frac{\eta^4}{d^4}\|\bm x\|^4\mathrm d\rho_{\mathcal X}(\bm x) \mathrm d\rho_{\mathcal X}(\bm x') = \int_{\{\|\bm x\| > R \sqrt d\}}\frac{\eta^4}{d^4}\|\bm x\|^4\mathrm d\rho_{\mathcal X}(\bm x) \, .
    \]
    By Cauchy Schwarz we have that
    \[
        \int_{\{\|\bm x\| > R \sqrt d\}}\frac{\eta^4}{d^4}\|\bm x\|^4\mathrm d\rho_{\mathcal X}(\bm x)  \leq \frac{\eta^4}{d^4} \sqrt{\mathbb P \Big(\|\bm x\| > R\sqrt d\Big)} \sqrt{\mathbb E_{\bm x}[\|\bm x\|^8]}
    \]
    and since $\|\bm{x}\|^2$ follows a $\chi^2_d$ distribution, we have $\mathbb{E}[\|\bm {x}\|^8] = \mathcal O(d^4)$. Substituting this back and using the concentration of the measure again, we get
    \[
        \int_{\{\|\bm x\| > R \sqrt d\}}\frac{\eta^4}{d^4}\|\bm x\|^4\mathrm d\rho_{\mathcal X}(\bm x) = \mathcal O\left[\frac{\eta^4}{d^2}\exp\left(-\frac{(R-1)^2 d}{4}\right)\right] \, .
    \]
    Analogously, we have
    \[
        \int_{\{\|\bm x\| < c \sqrt d\}}\frac{\eta^4}{d^4}\|\bm x\|^4\mathrm d\rho_{\mathcal X}(\bm x) \leq \frac{\eta^4}{d^4}\sqrt{\mathbb P \Big(\|\bm x\| < c\sqrt d\Big)}\sqrt{\mathbb E_{\bm x}[\|\bm x\|^8]} = \mathcal O\left[\frac{\eta^4}{d^2}\exp(-c'd/2)\right] \, .
    \]
    
    Hence, collecting everything, the integral over $\Big( B_{R, c} \times B_{R, c} \Big)^c$ is bounded by
    \[
        \iint_{(B_{R, c}\times B_{R, c})^c}|k_1^* - k_1|^2 \mathrm d \rho_{\mathcal X}(\bm x)\mathrm d \rho_{\mathcal X}(\bm x') =  \mathcal O\left[\frac{\eta^4}{d^2} \exp \left( -\frac{(R-1)^2d}{4}\right) \right] + \mathcal O\left[\frac{\eta^4}{d^2}\exp(-c'd/2)\right] \, .
    \]
    Since $\eta = \Theta(d^\zeta)$, with $\zeta \in [1/2, 1)$, this gives the bound
    \[
        \iint_{(B_{R, c}\times B_{R, c})^c}|k_1^* - k_1|^2 \mathrm d \rho_{\mathcal X}(\bm x)\mathrm d \rho_{\mathcal X}(\bm x') = \mathcal O\left(d^{4\zeta-2}e^{-K d} \right)\, .
    \]
    for the absolute constant $K = \min \left\{\frac{(R-1)^2}{4}, \frac{c'}{2}\right\} > 0$, which does not depend on $d$. Because of the exponential decay, this is strictly smaller than the bound inside $B_{R, c}\times B_{R, c}$.
    \paragraph{Final bound on the norm:}
    From the previous discussions we know
    \[  
        \iint_{B_{R, c}\times B_{R, c}}|k_1^* - k_1|^2 \mathrm d \rho_{\mathcal X}(\bm x)\mathrm d \rho_{\mathcal X}(\bm x') =  \mathcal O\left(\frac{\eta^4\ln^3 d}{d^5}\right)^2
    \]
    and 
    \[
        \iint_{(B_{R, c}\times B_{R, c})^c}|k_1^* - k_1|^2 \mathrm d \rho_{\mathcal X}(\bm x)\mathrm d \rho_{\mathcal X}(\bm x') = \mathcal O\left(d^{4\zeta-2}e^{-K d} \right)\, .
    \]
    Writing $\|k_1^* - k_1\|_{L^2(\rho_{\mathcal X})\times L^2(\rho_{\mathcal X})}$ as the integral of interest and separating into the integration of both sets
    \[
        \iint|k_1^* - k_1|^2 \mathrm d \rho_{\mathcal X}(\bm x)\mathrm d \rho_{\mathcal X}(\bm x') = \iint_{B_{R, c}\times B_{R, c}}|k_1^* - k_1|^2 \mathrm d \rho_{\mathcal X}(\bm x)\mathrm d \rho_{\mathcal X}(\bm x') + \iint_{(B_{R, c}\times B_{R, c})^c}|k_1^* - k_1|^2 \mathrm d \rho_{\mathcal X}(\bm x)\mathrm d \rho_{\mathcal X}(\bm x')
    \]
    we can combine the bounds over both sets to obtain
    \[
        \|k_1^* - k_1\|_{L^2(\rho_{\mathcal X})\times L^2(\rho_{\mathcal X})} = \mathcal O\left(\frac{\eta^4\ln^3 d}{d^5}\right) \,,
    \]
    and since
    \[
        \|T_1^* - T_1 \|_{\text{op}} \leq \|T_1^* - T_1 \|_{\text{HS}} = \|k_1^* - k_1\|_{L^2(\rho_{\mathcal X})\times L^2(\rho_{\mathcal X})}\, ,
    \]
    the same bound shows the operators are close in norm as well.

    \item \paragraph{Proof of the distribution shift identity}
    We consider the integral operators $T_0: L^2(\rho_{\mathcal X}) \to L^2(\rho_{\mathcal X})$ and $T_1: L^2(\rho_{\mathcal X}) \to L^2(\rho_{\mathcal X})$ 
    \[
        (T_0f)(\bm x) = \int_{\mathcal X} k_0(\bm x, \bm x')f(\bm x') \mathrm d\rho_{\mathcal X}(\bm x') \qquad (T_1f)(\bm x) = \int_{\mathcal X} k_1(\bm x, \bm x')f(\bm x') \mathrm d\rho_{\mathcal X}(\bm x'),
    \]
    and because $T_0$ and $T_1$ are trace-class and compact, Mercer's theorem guarantees that both kernels can be diagonalized. For $k_0$ we have
    \[
        k_0(\bm x, \bm x') = \sum_{i=1}^\infty \xi_i \varphi_i(\bm x)\varphi_i(\bm x')
    \]
    where $\{\varphi_i\}_{i=1}^\infty$ is an orthonormal system and $\{\xi_i\}_{i=1}^\infty$ is a family of eigenvalues associated with these basis such that $\xi_1 \geq \xi_2 \geq ... > 0$. For $k_1$ we have a similar result and
    \[
        k_1(\bm x, \bm x') = \sum_{i=1}^\infty \chi_i \psi_i(\bm x)\psi_i(\bm x')\, ,
    \]
    where $\{\psi_i\}_{i=1}^\infty$ is an orthonormal system and $\{\chi_i\}_{i=1}^\infty$ are the associated non-increasing eigenvalues. In particular, if the kernels are universal, we have that $\overline{\text{span}\{\varphi_i: i \geq 1\}} = \overline{\text{span}\{\psi_i: i \geq 1\}} = L_2(\mathcal X, \rho_{\mathcal X})$.
    
    Consider the matrix from \cref{eq:covz} (without the $\frac{1}{d}$ scaling), written in short form
    \[
        \bm \Gamma  = A\bm I_d + B\bm w^* (\bm w^*)^\top
    \]
    with constants $A$ and $B$ from the original definition satisfying $A + B > |B|$. In particular, given $\bm w \sim \mathcal N\left(0, \frac{1}{d}\bm I\right)$ and $\langle \bm w, \bm x\rangle$, we know that for $\bm z \sim \mathcal N\left(0, \frac{1}{d}\bm \Gamma\right)$ we can write
    \[
        \langle \bm z, \bm x\rangle = \langle \bm w, \bm \Gamma^{1/2} \bm x \rangle
    \]
    and the kernels are related by
    \begin{align*}
        k_1(\bm x, \bm x') &= \mathbb E_{\bm z \sim \mathcal N\left(0, \frac{1}{d}\bm \Gamma\right)}[\sigma(\langle \bm z, \bm x\rangle)\sigma(\langle\bm z, \bm x'\rangle)] \\
        &= \mathbb E_{\bm w \sim \mathcal N\left(0, \frac{1}{d}\bm I_d\right)}[\sigma(\langle \bm w, \bm \Gamma^{1/2} \bm x \rangle)\sigma(\langle \bm w, \bm \Gamma^{1/2} \bm x \rangle)]\\
        &= k_0(\bm \Gamma^{1/2}\bm x, \bm \Gamma^{1/2}\bm x').
    \end{align*}
    
    Let us define the measure $\nu$ as the pushforward measure $\nu = (\bm \Gamma^{\frac{1}{2}})_{\#} V$
    \[
        \nu(V) = (\rho_{\mathcal X} \circ \bm \Gamma ^{-1/2})(V)= \rho_{\mathcal X} (\bm \Gamma^{-1/2}V)\, ,
    \]
    for every measurable set $V$ of $(\bm \Gamma ^{1/2} \mathcal X)$. From now on, we denote the input space transformed by the $\bm \Gamma^{1/2}$ matrix as $(\bm \Gamma^{1/2}\mathcal X) := \mathcal Z \subset \mathbb R^d.$
    
    Consider now, the spaces $L_2(\mathcal X, \rho_{\mathcal X})$ and $L_2(\mathcal Z, \nu)$, and $T_0^{\nu}:\mathcal H_0 \to \mathcal H_0$ the integral operator of $k_0$ w.r.t $\nu$
    \[
        (T_0^{\nu} f)(\bm z) = \int_{\mathcal Z } k_0(\bm z, \bm z')f(\bm z') \mathrm d\nu(\bm z'),\ \forall \bm z \in \mathcal Z
    \]
    
    Due to Mercer's decomposition we have that $k_0$ can be diagonalized into a family of eigenvalues $\{\omega_i\}_{i\in N}$ and eigenfunctions $\{\bm e_i\}_{i \in N}$ which are orthonormal in $L_2(\mathcal Z, \nu)$ and 
    \[
        k_0(\bm z, \bm z') = \sum_{i \in \mathbb N} \omega_i \bm e_i(\bm z) \bm e_i(\bm z') \ \text{ w.r.t. $\nu$},
    \]
    where $\{\omega_i\}$ is not necessarily the same family as $\{\xi_i\}$, nor $\{\bm e_i\}$ are the same eigenfunctions as $\{\bm \varphi_i\}$.
    
    We define $\bm h_i(\cdot) = \bm e_i \circ \bm \Gamma^{1/2}(\cdot)$, the integral operator $T_1$ can be written as
    \begin{align*}
        (T_1 f)(\bm x) &= \int_{\mathcal X} k_1(\bm x, \bm x')f(\bm x') \mathrm d\rho_{\mathcal X}(\bm x') \\
        & = \int_\mathcal X k_0(\bm \Gamma^{1/2}\bm x, \bm \Gamma ^{1/2}\bm x') f(\bm x') \mathrm d\rho_{\mathcal X}(\bm x')\\
        & = \int_\mathcal Z k_0(\bm z, \bm z') (f\circ \bm \Gamma^{-1/2}) ( \bm z') \mathrm d \nu(\bm z'),
    \end{align*}
    and if we plug in $f = \bm h_i$ we get
    \begin{align*}
        (T_1 f)(\bm x) &= \int_\mathcal Z k_0(\bm z, \bm z') (\bm h_i\circ \bm \Gamma^{-1/2}) ( \bm z') \mathrm d \nu(\bm z')\\
        &= \int_\mathcal Z k_0(\bm z, \bm z') (\bm e_i \circ \bm \Gamma ^{1/2}\circ \bm \Gamma^{-1/2}) ( \bm z') \mathrm d \nu(\bm z')\\
        &= \int_\mathcal Z k_0(\bm z, \bm z') \bm e_i( \bm z') \mathrm d \nu(\bm z')\\
        & = \omega_i \bm e_i(\bm z) = \omega_i \bm e_i(\bm \Gamma^{1/2}\bm x),
    \end{align*}
    so $\{\bm e_i\circ \bm \Gamma ^{1/2}\}$ are eigenfunctions of $T_1$ with the same eigenvalues from $k_0$ w.r.t. $\nu$. Also, 
    \[
        \int_\mathcal X \bm e_i(\bm \Gamma^{1/2}\bm x) \bm e_j(\bm \Gamma^{1/2}\bm x) \mathrm d\rho_{\mathcal X}(\bm x) = \int_{\mathcal Z}  \bm e_i(\bm z) \bm e_j(z) \mathrm d\nu(\bm z) = \delta_{i,j},
    \]
    which implies $\{\bm e_i\circ \bm \Gamma ^{1/2}\}$ are orthonormal in $L_2(\mathcal X, \rho_{\mathcal X})$. Therefore, we can also diagonalize $k_1$ such that 
    \[
        k_1(\bm x, \bm x') = \sum_{i\in \mathbb N} \omega_i \bm e_i(\bm \Gamma^{1/2}\bm x)\bm e_i(\bm \Gamma^{1/2}\bm x') \ \text{ w.r.t $\rho_{\mathcal X}$}.
    \]    
\end{proof}

\subsection{Proof of \cref{thm:eigenv-decay}}\label{app:thm-eigenv-decay}

\begin{proof}
    Since we work with the Gaussian measures, the domain is naturally unbounded, thus we split the proof in cases.
    \item \paragraph{Difference under bounded and unbounded domain}
    Supposing we consider an unbounded integration space $\mathcal X$, introduce the bounded set 
    \[
        B_R = \{\bm y \in \mathbb R^d: \|\bm y\| < R\},
    \]
    and we want to show that outside this set we able to control the fluctuations of the eigenvalues well enough.
    
    Since the Gaussian measures are regular, for a given $\varepsilon > 0$, we can choose $R = R(\varepsilon) > 0$ such that
    \[
        \rho_{\bm \Gamma}(B_R^c) < \varepsilon \quad \rho_{\bm I_d}(B_R^c) < \varepsilon
    \]
    and using this we write
    \[
        T_1 f(\bm x) = T_1^{R}f(\bm x) + E_1f(\bm x) = \int_{B_R} k_1(\bm x, \bm x') f(\bm x') \bm{1_{B_R}}(\bm x)\mathrm d\rho_{\bm I_d}(\bm x') + \int_{B_R^c} k_1(\bm x, \bm x') f(\bm x')\mathrm d\rho_{\bm I_d}(\bm x')
    \]
    and similarly $T_0 = T_0^R + E_0$, noting that since $T_1,T_0$ are compact and self adjoint, $T_0^R, T_1^R, E_0, E_1$ must also be.

    For the bounded domain case, we assume the following result holds: for every $k \geq 0$,
    \[
        c\lambda_k(T_0^R) \leq \lambda_k(T_1^R) \leq C\lambda_k(T_0^R).
    \]
    
    In fact, since the lower bound of the function $r$ does not depend on the norm, its value is the same for the bounded and unbounded case, thus the lower bound on the eigenvalues comes for free
    \[
        c\lambda_k(T_0) \leq \lambda_k(T_1).
    \]
    
    Furthermore, since $\sigma$ is $L_\sigma$-Lipschitz, we have $|\sigma(z) - \sigma(0)| \leq L|z|$ and
    \[
        k_1(\bm{x}, \bm{x}) \leq 2L^2 \mathbb{E}_{\bm{w}}[(\bm{w}^T \bm{x})^2] + 2\sigma(0)^2.
    \]
    Since for $k_1$, $\bm w \sim \mathcal N(0, \frac{1}{d}\bm \Gamma)$, we have
    \[
         k_1(\bm{x}, \bm{x}) \leq C\left(1+\frac{\|\bm x\|^2}{d}\right).
    \]
    for some absolute constant $C > 0$ depending on $\lambda_{\max}(\bm \Gamma), L$ and $\sigma(0)$.
    \[
        \|E_1\|_\op \leq \Trarg{E_1} \leq  \int_{B_R^c} k_1(\bm x, \bm x)\mathrm d\rho_{\bm I_d}(
        \bm x)\leq  \sqrt{\mathbb E_{\bm x}\left[ C\left(1+\frac{\|\bm x\|^2}{d}\right)^2 \right]}\sqrt{\rho_{\bm I_d}(B_R^c)}\, .
    \]
    First, we have $\sqrt{\rho_{\bm I_d}(B_R^c)} < \sqrt \varepsilon$. Also, we can calculate
    \[
        \mathbb E_{\bm x}\left[ C\left(1+\frac{\|\bm x\|^2}{d}\right)^2 \right] =C\mathbb E_{\bm x}\left[1 + \frac{2\|\bm x\|^2}{d}+ \frac{\|\bm x\|^4}{d^2} \right] \leq 6C
    \] 
    hence
    \[
        \|E_1\|_\op \leq \sqrt {6C \varepsilon} \coloneqq C_1' \sqrt{\varepsilon}\,. 
    \]
    Similarly, we have that $\|E_1\|_\op \leq C_0'\sqrt{\varepsilon}$ for some absolute constant $C_0'$.

    Using Weyl's monotonicity theorem we get, for all $k \geq 0$,
    \[
        |\lambda_k(T_1) - \lambda_k(T_1^R)| \leq \|E_1\|_\op < C_1'\sqrt{\varepsilon}
    \]
    and 
    \[
        |\lambda_k(T_0) - \lambda_k(T_0^R)| \leq \|E_0\|_\op < C_0'\sqrt{\varepsilon}
    \]
    which shows the real eigenvalues and the truncated ones are close whenever you fix a radius $R$.     
    
    \item \paragraph{Result for bounded domain}
    
    Based on the above discussion we set out to prove that the eigenvalue decay is maintained in the bounded domain. 
    
    We assume that the integration space $\mathcal X$ is bounded, i.e. $\|\bm x\| < \infty$ for all $\bm x \in \mathcal X$. 
    
    Consider the operator $\mathcal T_\#:  L^2(\rho_{\bm I_d}) \to L^2(\rho_{\bm \Gamma})$ acting on $f$ by the following rule
    \[
        \mathcal T_\# f = f \circ \bm \Gamma^{-1/2}
    \]
    which is designed to push $f$ forward to the transformed space where the new kernel is acting. Since all measures in play are Gaussian, and because $\bm \Gamma$ is full rank, we can perform a change of variables $\bm y = \bm \Gamma ^{1/2} \bm x$ to get
    \[
        \|\mathcal T_\# f\|^2_{L^2(\rho_{\bm \Gamma})} = \int |f\circ \bm \Gamma^{-1/2}(\bm y)|^2 \mathrm d\rho_{\bm \Gamma}(\bm y) = \int |f(\bm x)|^2 \mathrm d\rho_{\bm I_d}(\bm x) = \|f\|^2_{L^2(\rho_{\bm I_d})},
    \]
    thus $\mathcal T_\#$ is an isometry between both spaces and in particular an isomorphism. Similarly, we introduce its conjugate operator $(\mathcal T_\#)^* = \mathcal T_\#^{-1} := \mathcal{T^\#} : L^2(\rho_{\bm \Gamma}) \to L^2(\rho_{\bm I_d})$ which corresponds to the pullback operator
    \[
        \mathcal T^\# g = g\circ \bm \Gamma^{1/2},
    \]
    and is also an isometry between the spaces.
    
    Courant-Fischer's theorem characterizes the $k$-th eigenvalue of any operator by the identity
    \[
        \lambda_k(T_1) = \sup_{V:\dim(V) = k}\inf_{f\in V}\frac{\langle f, T_1 f\rangle}{\|f\|^2_{L^2(\rho_{\bm I_d})}}
    \]
    and we can expand quadratic form into 
    \begin{align*}
        \langle f, T_1 f\rangle_{L^2(\rho_{\bm I_d})} &= \iint f(\bm x)k_1(\bm x, \bm x')f(\bm x')\mathrm d\rho_{\bm I_d}(\bm x)\mathrm d\rho_{\bm I_d}(\bm x')\\
        &=\iint f(\bm \Gamma^{-1/2}\bm y)k_0(\bm y, \bm y')f(\bm \Gamma^{-1/2}\bm y')\mathrm d\rho_{\bm \Gamma}(\bm y)\mathrm d\rho_{\bm \Gamma}(\bm y')\\
        &=\iint f(\bm \Gamma^{-1/2}\bm y)k_0(\bm y, \bm y')f(\bm \Gamma^{-1/2}\bm y')\left[\frac{\mathrm d\rho_{\bm \Gamma}}{\mathrm d\rho_{\bm I_d}}(\bm y)\right]\mathrm d\rho_{\bm I_d}(\bm y)\left[\frac{\mathrm d\rho_{\bm \Gamma}}{\mathrm d\rho_{\bm I_d}}(\bm y')\right]\mathrm d\rho_{\bm I_d}(\bm y')\\
        &:=\iint r(\bm y)f(\bm \Gamma^{-1/2}\bm y)k_0(\bm y, \bm y')r(\bm y')f(\bm \Gamma^{-1/2}\bm y')\mathrm d\rho_{\bm I_d}(\bm y)\mathrm d\rho_{\bm I_d}(\bm y')\\
        &= \langle r\mathcal T_\# f, T_0 (r\mathcal T_\# f)\rangle_{L^2(\rho_{\bm I_d})}
    \end{align*}
    where we use the equality $k_1(\bm x, \bm x') = k_0(\bm \Gamma^{1/2}\bm x, \bm \Gamma^{1/2}\bm x')$ and $r$ is the Radon-Nikodym derivative induced by the two Gaussian measures:
    \[
        r(\bm y) := \frac{\mathrm d\rho_{\bm \Gamma}}{\mathrm d\rho_{\bm I_d}}(\bm y)= \frac{1}{\sqrt{\det \bm \Gamma}}\exp\left(-\frac{1}{2}\bm y^\top (\bm \Gamma^{-1} - \bm I_d)\bm y\right).
    \]
    Given all these definitions, we introduce the operator $\tilde T_1 := \mathcal T_\# T_1 \mathcal T^\#$, leading to the identity
    \[
        \tilde T_1 h(\bm y) = \int k_0(\bm y, \bm y') h(\bm y') \mathrm d\rho_{\bm \Gamma}(\bm y'),
    \]
    furthermore since $T_1$ and $\tilde T_1$ are related by isometries, they are unitarily equivalent and share the same eigenvalues.
    
    To continue we define the operator $M: L^2(\rho_{\bm \Gamma}) \to L^2(\rho_{\bm I_d})$ that maps $g \to r g$ and note that it is bounded and invertible since $r$ is bounded below and above under the assumption that the integration space is bounded, i.e. it is an isomorphism.
    
    If we let $h = M(g)= r g$ we can see
    \begin{align*}
        \langle g, \tilde T_1 g\rangle_{L^2(\rho_{\bm \Gamma})} &= \iint k_0(\bm y, \bm y')g(\bm y)g(\bm y') \mathrm d\rho_{\bm \Gamma}(\bm y)\mathrm d\rho_{\bm \Gamma}(\bm y')\\
        &= \iint k_0(\bm y, \bm y')[r(\bm y)g(\bm y)][r(\bm y')g(\bm y')] \mathrm d\rho_{\bm I_d}(\bm y)\mathrm d\rho_{\bm I_d}(\bm y')
    \end{align*}
    where we use that $r(\bm y)d\rho_{\bm I_d}(\bm y) = d\rho_{\bm \Gamma}(\bm y)$, and thus
    
    \begin{equation}\label{eq:quad_forms}
        \langle g, \tilde T_1 g\rangle_{L^2(\rho_{\bm \Gamma})}= \langle M(g), T_0 M(g)\rangle_{L^2(\rho_{\bm I_d})} = \langle h, T_0 h\rangle_{L^2(\rho_{\bm I_d})}.
    \end{equation}
    
    Analyzing the norms, and writing $g = \frac{1}{r}h$, we get
    \begin{align*}
        \|f\|^2_{L^2(\rho_{\bm I_d})} = \int |f(\bm x)|^2 \mathrm d\rho_{\bm I_d}(\bm x) &= \int |g(\bm y)|^2 \mathrm d\rho_{\bm \Gamma}(\bm y) \\
        &= \int \left|\frac{1}{r(\bm y)}h(\bm y)\right|^2 \mathrm d\rho_{\bm \Gamma}(\bm y)\\
        &= \int \left|\frac{1}{r(\bm x)}h(\bm y)\right|^2 r(\bm y)\mathrm d\rho_{\bm I_d}(\bm y) \\
        &=  \int \frac{1}{r(\bm y)}|h(\bm y)|^2 \mathrm d\rho_{\bm I_d}(\bm y),
    \end{align*}
    and we have that
    \begin{equation}\label{eq:norms}    
        \|f\|^2_{L^2(\rho_{\bm I_d})} = \left\|\frac{1}{\sqrt r}h\right\|^2_{L^2(\rho_{\bm I_d})}.
    \end{equation}
    Thus we can rewrite the variational form as the following identity
    
    \begin{equation}\label{eq:var_forms}
        \frac{\langle f, T_1 f\rangle_{L^2(\rho_{\bm I_d})}}{\|f\|_{L^2(\rho_{\bm I_d})}} = \frac{\langle g, \tilde T_1 g\rangle_{L^2(\rho_{\bm \Gamma})}}{\|g\|^2_{L^2(\rho_{\bm \Gamma})}} = \frac{\langle h, T_0 h\rangle_{L^2(\rho_{\bm I_d})}}{\left\|\frac{1}{\sqrt r} h\right\|^2_{L^2(\rho_{\bm I_d})}}.
    \end{equation}
    
    Since $\bm \Gamma - \bm I_d \succ 0$, we have that $\bm y^\top (\bm \Gamma^{-1} - \bm I_d)\bm y \leq 0$ for all $\bm y \in \mathbb R^d$ and
    \[
        r(\bm y) =  \frac{1}{\sqrt{\det \bm \Gamma}}\exp\left(-\frac{1}{2}\bm y^\top (\bm \Gamma^{-1} - \bm I_d)\bm y\right) \geq \frac{1}{\sqrt{\det \bm \Gamma}}, \ \forall \bm y \in \mathbb R^d.
    \]
    
    And because we assumed $\mathcal X$  to be bounded, we can find constants $c, C > 0$ such that $c\leq r(\bm y) \leq C$ for all $\bm y \in \mathcal X$, thus the relationship between the norms of $f$ and $h$ in \cref{eq:norms} gives
    \[
        \frac{1}{C}\|h\|^2_{L^2(\rho_{\bm I_d)}} \leq \|f\|^2_{L^2(\rho_{\bm I_d)}} = \|g\|^2_{L^2(\rho_{\bm \Gamma)}}\leq \frac{1}{c}\|h\|^2_{L^2(\rho_{\bm Id)}}.
    \]
    
    Combining these bounds with the identity for the quadratic forms (\ref{eq:quad_forms}), we know that for a function $g$ from a fixed subspace $V'$ with dimension $k$, if $h = M(g)$, we have
    \[
        c\frac{\langle h, T_0 h \rangle}{\|h\|^2_{ L^2(\rho_{\bm I_d})}} \leq \frac{\langle g, \tilde T_1 g \rangle}{\|g\|^2_{ L^2(\rho_{\bm \Gamma})}} \leq C\frac{\langle h, T_0 h \rangle}{\|h\|^2_{ L^2(\rho_{\bm I_d})}}.
    \]
    
    Now, given a subspace $V' \subset L^2(\rho_{\bm \Gamma})$ of dimension $k$, we define
    \[
        W = M(V') := \{rg : g \in V'\} \subset L^2(\rho_{\bm I_d}).
    \]
    Because $M$ is bijective and isomorphic, $W$ has dimension $k$ and the mapping $V' \mapsto W$ is a bijection, allowing us to associate every subspace $V' \subset L^2(\rho_{\bm \Gamma})$ with a respective unique subspace $W \subset  L^2(\rho_{\bm I_d})$. 
    
    Thus, taking the infimum over all functions inside $V'$ is the same as taking the infimum over the associated subspace $W$ and 
    \[
        c\left(\inf_{h \in W}\frac{\langle h, T_0 h \rangle}{\|h\|^2_{ L^2(\rho_{\bm I_d})}}\right) \leq \inf_{g \in V'}\frac{\langle g, \tilde T_1 g \rangle}{\|g\|^2_{ L^2(\rho_{\bm \Gamma})}}  \leq C\left(\inf_{h \in W}\frac{\langle h, T_0 h \rangle}{\|h\|^2_{ L^2(\rho_{\bm I_d})}}\right),
    \]
    and finally taking the supremum over all possible sets $V'$ of dimension $k$ is the same as taking the supremum over the associated sets $W$ and we get
    \[
        c\left(\sup_{\dim W = k}\inf_{h \in W}\frac{\langle h, T_0 h \rangle}{\|h\|^2_{ L^2(\rho_{\bm I_d})}}\right) \leq \sup_{\dim V' = k}\inf_{g \in V'}\frac{\langle g, \tilde T_1 g \rangle}{\|g\|^2_{ L^2(\rho_{\bm \Gamma})}}  \leq C\left(\sup_{\dim W = k}\inf_{h \in W}\frac{\langle h, T_0 h \rangle}{\|h\|^2_{ L^2(\rho_{\bm I_d})}}\right),
    \]
    which are exactly the expressions for the eigenvalues of $\lambda(T_0)$ and $\lambda_k(\tilde T_1) = \lambda_k(T_1)$, giving the desired result
    \[
        c \lambda_k(T_0) \leq \lambda_k(\tilde T_1) = \lambda_k(T_1) \leq C\lambda_k(T_0).
    \]

\end{proof}
\subsection{Proof of Theorem \ref{thm:cov-expansion}}\label{app:thm-cov-expansion}
\begin{proof}
    First, we denote $\bm \Lambda := \bm I_d + \frac{\gamma_2}{\gamma_1}\bm u \bm u^\top$. Then by Sherman–Morrison, for $\bm \Sigma = \gamma_1 \bm \Lambda \bm I_d$, we have:
    \[
        \bm \Sigma^{-1} = \frac{1}{\gamma_1}\bm I_d - \frac{\gamma_2}{\gamma_1(\gamma_1+\gamma_2)}\bm u \bm u^\top,
    \]
    and, in particular, the determinants are given by 
    \[
        \det(\bm \Sigma) = \det (\bm \Lambda) \det (\gamma_1 \bm I_d) = \frac{\gamma_1+\gamma_2}{\gamma_1}\det (\gamma_1 \bm I_d).
    \]
    Therefore, we write the expectation of $G$ using the explicit Gaussian density under covariance $\bm \Sigma$ and use the Taylor expansion of the exponential function to obtain
    \begin{align*}
        \mathbb E_{\bm w \sim \mathcal N(0, \bm \Sigma)}[G(\bm w)] &= \int G(\bm w)\frac{1}{(2\pi)^{d/2}\sqrt{\det{\bm \Sigma}}}\exp\left({-\frac{\bm w^\top \bm \Sigma^{-1}\bm w}{2}}\right)\mathrm d\bm w\\
        &=\int \sqrt{\frac{\gamma_1}{\gamma_1+\gamma_2}}\exp\left(\frac{\gamma_2}{2\gamma_1(\gamma_1+\gamma_2)}\langle \bm u, \bm w\rangle^2 \right)\frac{G(\bm w)}{(2\pi)^{d/2}\sqrt{\det \gamma_1\bm I_d}}e^{-\frac{\|\bm w\|^2}{2\gamma_1}}\mathrm d\bm w\\
        &=\sqrt{\frac{\gamma_1}{\gamma_1+\gamma_2}} \int \sum_{k=0}^\infty \frac{1}{k!}\left(\frac{\gamma_2}{2\gamma_1(\gamma_1+\gamma_2)}\right)^k \langle \bm u, \bm w\rangle^{2k} G(\bm w)\mathrm d\rho_{\gamma_1 \bm I_d}(\bm w)\\
        &=\sqrt{\frac{\gamma_1}{\gamma_1+\gamma_2}}\sum_{k=0}^\infty \frac{1}{k!}\left(\frac{\gamma_2}{2\gamma_1(\gamma_1+\gamma_2)}\right)^k\int \langle \bm u, \bm w\rangle^{2k} G(\bm w)\mathrm d\rho_{\gamma_1 \bm I_d}(\bm w)\\
        &= \sqrt{\frac{\gamma_1}{\gamma_1+\gamma_2}}\sum_{k=0}^\infty \frac{1}{k!}\left(\frac{\gamma_2}{2\gamma_1(\gamma_1+\gamma_2)}\right)^k \mathbb E_{\bm w \sim \mathcal N(0, \gamma_1\bm I_d)}[\langle \bm u, \bm w\rangle^{2k}G(\bm w)],
    \end{align*}
    where we used the relationship between the determinants in the second passage.
    
    Next, we justify the application of Fubini's theorem. If we denote $\phi_{\gamma_1}(\bm w) = e^{-\frac{\|\bm w\|^2}{2\gamma_1}}$ and define
    \[
        f(k, \bm w) = \frac{1}{k!}\left(\frac{\gamma_2}{2\gamma_1(\gamma_1+\gamma_2)}\right)^k \langle \bm u, \bm w\rangle^{2k} G(\bm w)\mathrm \phi_{\gamma_1}(\bm w),
    \]
    and we check its sufficient condition: absolute integrability
    \[
        \int \sum_{k \geq 0} |f(k, \bm w)| \mathrm d \bm w < \infty.
    \]

    Looking at this sum we can see
    \begin{align*}
        \sum_{k \geq 0} |f(k, \bm w)| \mathrm d \bm w  &= \sum_{k\geq 0}\frac{1}{k!}\left|\left(\frac{\gamma_2}{2\gamma_1(\gamma_1+\gamma_2)}\right)^k \langle \bm u, \bm w\rangle^{2k} G(\bm w)\mathrm \phi_{\gamma_1}(\bm w)\right| \\
        &\leq \sum_{k\geq 0} \frac{1}{k!}\left|\left(\frac{\gamma_2}{2\gamma_1(\gamma_1+\gamma_2)}\right) \langle \bm u, \bm w\rangle^{2}\right|^k |G(\bm w)||\mathrm \phi_{\gamma_1}(\bm w)| \\
        &= \exp\left(\left|\frac{\gamma_2}{2\gamma_1(\gamma_1+\gamma_2)}\right| \langle \bm u, \bm w\rangle^{2}\right) |G(\bm w)\mathrm| \phi_{\gamma_1}(\bm w)\\
        &= \exp\left(\left|\frac{\gamma_2}{2\gamma_1(\gamma_1+\gamma_2)}\right| \langle \bm u, \bm w\rangle^{2} - \frac{\|\bm w\|^2}{2\gamma_1}\right) |G(\bm w)|.
    \end{align*}

    If this exponent is negative, the Gaussian density is able to suppress the polynomial growth of $G$ under the integral. We always have $\langle \bm u, \bm w \rangle \leq \| \bm w \|$ because of the unit vector $\bm u$, which leads to
    \[
     \exp\left(\left|\frac{\gamma_2}{2\gamma_1(\gamma_1+\gamma_2)}\right|  - \frac{1}{2\gamma_1}\right) = \exp\left(\frac{|\gamma_2| - (\gamma _1 + \gamma_2)}{2\gamma_1(\gamma_1+\gamma_2)}\right)\,.
    \]
    Due to the assumption that $\gamma_1 + \gamma_2 > |\gamma_2|$, the exponent is always negative and consequently
    \[
        \int \sum_{k \geq 0} |f(k, \bm w)| \mathrm d \bm w < \infty.
    \]

    We now study the terms
    \[
         \mathbb E_{\bm w \sim \mathcal N(0, \gamma_1\bm I_d)}[\langle \bm u, \bm w\rangle^{2k}G(\bm w)].
    \]
    By using Stein's Lemma exhaustively we can obtain
    \[
        \mathbb E_{\bm w \sim \mathcal N(0, \gamma_1\bm I_d)}[\langle \bm u, \bm w\rangle^{2k}G(\bm w)] = \sum_{n=0}^k \gamma_1^{k+n}\binom{2k}{2n}(2k -2n-1)!! \mathbb{E}[ D_{\bm u}^{(2n)} G(\bm{w})],
    \]
    and the original expression is given by
    \[
        \mathbb E_{\bm w \sim \mathcal N(0, \bm \Sigma)}[G(\bm w)] = \sqrt{\frac{\gamma_1}{\gamma_1+\gamma_2}}\sum_{k=0}^\infty\frac{1}{k!}\left(\frac{\gamma_2}{2\gamma_1(\gamma_1+\gamma_2)}\right)^k\left[\sum_{n=0}^k \gamma_1^{k+n}\binom{2k}{2n}(2k -2n-1)!! \mathbb{E}[ D_{\bm u}^{(2n)} G(\bm{w})]\right].
    \]
    Since the exchange of the integral and the outer series was justified previously, and for each fixed $k$ the sum over $n$ is finite, the resulting double series is absolutely summable under the same domination bound; hence we may interchange the order of summation.
    \[
        \mathbb E_{\bm w \sim \mathcal N(0, \bm \Sigma)}[G(\bm w)] = \sqrt{\frac{\gamma_1}{\gamma_1+\gamma_2}}\sum_{n=0}^\infty \mathbb{E}[ D_{\bm u}^{(2n)} G(\bm{w})]\left[\sum_{k=n}^\infty\frac{\gamma_1^{n}}{k!}\left(\frac{\gamma_2}{2(\gamma_1+\gamma_2)}\right)^k\binom{2k}{2n}(2k -2n-1)!! \right].
    \]
    Now, due to Lemma \ref{lemma:ith-inf-series} we have the closed form expression for each infinite sum given a fixed $i$, and if we choose $y = \frac{\gamma_2}{2(\gamma_1+\gamma_2)}$:
    \begin{align*}    
        \sum_{k=n}^\infty\frac{\gamma_1^{n}}{k!}\left(\frac{\gamma_2}{2\gamma_1(\gamma_1+\gamma_2)}\right)^k\binom{2k}{2n}(2k -2n-1)!! &= \frac{\gamma_1^{n}}{n!}\left(\frac{\gamma_2}{2(\gamma_1+\gamma_2)}\right)^n\left(\frac{\gamma_1+\gamma_2}{\gamma_1}\right)^{(n+1/2)}\\
        &=\frac{1}{n!}\left(\frac{\gamma_2}{2}\right)^n\sqrt{\frac{\gamma_1+\gamma_2}{\gamma_1}}.
    \end{align*}
    
    Thus, substituting this back into the expansion gives
    \[
        \mathbb E_{\bm w \sim \mathcal N(0, \bm \Sigma)}[G(\bm w)] = \sum_{i=0}^\infty \frac{1}{n!}\left(\frac{\gamma_2}{2}\right)^n\mathbb{E}_{\bm w \sim \mathcal N(0, \gamma_1\bm I_d)}[ D_{\bm u}^{(2n)} G(\bm{w})]
    \]
    and the proof is complete.
\end{proof}
\subsection{Proof of Lemma \ref{lemma:T_R-op-bound}}\label{app:lemma-T_R-op-bound}
\begin{proof}
    
    For $t \in [0, 1]$, we define the parameterized family of covariance matrices
    \[
        \bm \Gamma(t) = A \bm I_d + t B \bm w^* (\bm w^*)^\top,
    \]
    and, if we let $\phi$ be the Gaussian density function induced by $\mathcal N(0, \frac{1}{d} \bm \Gamma(t))$, we define the scalar function $F:[0,1] \to \mathbb R$ given by
    \[
        F(t) = \int G(\bm w)\phi(\bm w, t)\mathrm d \bm w.
    \]
    
    Since $\phi \in C^\infty$, for every $n \geq 0$, the $n$-th derivative of $F$ is well-defined through the distributional property
    \[
        F^{(n)}(t) = \int G(\bm w)\frac{\mathrm d^n}{\mathrm d t^n}[\phi(\bm w, t)]\mathrm d \bm w,
    \]
    and $F \in C^\infty$. Expanding the Taylor series of $F$ around $t=0$ and evaluating at $1$, Taylor's Theorem guarantees the exact identity
    \[
        F(1) = F(0) + F'(0) + \frac{F''(\xi)}{2!}
    \]
    with $\xi \in [0, 1]$.
        
    By Price's Theorem \citep{Price1958, McMahonPrice1964}, we have the formal identity
    \[
        F^{(n)}(t) = \left(\frac{B}{2d}\right)^n\mathbb E_{\bm w \sim \mathcal N(0, \frac{1}{d}\bm \Gamma(t))}[D_{\bm w^*}^{(2n)} G(\bm w) ],
    \]
    therefore
    \[
        F''(\xi) = \left(\frac{B}{2d}\right)^2 \mathbb E_{\bm w \sim \mathcal N(0, \frac{1}{d}\bm \Gamma(\xi))}[D_{\bm w^*}^{(4)} G(\bm w) ],
    \]
    and if we show the fourth derivative under the expectation is bounded independently of the dimension, the multiplying factor will ensure the asymptotic profile of the result.

    Define the quantities
    \[
        h_1 = \langle\bm w, \bm x\rangle \quad h_2 = \langle\bm w, \bm x'\rangle
    \]
    which are joint Gaussian variables such that $(h_1, h_2) := \bm h \sim \mathcal{N}(0, \bm Q_\xi)$ where
    \[
        \bm Q_\xi = \frac{1}{d}\begin{bmatrix}
            \bm x^\top \bm \Gamma(\xi) \bm x & \bm x^\top \bm \Gamma(\xi) \bm x'\\
            \bm x^\top \bm \Gamma(\xi) \bm x' & (\bm x')^\top \bm \Gamma(\xi) \bm x'
        \end{bmatrix}.
    \]
    
    If $\phi_{\bm \Gamma(\xi)}(\bm w)$ denotes the Gaussian density from the measure $\mathcal N(0, \bm \Gamma(\xi))$, let $\phi(h_1, h_2)$ denote the probability density function of this 2D Gaussian.
    
    Then we can write
    \begin{align*}
        \mathbb E_{\bm w \sim \mathcal N(0, \frac{1}{d}\bm \Gamma(t))}[D_{\bm w^*}^{(4)} G(\bm w) ] &= \int D_{\bm w^*}^{(4)} G(\bm w)\phi_{\bm \Gamma(\xi)}(\bm w)\mathrm d \bm w \\
        &= \int D_{\bm w^*}^{(4)} [\sigma(\langle \bm w, \bm x\rangle)\sigma(\langle \bm w, \bm x'\rangle)] \phi_{\bm \Gamma(\xi)}(\bm w)\mathrm d \bm w \\
        &= \iint D_{\bm w^*}^{(4)} [\sigma(h_1)\sigma(h_2)] \phi(h_1, h_2) \mathrm d h_1 \mathrm dh_2.   
    \end{align*}
    
    Next, by the chain rule, applying the directional derivative $D_{\bm w^*} = \langle \bm w^* , \nabla_{\bm w}\rangle$ yields the two dimensional operator over $h_1$ and $h_2$
    \[
        D_{\bm w^*} = \left( \langle \bm w^*, \bm x\rangle  \frac{\partial}{\partial h_1} + \langle \bm w^*, \bm x'\rangle \frac{\partial}{\partial h_2} \right) := \langle \bm U, \nabla_{\bm h}\rangle,
    \]
    where we define
    \[
        \bm U = [\langle \bm w^*, \bm x\rangle, \langle \bm w^*, \bm x'\rangle]^{\top} \in \mathbb R^2.
    \]

    Since $\sigma$ is only Lipschitz, we have no information about its higher order derivatives. Remember that, for any locally integrable $f$ and any smooth compactly supported test function $\varphi$,
    \[
        \langle D_{w^*} f, \varphi \rangle = - \langle f, D_{w^*} \varphi \rangle .
    \]
    Iterating,
    \[
        \langle D_{w^*}^n f, \varphi \rangle = (-1)^n \langle f, D_{w^*}^n \varphi \rangle .
    \]
    Hence, we compute the expectation as a 2D integral and, integrating by parts, we transfer the derivative operator to the density function. 
    \[
        \iint \left[ D_{\bm w^*}^4 \Big( \sigma(h_1)\sigma(h_2) \Big) \right] \phi(h_1, h_2) \, \mathrm d h_1 \mathrm d h_2 =  \iint \sigma(h_1)\sigma(h_2) \left[ D_{\bm w^*}^4 \phi(h_1, h_2) \right] \, \mathrm d h_1 \mathrm d h_2.
    \]

    Now, the closed form of the density is given by
    \[
        \phi(h_1, h_2) = \phi(\bm h) = \frac{1}{2\pi\sqrt{ \det \bm Q(\xi)}} \exp\left( -\frac{1}{2} \bm h^T \bm Q_\xi^{-1} \bm h \right)
    \]
    and differentiating this with our notation gives
    \[
        \nabla_{\bm h}\phi(h_1, h_2) = \nabla_{\bm h}\phi(\bm h) = \phi(\bm h)\nabla_{\bm h}\left( -\frac{1}{2} \bm h^T Q_\xi^{-1} \bm h \right) = -\phi(\bm h) \bm Q_\xi^{-1}\bm h
    \]
    which implies the directional derivative is given by 
    \[
        D_{\bm w^*}\phi(h_1, h_2)  = -\langle \bm U, \bm Q_\xi^{-1}\bm h\rangle \phi(\bm h).
    \]

    Therefore, iterating through this 4 times we obtain 
    \[
        D_{\bm w^*}^4 \phi(h_1, h_2) = \left(\langle \bm U, \bm Q_\xi^{-1} \bm h\rangle^4 - 6\langle \bm U, \bm Q_\xi^{-1} \bm h\rangle^2 \langle \bm U, \bm Q_\xi^{-1} \bm U\rangle + 3\langle \bm U, \bm Q_\xi^{-1} \bm U\rangle^2  \right)\phi(\bm h),
    \]
    and, if we denote the polynomial multiplying $\phi$ by $\mathrm P_4(\bm U, \bm h, \bm Q_{\xi})$, we can see that
    \[
        |\mathrm P_4(\bm U, \bm h, \bm Q_{\xi})| \leq \|\bm U\|^4 \left(\|\bm Q^{-1}_\xi\|^4_{\text{op}}\|\bm h\|^4 + 6 \|\bm Q^{-1}_\xi\|^3_{\text{op}}\|\bm h\|^2 + 3\|\bm Q^{-1}_\xi\|^2_{\text{op}}\right).
    \]

    Using that $\sigma$ is $L_{\sigma}$-Lipschitz we have,
    \[
        |\sigma(t)| \le M_{L_{\sigma}}(1 + |t|)
    \]
    where $M_{L_{\sigma}} = \max(|\sigma(0)|, L_{\sigma})$, therefore the product is bounded by
    \[
        |\sigma(h_1)\sigma(h_2)| \le M_{L_{\sigma}}^2(1 + |h_1|)(1 + |h_2|) \leq M_{L_{\sigma}}^2(2+ \|\bm h\|^2).
    \]

    Thus, the absolute value of $R$ can be bounded with
    \begin{align*}
        |R(\bm x, \bm x')| &\leq \frac{B^2}{8d^2}\iint |\sigma(h_1)\sigma(h_2)| |\mathrm P_4(\bm U, \bm h, \bm Q_{\xi})| \phi(h_1, h_2) \, \mathrm d h_1 \mathrm d h_2\\
        & \leq \frac{B^2}{8d^2}\iint M_{L_{\sigma}}^2(2 + \|\bm h\|^2)  |\mathrm P_4(\bm U, \bm h, \bm Q_{\xi})| \phi(h_1, h_2) \, \mathrm d h_1 \mathrm d h_2\\
        &\leq \frac{B^2}{8d^2}P^*(\bm U, \bm Q_{\xi}),
    \end{align*}
    where we define
    \begin{align*}
        P^*(\bm U, \bm Q_{\xi}) &= M_L^2 \|\bm U\|^4 \cdot \mathbb{E}_{\bm h} \left[ (2 + \|\bm h\|^2)(\|\bm Q_\xi^{-1}\|_{\text{op}}^4\|\bm h\|^4 + 6\|\bm Q_\xi^{-1}\|_{\text{op}}^3\|\bm h\|^2 + 3\|\bm Q_\xi^{-1}\|_{\text{op}}^2) \right]\\
        &=M_L^2 \|\bm U\|^4 \cdot \mathbb{E}_{\bm h} \Big[ \|\bm Q_\xi^{-1}\|_{\text{op}}^4\|\bm h\|^6 + (2\|\bm Q_\xi^{-1}\|_{\text{op}}^4 + 6\|\bm Q_\xi^{-1}\|_{\text{op}}^3)\|\bm h\|^4 \\
        &\qquad +(3\|\bm Q_\xi^{-1}\|_{\text{op}}^2 + 12\|\bm Q_\xi^{-1}\|_{\text{op}}^3)\|\bm h\|^2 + 6\|\bm Q_\xi^{-1}\|_{\text{op}}^2 \Big].
    \end{align*}

    Next, we note that since $\bm h \sim \mathcal N(0, \bm Q_{\xi})$ we know
    \[
        \mathbb{E}[\|\bm{h}\|^{2p}] \leq (2p-1)!! \cdot (2\|\bm Q_\xi\|_{op})^p,
    \]
    therefore distributing the expectation operator and bounding every moment of $\|\bm h\|$ we have
    \begin{align}\label{eq:K'-bound}
        P^*(\bm U, \bm Q_{\xi}) &\leq M_{L_{\sigma}}^2 \|\bm U\|^4 \cdot \Big[ (120 \|\bm Q_\xi^{-1}\|_{\text{op}}^4\|\bm Q_\xi\|_{\text{op}}^3 + 12(2\|\bm Q_\xi^{-1}\|_{\text{op}}^4\| + 6\|\bm Q_\xi^{-1}\|_{\text{op}}^3)\|\bm Q_\xi\|_{\text{op}}^2 \nonumber \\
        &\qquad + 2(2\|\bm Q_\xi^{-1}\|_{\text{op}}^2 + 6\|\bm Q_\xi^{-1}\|_{\text{op}}^3)\|\bm Q_\xi\|_{\text{op}} +4\|\bm Q_\xi^{-1}\|_{\text{op}}^2 \Big].
    \end{align}

    To understand the operator norm of $T_R$ we bound its HS norm. Since the absolute value of $R$ is bounded by $P^*$, we have
    \[
        \|T_R\|^2_{\text{HS}} \leq \frac{B^4}{64d^4}\mathbb E_{\bm x, \bm x'}[P^*(\bm U, \bm Q_{\xi})^2] \, .
    \]
    To bound this value, we first note that, since $\bm w^*$ is a unit vector,
    \[
        \|\bm U\|^{2} = \langle \bm w^*, \bm x\rangle^2 + \langle \bm w^*, \bm x'\rangle^2 := z_1^2 + z_2^2
    \]
    where $z_1, z_2 \sim \mathcal N(0, 1)$ are under i.i.d. Gaussian input distribution and therefore
    \[
        \mathbb E_{\bm x, \bm x'}[\|\bm U\|^k] = \mathbb E[(z_1^2 + z_2^2)^{k/2}] = \mathcal O(1),
    \]
    because the expectation simplifies to a sum of one dimensional Gaussian moments, which are independent of the dimension $d$.
    
    Next, we bound the moments of operator norm $\bm Q_{\xi}$ and its inverse. If we define $\tilde {\bm X} = [\bm x, \bm x'] \in \mathbb R^{d \times 2}$ and let $\tilde {\bm W} = \tilde {\bm X}^\top \tilde {\bm X}$, we have that
    \[
        \bm Q_\xi = \frac{1}{d}\tilde{\bm X}^\top \bm \Gamma(\xi) \tilde{\bm X}
    \]
    and since $\bm \Gamma(\xi)$ is PSD and $A + \xi B > 0$, the eigenvalues of this matrix must be strictly positive. Let $\gamma_{\min} = A$ denote the minimum eigenvalue of $\bm \Gamma(\xi)$. Then, for the inverse matrix we have
    \[
        \bm Q_{\xi}^{-1} \preceq \frac{d}{\gamma_{\min}} \tilde{\bm W}^{-1}.
    \]
    which implies the following bounds on the moments
    \[
        \mathbb{E}_{\bm x, \bm x'} \left[ \|\bm Q_\xi^{-1}\|_{\text{op}}^k \right] \le \left(\frac{d}{\gamma_{\min}}\right)^k \mathbb{E}_{\bm W} \left[ \|\tilde{\bm W}^{-1}\|_{\text{op}}^k \right].
    \]

    Because $\tilde{\bm X}$ is a $d \times 2$ matrix of independent $\mathcal{N}(0, 1)$ entries, the inner product matrix follows a Wishart distribution: $\tilde{\bm W} \sim \mathcal{W}_2(\bm I_2, d)$ and, consequently, its inverse follows an Inverse-Wishart distribution $\tilde{\bm W}^{-1} \sim \mathcal{W}^{-1}_2(\bm I_2, d)$.

    For an $n \times n$ Inverse-Wishart matrix with $d$ degrees of freedom, the $k$-th moments are strictly finite and integrable as long as the degrees of freedom exceed the matrix size, i.e. $d > n + 2k - 1$. In our case, $\tilde{\bm W}$ is $2 \times 2$, therefore we have
    \[
       \mathbb{E}_{\tilde{\bm W}} \left[ \|\tilde{\bm W}^{-1}\|_{\text{op}}^k \right] = \mathcal O\left(\frac{1}{d^k}\right)
    \]
    as long as $d > 2k + 1.$

    As a consequence, since $A$ does not depend on $d$, for a fixed $k$, as the dimension grows, the operator norm is bounded independently of $d$, i.e.
    \[
        \mathbb{E}_{\bm x, \bm x'} \left[ \|\bm Q_\xi^{-1}\|_{\text{op}}^k \right] = \mathcal O(1)\, .
    \]
    To bound the non-inverse moments, we decompose the quadratic form directly
    \[
        \bm{Q}_\xi = \frac{1}{d} \tilde{\bm{X}}^\top (A\bm{I}_d + B\bm{w^*}(\bm{w^*})^\top) \tilde{\bm{X}} = \frac{A}{d} \tilde{\bm{W}} + \frac{B}{d} \bm{z}\bm{z}^\top
    \]
    where $\bm{z} = \tilde{\bm{X}}^\top \bm{w^*} \in \mathbb{R}^2$. Because the columns of $\tilde{\bm{X}}$ are standard Gaussian and $\bm{w^*}$ is a deterministic unit vector, the projection $\bm{z}$ is distributed as a standard 2-dimensional Gaussian, $\bm{z} \sim \mathcal{N}(0, \bm{I}_2)$.

    Applying the triangle inequality to the operator norm we obtain
    \[
        \|\bm{Q}_\xi\|_{\text{op}} \le \frac{A}{d} \|\widetilde{\bm{W}}\|_{\text{op}} + \frac{B}{d} \|\bm{z}\bm{z}^\top\|_{\text{op}} = \frac{A}{d} \|\widetilde{\bm{W}}\|_{\text{op}} + \frac{B}{d} \|\bm{z}\|_2^2\, .
    \]
    Raising the norm to the $k$-th power and applying the inequality $(a+b)^k \le 2^{k-1}(a^k + b^k)$ yields
    \[
        \mathbb{E}_{\bm x, \bm x'}[\|\bm{Q}_\xi\|_{\text{op}}^k] \le 2^{k-1} \left( \frac{A^k}{d^k} \mathbb{E}[\|\widetilde{\bm{W}}\|_{\text{op}}^k] + \left(\frac{B}{d}\right)^k \mathbb{E}[\|\bm{z}\|_2^{2k}] \right)
    \]
    Given $B = \Theta(\eta^2/d)$, with $\eta = \Theta(d^\zeta)$ and $\zeta \in [1/2, 1)$, we have $B = \Theta(d^{2\zeta - 1})$ and the coefficient scales as $\frac{B}{d} = \Theta(d^{2\zeta-2})$. Because $\zeta < 1$ by assumption, it follows that $2\zeta - 2 < 0$ and $\frac{B}{d} = o_d(1)$. Furthermore, because $\|\bm{z}\|_2^2$ follows a chi-squared distribution with 2 degrees of freedom, its moments $\mathbb{E}[\|\bm{z}\|_2^{2k}]$ are constants dependent only on $k$, bounded by $\mathcal{O}(1)$. Hence, we also have
    \[
        \mathbb{E}_{\bm x, \bm x'} \left[ \|\bm Q_\xi\|_{\text{op}}^k \right] = \mathcal O(1)\, .
    \]
    
    Furthermore, for fixed exponents $k_1, k_2 \geq 0$, by Cauchy-Schwarz we are able to obtain
    \[
        \mathbb E_{\bm x, \bm x'}\left[\|\bm Q_{\xi}^{-1}\|_{\text{op}}^{k_1}\|\bm Q_{\xi}\|_{\text{op}}^{k_2}\right] = \mathcal O(1).
    \]    
    
    Lastly, notice that if we square the bound in \ref{eq:K'-bound}, we obtain a combination of powers of $\|\bm U\|, \|\bm Q_{\xi}^{-1}\|_{\text{op}}$ and $\|\bm Q_{\xi}\|_{\text{op}}$. Applying Holder's inequality to decouple the expectations on each of the terms of the inequality and collecting all the bounds we have on the moments of these quantities we get
    \[
        \mathbb E_{\bm x, \bm x'}[P^*(\bm U, \bm Q_\xi)^2] = \mathcal O(1).
    \]
    
    Thus, we have the following result for the HS norm
    \begin{align*}
        \|T_R\|_{\text{HS}}^2 &= \iint |R(\bm x, \bm x')|^2\mathrm d \rho_{\mathcal X}(\bm x)\mathrm d \rho_{\mathcal X}(\bm x')\\
        &\leq  \frac{B^4}{64d^4} \mathbb E_{\bm x, \bm x'}[P^*(\bm U, \bm Q_{\xi})^2]\\
        &=\mathcal O\left(\frac{B^4}{d^4}\right)
    \end{align*}
    which implies
    \[
        \| T_R\|_{\text{op}} \leq \|T_R\|_{\text{HS}} = \mathcal O\left(\frac{B^2}{d^2}\right)
    \]
    and the proof is completed.
\end{proof}
\subsection{Proof of Lemma \ref{lemma:iso-eigenfn}}
\begin{proof}
    We write the ReLU kernel as its closed-form
    \[
        k_0(\bm x, \bm x') = \frac{\|\bm x\|\|\bm x'\|}{2 \pi d}\left[\gamma(\pi -\arccos (\gamma)) +\sqrt{1 - \gamma^2}\right] := \frac{r r'}{2 \pi d}J_1(\gamma)
    \]
    where $r = \|\bm x\|$ and $r' = \|\bm x'\|$ and $J_1$ is the standard first-order arc cosine kernel.
    
    In this context, the function $J_1(\gamma)$ is a dot-product kernel on the unit sphere and, by the Funk-Hencke's theorem, $J_1$ can be expanded into the spherical harmonics basis. Therefore, if we denote $\bm \omega = \frac{\bm x}{\|\bm x\|}$ and $\bm \omega' = \frac{\bm x'}{\|\bm x'\|}$, we have
    \[
        k_0(\bm x, \bm x') = \frac{r r'}{2 \pi d} \sum_{k=0}^{\infty} \lambda_k \sum_{m} Y_{k,m}(\bm \omega) Y_{k,m}(\bm \omega').
    \]
    By moving the $r$ and $r'$ inside the sum, we get the exact Mercer expansion over the Gaussian space
    \[
        k_0(\bm x, \bm x') = \frac{1}{2 \pi}\sum_{k=0}^{\infty} \sum_{m} \lambda_k \left( \frac{r}{\sqrt {d}} Y_{k,m}(\bm \omega) \right) \left( \frac{r'}{\sqrt{d}} Y_{k,m}(\bm \omega') \right).
    \]
    Note that these functions are indeed normalized and pairwise orthogonal since the inner product of two elements of this family can always be separated into an integral acting only on the radii and an integral acting only on the angles. Because the spherical harmonics are pairwise orthogonal, the angular integral will enforce the orthogonality conditions.
    
    Therefore, since this expansion fully constructs the kernel, the only eigenfunctions with non-zero eigenvalues are of the form $\frac{r}{\sqrt d} Y_{k,m}(\bm \omega)$.
\end{proof}
\subsection{Proof of Lemma \ref{lemma:approx-first-order-op}}\label{app:lemma-first-order-op}
\begin{proof}
    For this proof we write
    \begin{align*}
        T_1 f(\bm x) &= AT_0 f(\bm x) + \frac{B}{2d} T_{S} f(\bm x)+  T_R f(\bm x)\\
        &= AT_0 f(\bm x) + \frac{B}{2d} \left(T_{(1*)} f(\bm x) + T_{(2*)} f(\bm x)\right) +  T_R f(\bm x)
    \end{align*}
    and, as we have seen by Lemma \ref{lemma:T_R-op-bound}, $\|T_Rf\|_{L^2(\rho_\mathcal X)} = o_d\left(\frac{|B|}{d}\right)$. Thus, we will study the terms $T_{(1*)}f$ and $T_{(2*)} f$.
    
    {\bf Analysis of the first term:}
    We start by analyzing
    \[
        T_{(1*)}f(\bm x) =\langle \bm x, \bm w^*\rangle\int \frac{(\pi - \theta_{\bm x, \bm x'})}{\pi}\langle \bm x', \bm w^*\rangle f(\bm x')\mathrm d\rho_{\mathcal X}(\bm x'),
    \]    
    and focus on the term under the integral. Note that $f$ and $\langle \cdot, \bm w^*\rangle$ are normalized functions such that $\|f\|_{L^2(\rho_{\mathcal X})} = \|\langle \cdot, \bm w^*\rangle\|_{L^2(\rho_{\mathcal X})} = 1.$
    
    If we consider the set 
    \[
        \mathcal A = \left\{\bm x' \in \mathbb R^d: \left|\theta_{\bm x, \bm x'} - \frac{\pi}{2}\right|< \epsilon \right\}, \epsilon \in \left(0, \frac{\pi}{2}\right),
    \]
    we split the integral over the entire space into the sum of integrals over $\mathcal A$ and $\mathcal A^c$.

    By Lemma \ref{lemma:integral-concentration-1}, if we let $g = \langle \cdot, \bm w^*\rangle f$, for a given $\epsilon > 0$, the concentration of the Gaussian measure implies
    \[
        \left|\int_{\mathcal A^c} \frac{(\pi - \theta_{\bm x, \bm x'})}{\pi} \langle \bm x', \bm w^*\rangle f(\bm x') \mathrm d\rho_{\mathcal X}(\bm x')\right| \leq \sqrt 2e^{-c_1d\epsilon^2/2}\|g\|_{L^2(\rho_{\mathcal X})}
    \]
    for an absolute constant $c_1 > 0$, provided $g \in L^2(\rho_{\mathcal X})$.
    We can see that
    \[
        \int |\langle \bm x', \bm w^*\rangle f(\bm x')|^2\mathrm d \rho_{\mathcal X}(\bm x')  = \int |\|\bm x'\|^2 (\cos\theta_{\bm x', \bm w^*})^2 f(\bm x')|^2\mathrm d \rho_{\mathcal X}(\bm x') \leq \int \|\bm x'\|^2 |f(\bm x')|^2\mathrm d \rho_{\mathcal X}(\bm x').
    \]
    Now, due to Lemma \ref{lemma:iso-eigenfn}, the orthonormal eigenbasis is of the separated form
    \[
        \psi_{k,m}(\bm{x}) = \frac{\|\bm x\|}{\sqrt d}Y_{k,m}(\bm \omega),
    \]
    a purely radial function governing the magnitude and $Y_{k,m}(\bm \omega)$ is a spherical harmonic of degree $k$, governing the direction $\bm \omega = \frac{\bm x}{\|\bm x\|}$.

    Although the ReLU kernel is not universal on the Gaussian function space, by assumption we can write $f$ as a linear combination of elements of the eigenbasis of the kernel
    \[
        f(\bm x) = \sum_{k,m \geq 0} c_{k, m}\psi_{k, m}(\bm x)
    \]
    therefore
    \begin{align*}
        |f(\bm{x})|^2 &= \left( \sum_{k,m} c_{k,m} \frac{\|\bm{x}\|}{\sqrt{d}} Y_{k,m}(\bm \omega) \right) \left( \sum_{k',m'} \overline{c_{k',m'}} \frac{\|\bm{x}\|}{\sqrt{d}} \overline{Y_{k',m'}(\bm \omega)} \right) \\
        &= \frac{\|\bm{x}\|^2}{d} \sum_{k,m} \sum_{k',m'} c_{k,m} \overline{c_{k',m'}} Y_{k,m}(\bm \omega) \overline{Y_{k',m'}(\bm \omega)}
    \end{align*}

    Decomposing $f$ into this form inside of the integral yields the separation
    \begin{align*}
        \int \|\bm x'\|^2 |f(\bm x')|^2\mathrm d \rho_{\mathcal X}(\bm x') &= \sum_{k,m} \sum_{k', m'} c_{k,m} c_{k',m'} \left(\frac{1}{d}\int_{\mathbb R}| r^2|^2 \mathrm \phi(r)dr\right) \left(\int_{\Omega}Y_{k,m}( \omega)\overline{Y_{k',m'}(\omega)}\mathrm d\Omega\right),\\
        &= \sum_{k,m}|c_{k,m}|^2\left(\frac{1}{d}\int_{\mathbb R}| r^2|^2 \mathrm \phi(r)dr\right)\\
        &=\left(\frac{1}{d}\int_{\mathbb R}| r^2|^2 \mathrm \phi(r)dr\right)
    \end{align*}
    where we denote by $\phi(r)$ the Gaussian density associated with the radial decomposition of the norm and we use Parseval's identity to simplify
    \[
        \sum_{k,m}|c_{k,m}|^2 = \|f\|_{L^2(\rho_{\mathcal X})} = 1.
    \]
    
    The second term of the product is simply the norm of the spherical harmonics, which evaluates to $1$ since it is a normalized function. The first term is the integral of a fourth Gaussian moment so we solve analytically 
    \[
        \frac{1}{d}\int r^4\phi(r)\mathrm dr = \frac{1}{d}\mathbb E[\|\bm x\|^4] = \frac{d(d+2)}{d} = d+2.
    \]
    Therefore, if we define
    \[
        E_1^{(1)}(\bm x) := \langle \bm x, \bm w^*\rangle \int_{\mathcal A^c} \frac{(\pi - \theta_{\bm x, \bm x'})}{\pi}\langle \bm x', \bm w^*\rangle f(\bm x')\mathrm d\rho_{\mathcal X}(\bm x').
    \]
    we have
    \[
        \left|E_1^{(1)}(\bm x)\right| = \mathcal O(de^{-d\epsilon^2})|\langle\bm x, \bm w^*\rangle|
    \]
    and we can see the $L^2$ norm respects
    \[
        \|E_1^{(1)}\|_{L^2(\rho_{\mathcal X})} = \mathcal O(de^{-d\epsilon^2}).
    \]
    
    Moreover, inside of $\mathcal{A}$ we write
    \begin{align*}
        &\langle\bm x, \bm w^*\rangle\int_{\mathcal{A}} \frac{(\pi - \theta_{\bm x, \bm x'})}{\pi}\langle \bm x', \bm w^*\rangle f(\bm x')\mathrm d\rho_{\mathcal X}(\bm x') \\
        & \quad=\frac{\langle\bm x, \bm w^*\rangle}{2} \left[ \int_{\mathcal{A}} \langle \bm x', \bm w^*\rangle f(\bm x')\mathrm d\rho_{\mathcal X}(\bm x') - \int_{\mathcal{A}}  \frac{\theta_{\bm x, \bm x'} - \pi/2}{\pi}\langle \bm x', \bm w^*\rangle f(\bm x')\mathrm d\rho_{\mathcal X}(\bm x')\right]\\
        &\quad := \frac{\langle\bm x, \bm w^*\rangle}{2}\int_{\mathcal{A}} \langle \bm x', \bm w^*\rangle f(\bm x')\mathrm d\rho_{\mathcal X}(\bm x') + E_2^{(1)}(\bm x).
    \end{align*}
    
    Since inside of $\mathcal{A}$ the difference between the angles is bounded by $\epsilon$, using Cauchy-Schwarz we can bound $E_2^{(1)}(\bm x)$ by
    \begin{align*}
        |E_2(\bm x)^{(1)}| &= \left|\langle\bm x, \bm w^*\rangle \int_{\mathcal{A}}  \frac{\theta_{\bm x, \bm x'} - \pi/2}{\pi}\langle \bm x', \bm w^*\rangle f(\bm x')\mathrm d\rho_{\mathcal X}(\bm x')\right| \leq \epsilon|\langle\bm x, \bm w^*\rangle| \int_{\mathcal{A}} |\langle \bm x', \bm w^*\rangle f(\bm x')|\mathrm d\rho_{\mathcal X}(\bm x') \\
        &\leq \epsilon |\langle\bm x, \bm w^*\rangle| \| \langle \cdot, \bm w^*\rangle \|_{L^2(\rho_{\mathcal X})}\| f\|_{L^2(\rho_{\mathcal X})} \leq \epsilon|\langle\bm x, \bm w^*\rangle|,        
    \end{align*}
    because $\langle \cdot, \bm w^*\rangle$ and $f$ are normalized in $L^2(\rho_{\mathcal X})$. Again, this implies the $L^2$ norm respects
    \[
        \|E_2^{(1)}\|_{L^2(\rho_{\mathcal X})} = \mathcal O(\epsilon).
    \]
    
    This gives the following expression for the first term
    \[
        \langle \bm x, \bm w^*\rangle\int \frac{(\pi - \theta_{\bm x, \bm x'})}{\pi}\langle \bm x', \bm w^*\rangle f(\bm x')\mathrm d\rho_{\mathcal X}(\bm x') = \frac{\langle \bm x, \bm w^*\rangle}{2}\int_{\mathcal{A}} \langle \bm x', \bm w^*\rangle f(\bm x')\mathrm d\rho_{\mathcal X}(\bm x') + E_1^{(1)}(\bm x) + E_2^{(1)}(\bm x)
    \]
    
    We can calculate the integral over $\mathcal{A}$ by writing
    \[
        \int_{\mathcal{A}} \langle \bm x', \bm w^*\rangle f(\bm x')\mathrm d\rho_{\mathcal X}(\bm x') = \int \langle \bm x', \bm w^*\rangle f(\bm x')\mathrm d\rho_{\mathcal X}(\bm x') - \int_{\mathcal{A}^c} \langle \bm x', \bm w^*\rangle f(\bm x')\mathrm d\rho_{\mathcal X}(\bm x')
    \]
    and by using the same argument as before, the integral over $\mathcal{A}^c$ induces a function $E_3^{(1)}$ and
    \begin{align*}
        \langle \bm x, \bm w^*\rangle\int_{\mathcal{A}} \langle \bm x', \bm w^*\rangle f(\bm x')\mathrm d\rho_{\mathcal X}(\bm x') &= \langle \bm x, \bm w^*\rangle\int \langle \bm x', \bm w^*\rangle f(\bm x')\mathrm d\rho_{\mathcal X}(\bm x') +  E_3^{(1)}(\bm x)\\
        &=\langle \bm x, \bm w^*\rangle\langle\langle \cdot, \bm w^*\rangle, f\rangle +  E_3^{(1)}(\bm x).
    \end{align*}
    such that $\|E_3^{(1)}\|_{L^2(\rho_{\mathcal X})} = \mathcal O(de^{-d\epsilon^2})$.
    
    Finally, the first term can be written as
    \begin{align*}
        T_{(1*)}f(\bm x) &= \langle \bm x, \bm w^*\rangle\int \frac{(\pi - \theta_{\bm x, \bm x'})}{\pi}\langle \bm x', \bm w^*\rangle f(\bm x')\mathrm d\rho_{\mathcal X}(\bm x') \\
        &= \frac{\langle \bm x, \bm w^*\rangle}{2}\langle\langle \cdot, \bm w^*\rangle,f\rangle + E_1^{(1)}(\bm x) + E_2^{(1)}(\bm x) + E_3^{(1)}(\bm x)\\
        &:= T_{S}^{(1*)}(\bm x)+ E_1^{(1)}(\bm x) + E_2^{(1)}(\bm x) + E_3^{(1)}(\bm x)
    \end{align*}
    and choosing $\epsilon = d^{-1/3}$ we have the bound on the $L^2$ difference by noticing
    \begin{align*}
        \|T_{(1*)}f - T_{S}^{(1*)}f\|_{L^2(\rho_{\mathcal X})} &\leq \left(\|E_1^{(1)}\|_{L^2(\rho_{\mathcal X})} + \|E_2^{(1)}\|_{L^2(\rho_{\mathcal X})} +\|E_3^{(1)}\|_{L^2(\rho_{\mathcal X})}\right) \\
        &\leq \left(\mathcal O(de^{-d^{1/3}})+\mathcal O(d^{-1/3}) + \mathcal O(de^{-d^{1/3}})\right)\\
        &= o_d(1).
    \end{align*} 

    {\bf Analysis of the second term:}
    Now we turn to the second term
    \[
        T_{(2*)}f(\bm x) = \int\frac{\sin \theta_{\bm x, \bm x'}}{2\pi}\left(\frac{\|\bm x'\|}{\|\bm x\|}\langle \bm x, \bm w^*\rangle^2 + \frac{\|\bm x\|}{\|\bm x'\|}\langle \bm x', \bm w^*\rangle^2\right)f(\bm x')\mathrm d\rho_{\mathcal X}(\bm x'),
    \]
    and apply the same treatment.

    Considering the same set $\mathcal A$, we split the integral over the entire space into $\mathcal A$ and $\mathcal A^c$, and first we show the following function is negligible
    \[
        E^{(2)}_1 (\bm x) := \frac{\langle \bm x, \bm w^*\rangle^2}{\|\bm x\|}\int_{\mathcal A^c} \frac{\sin \theta_{\bm x, \bm x'}}{2\pi}\|\bm x'\| f(\bm x')\mathrm d\rho_{\mathcal X}(\bm x') + \|\bm x\|\int_{\mathcal A^c} \frac{\sin \theta_{\bm x, \bm x'}}{2\pi}\frac{\langle \bm x', \bm w^*\rangle^2}{\|\bm x'\|} f(\bm x')\mathrm d\rho_{\mathcal X}(\bm x')
    \]

    Looking at the integral in the second term, again by Lemma  \ref{lemma:integral-concentration-1}, if we define $g = \frac{\langle \cdot, \bm w^*\rangle ^2}{\|\bm \cdot\|}f$ we have that
    \[
        \left|\int_{\mathcal A^c} \frac{\sin \theta_{\bm x, \bm x'}}{2\pi} {\frac{\langle \bm x', \bm w^*\rangle ^2}{\|\bm x'\|} f(\bm x') }\mathrm d \rho_{\mathcal X}(\bm x')\right| \leq \sqrt 2e^{-c_2d\epsilon^2/2}\|g\|_{L^2(\rho_{\mathcal X})},
    \]
    for an absolute constant $c_2 > 0$, as long as $g \in L^2(\rho_{\mathcal X}).$ Notice that, outside of a set with zero measure, we have
    \[
        \langle \bm x', \bm w^*\rangle = \|\bm x'\| \cos \theta_{\bm x', \bm w^*} \implies \frac{\langle \bm x', \bm w^*\rangle ^2}{\|\bm x'\|} = \cos \theta_{\bm x', \bm w^*} \langle \bm x', \bm w^*\rangle
    \]
    therefore calculating the $L^2(\rho_\mathcal X)$ norm of $g$ gives:
    \[
        \int \left|\frac{\langle \bm x', \bm w^*\rangle ^2}{\|\bm x'\|} f(\bm x')\right|^2 \mathrm d \rho_{\mathcal X}(\bm x')= \int \left| \cos \theta_{\bm x, \bm w^*} \langle \bm x', \bm w^*\rangle f(\bm x')\right|^2 \mathrm d \rho_{\mathcal X}(\bm x') \leq \int \left| \langle \bm x', \bm w^*\rangle f(\bm x')\right|^2 \mathrm d \rho_{\mathcal X}(\bm x').
    \]
    Recall that in the analysis of the first term we showed that $\|\cdot \|f \in L^2(\rho_{\mathcal X})$, and since this function dominates $g$, we must have $g \in L^2(\rho_{\mathcal X})$ and the concentration bound holds.
    
    Similarly, if we let $g' = \|\cdot \| f$, we already showed $g' \in L^2(\rho_{\mathcal X})$, and using the lemma again implies
    \[
        \left|\int_{\mathcal A^c} \frac{\sin \theta_{\bm x, \bm x'}}{2\pi} \|\bm x'\|f(\bm x')\mathrm d \rho_{\mathcal X}(\bm x')\right| \leq \sqrt 2e^{-c_2d\epsilon^2/2}\|g'\|_{L^2(\rho_{\mathcal X})}.
    \]
    
    Thus, outside of a set of measure zero, we have the following bound
    \[
        |E_1^{(2)}(\bm x)| \leq \mathcal O(de^{-d \epsilon^2})\left(\|\bm x\| + \frac{\langle \bm x, \bm w^*\rangle ^2}{\|\bm x\|}\right)
    \]
    which implies the $L^2(\rho_{\mathcal X})$ norm is bounded by
    \[
        \|E_1^{(2)}\|_{L^2(\rho_{\mathcal X})} \leq  \mathcal O(de^{-d \epsilon^2})\left(\Big\|\|\cdot\|\Big\|_{L^2} +\left\|\frac{\langle \cdot, \bm w^*\rangle ^2}{\|\cdot\|}\right\|_{L^2} \right) = \mathcal O(d^{3/2} e^{-d\epsilon^2}).
    \]
    
    Next, inside of $\mathcal{A}$,  we have
    \begin{align*}
        &\int_{\mathcal A} \frac{\sin \theta_{\bm x, \bm x'}}{2\pi}\left(\frac{\|\bm x'\|}{\|\bm x\|}\langle \bm x, \bm w^*\rangle^2 + \frac{\|\bm x\|}{\|\bm x'\|}\langle \bm x', \bm w^*\rangle^2\right)f(\bm x')\mathrm d\rho_{\mathcal X}(\bm x') \\
        &\quad := \frac{1}{2 \pi}\int_{\mathcal A}\left(\frac{\|\bm x'\|}{\|\bm x\|}\langle \bm x, \bm w^*\rangle^2 + \frac{\|\bm x\|}{\|\bm x'\|}\langle \bm x', \bm w^*\rangle^2\right)f(\bm x')\mathrm d\rho_{\mathcal X}(\bm x') + E_2^{(2)}(\bm x).
    \end{align*}
    where we define
    \[
        E_2^{(2)}(\bm x) = \int_{\mathcal A} \left[\frac{\sin \theta_{\bm x, \bm x'}-1}{2\pi}\right] \left(\frac{\|\bm x'\|}{\|\bm x\|}\langle \bm x, \bm w^*\rangle^2 + \frac{\|\bm x\|}{\|\bm x'\|}\langle \bm x', \bm w^*\rangle^2\right)f(\bm x')\mathrm d\rho_{\mathcal X}(\bm x').
    \]

    Note that
    \[
        |\sin \theta - 1| = 1- \sin \theta  = 1 - \cos(\theta - \pi/2),
    \]
    and if we expand $\cos x$ using Taylor's theorem around $0$, noting that $\cos'' x = -\cos x$, we obtain
    \[
        \cos x = 1  + \frac{\cos''(c)}{2}x^2 \implies 1 - \cos x = \frac{\cos(c)}{2}x^2
    \]
    for some $c \in [0, 2\pi]$. Since $\cos \theta \leq 1$ for all $\theta \in [0, 2\pi]$ we have the identity
    \[
        1 - \cos x \leq \frac{x^2}{2}.
    \]
    Thus, letting $x = \theta - \pi/2$ we get
    \[
        |\sin \theta - 1| \leq \frac{1}{2}\left(\theta - \frac{\pi}{2}\right)^2, \forall \theta \in [0, 2\pi],
    \]
    and, since the difference between the angles is bounded by $\epsilon$ inside of $\mathcal A$, using Cauchy-Schwarz again we can obtain the following bound for $E_2^{(2)}$:
    \begin{align*}
        |E_2^{(2)}(\bm x)| &\leq \frac{\epsilon^2}{4\pi}\sqrt{\int \left(\frac{\|\bm x'\|}{\|\bm x\|}\langle \bm x, \bm w^*\rangle^2 + \frac{\|\bm x\|}{\|\bm x'\|}\langle \bm x', \bm w^*\rangle^2\right)^2\mathrm d\rho_{\mathcal X}(\bm x')} \sqrt{\int|f(\bm x')|^2\mathrm d\rho_{\mathcal X}(\bm x')}\\
        &\leq \frac{\epsilon^2}{4 \pi}\left(\Big\|\|\cdot\|\Big\|_{L^2(\rho_{\mathcal X})}\frac{\langle \bm x, \bm w^*\rangle^2}{\|\bm x\|} + \|\bm x \|\left\|\frac{\langle \cdot, \bm w^*\rangle^2}{\|\cdot\|}\right\|_{L^2(\rho_{\mathcal X})}\right)
    \end{align*}
    almost everywhere.

    Taking the $L^2(\rho_{\mathcal X})$ norm gives
    \[
        \|E_2^{(2)}(\bm x)\|_{L^2(\rho_{\mathcal X})} \leq \frac{2\epsilon^2}{4\pi} \Big\|\|\cdot\|\Big\|_{L^2(\rho_{\mathcal X})}\left\|\frac{\langle \cdot, \bm w^*\rangle^2}{\|\cdot\|}\right\|_{L^2(\rho_{\mathcal X})} \leq \frac{2\epsilon^2 \sqrt{d}}{4\pi}.
    \]

    Lastly, we are left with
    \[
        T_{(2*)}f(\bm x) = \frac{1}{2\pi }\int_{\mathcal A} \left(\frac{\|\bm x'\|}{\|\bm x\|}\langle \bm x, \bm w^*\rangle^2 + \frac{\|\bm x\|}{\|\bm x'\|}\langle \bm x', \bm w^*\rangle^2\right)f(\bm x')\mathrm d\rho_{\mathcal X}(\bm x') + E_1^{(2)}(\bm x) + E_2^{(2)}(\bm x).
    \]
    
    Following the same argument as before we can write the integral over $\mathcal A$ as the integral over the whole space plus a term $E_3^{(2)}(\bm x)$ such that 
    \[
        |E_3^{(2)}(\bm x)| \leq \mathcal O(de^{-d \epsilon^2})\left(\|\bm x\| + \frac{\langle \bm x, \bm w^*\rangle ^2}{\|\bm x\|}\right)
    \]
    almost everywhere, and thus $\|E^{(3)}\|_{L^2(\rho_{\mathcal X})} =  \mathcal O(d^{3/2}e^{-d \epsilon^2})$.

    Therefore, we can write
    \begin{align*}
        T_{(2*)}f(\bm x) &= \frac{1}{2\pi}\int \left(\frac{\|\bm x'\|}{\|\bm x\|}\langle \bm x, \bm w^*\rangle^2 + \frac{\|\bm x\|}{\|\bm x'\|}\langle \bm x', \bm w^*\rangle^2\right)f(\bm x')\mathrm d\rho_{\mathcal X}(\bm x') + E_1^{(2)}(\bm x) + E_2^{(2)}(\bm x) + E_3^{(2)}(\bm x)\\
        & = \frac{1}{2\pi}\frac{\langle \bm x, \bm w^*\rangle^2}{\|\bm x\|} \langle \|\cdot\|, f\rangle + \frac{1}{2\pi}\|\bm x \| \left\langle \frac{\langle \cdot, \bm w^*\rangle^2}{\|\cdot\|}, f \right\rangle + E_1^{(2)}(\bm x) + E_2^{(2)}(\bm x) + E_3^{(2)}(\bm x)\\
        &:= T_{S}^{(2*)}(\bm x) + E_1^{(2)}(\bm x) + E_2^{(2)}(\bm x) + E_3^{(2)}(\bm x).
    \end{align*}

    Finally, since we chose $\epsilon = d^{-1/3}$, we have
    \begin{align*}
        \|T_{(2*)}f - T_{S}^{(2*)}f\|_{L^2(\rho_{\mathcal X})} &\leq \left(\|E_1^{(2)}\|_{L^2(\rho_{\mathcal X})} + \|E_2^{(2)}\|_{L^2(\rho_{\mathcal X})} + \|E_3^{(2)}\|_{L^2(\rho_{\mathcal X})}\right) \\
        &\leq \left(\mathcal O(d^{3/2}e^{-d^{1/3}})+ \mathcal O(d^{-1/6}) + \mathcal O(d^{3/2}e^{-d^{1/3}})\right)\\
        &= o_d(1).
    \end{align*}
\end{proof}

\subsection{Proof of Theorem \ref{thm:lin-eigenfn}}\label{app:thm-lin-eigenfn}

\begin{proof}

    We show the linear eigenfunction $\psi = \langle \cdot ,\bm v\rangle$, $\bm v \in \mathbb R^d$, continues to be an eigenfunction of $T_1$ if $\bm v = \bm w^*$ or $\bm v \perp \bm w^*$. 
    
    We use Fubini's Theorem to write
    \[
        T_1 \psi(\bm x) = \int k_1(\bm x, \bm x')\psi(\bm x')\mathrm d \rho_{\mathcal X}(\bm x') = \mathbb E_{\bm w \sim \mathcal N(0, \frac{1}{d} \bm \Gamma)}[\sigma(\langle \bm w, \bm x\rangle) \mathbb E_{\bm x' \sim \mathcal N(0, \bm I_d)}[\sigma(\langle \bm w, \bm x'\rangle)\psi(\bm x')]]
    \]
    and analytically solve these integrals. Looking at the inner expectation, and applying Stein's Lemma, we have
    \[
        \mathbb E_{\bm x' \sim \mathcal N(0, \bm I_d)}[\sigma(\langle \bm w, \bm x'\rangle)\psi(\bm x')] = \mathbb E_{\bm x' \sim \mathcal N(0, \bm I_d)}[\sigma(\langle \bm w, \bm x'\rangle)\langle \bm x', \bm v\rangle] = \mathbb E_{\bm x' \sim \mathcal N(0, \bm I_d)}[\langle D_{\bm x'}\left\{ \sigma(\langle \bm w, \bm x'\rangle) \right\}, \bm v\rangle)].
    \]

    Now, since $\sigma'(t) = \bm 1_{\{t \geq 0\}}$ almost everywhere, we have
    \[
        \mathbb E_{\bm x' \sim \mathcal N(0, \bm I_d)}[D_{\bm x'} \left\{ \sigma(\langle \bm w, \bm x'\rangle) \right\}] = \mathbb E_{\bm x' \sim \mathcal N(0, \bm I_d)}[\langle \bm w, \bm v\rangle \bm 1_{\{\langle \bm w, \bm x'\rangle \geq 0\}}] = \frac{1}{2}\langle \bm w, \bm v\rangle.
    \]

    Going back to the outer integral
    \[
        T_1\psi(\bm x) = \frac{1}{2}\mathbb E_{\bm w \sim \mathcal N(0, \frac{1}{d}\bm \Gamma)}[\sigma(\langle \bm w, \bm x\rangle) \langle \bm w, \bm v\rangle],
    \]
    and using Stein's Lemma and the same argument again
    \[
        \mathbb E_{\bm w \sim \mathcal N(0, \frac{1}{d}\bm \Gamma)}[\sigma(\langle \bm w, \bm x\rangle) \langle \bm w, \bm v\rangle] =  \frac{1}{d}\mathbb E_{\bm w \sim \mathcal N(0, \frac{1}{d}\bm \Gamma)}[\langle\bm \Gamma \nabla_{\bm w}\sigma(\langle \bm w, \bm x\rangle), \bm v\rangle] = \frac{1}{2{d}}\langle \bm \Gamma \bm x, \bm v\rangle, 
    \]
    therefore
    \[
        T_1\psi(\bm x) = \frac{1}{4d}\langle \bm \Gamma \bm x, \bm v\rangle.
    \]

    Note that the same calculation shows that
    \[
        T_0 \psi(\bm x) = \frac{1}{4d}\langle \bm x, \bm v\rangle.
    \]
    
    As a consequence, if we let $\bm v = \bm w^*$, we have
    \[
        \frac{1}{4d}\langle \bm \Gamma \bm x, \bm v\rangle = \frac{1}{4d}\langle \bm \Gamma \bm x, \bm w^*\rangle = \left(\frac{A}{4d} + \frac{B}{4d}\right)\langle\bm x, \bm w^*\rangle,
    \]
        
    And, on the other hand, if $\bm v \perp \bm w^*$
    \[
        \frac{1}{4d}\langle \bm \Gamma \bm x, \bm v\rangle = \frac{A}{4d}\langle\bm x, \bm v\rangle.        
    \]

\end{proof}

\subsection{Proof of Theorem \ref{thm:approx-top-eigenfn}}\label{app:thm-approx-top-eigenfn}
\begin{proof}

    We have from Lemma \ref{lemma:approx-first-order-op} that, for a given $f$ that can be represented by the eigenfunctions of the kernel, the action of operator $S$ can be approximated by
    \[
        T_Sf = T_S^{{(1*)}}f + T_S^{{(2*)}}f + E.
    \]
    If we recall the closed form of the first term
    \[
        T_S^{{(1*)}}f(\bm x) = \frac{\langle \bm x, \bm w^*\rangle}{2}\langle\langle \cdot, \bm w^*\rangle,f\rangle, 
    \]
    we note the inner product term implies that any function that is orthogonal to the linear function must not interact with this term. 
    
    Also note that Theorem \ref{thm:lin-eigenfn} shows the linear function is a true eigenfunction of $T_1$, thus if a different function happens to be the top eigenfunction of the operator, then they must be orthogonal to each other. 
    
    Therefore, to understand the action
    \[
        T_1 f(\bm x) = AT_0f(\bm x) + \frac{B}{2d}T_S f(\bm x) + T_Rf(\bm x)
    \]
    on any function other than the linear function it suffices to solve the eigenvalue problem for $AT_0 + \frac{B}{2d}T_{S}^{(2*)}$.

    {\bf Solving the eigenvalue problem for $AT_0 + \frac{B}{2d}T_{S}^{(2*)}$:}
    Recall that
    \[
        T_{S}^{(2*)}f (\bm x) = \frac{1}{2\pi}\left( \|\bm x \| \left\langle \frac{\langle \cdot, \bm w^*\rangle^2}{\|\cdot\|}, f \right\rangle + \frac{\langle \bm x, \bm w^*\rangle^2}{\|\bm x\|} \langle \|\cdot\|, f\rangle \right)
    \]
    and if we define $f_1(\bm x) := \|\bm x\|$ and $f_2(\bm x) := \frac{\langle \bm x, \bm w^*\rangle^2}{\|\bm x\|}$ we have
    \[
        T_{S}^{(2*)}f = \frac{1}{2\pi}\Big( f_1\langle f_2, f\rangle + f_2 \langle f_1, f\rangle\Big)\, ,
    \]
    therefore the eigenfunctions must lie in $\text{span}\{f_1, f_2\}$. To bridge the gap between both operators, we recall the eigenbasis of the isotropic operator given by Lemma \ref{lemma:iso-eigenfn}, and we translate our functions $f_1$ and $f_2$ to that basis.
        
    First, we let $r = \|\bm x\|$ and $\bm \omega = \frac{\bm x}{\|\bm x\|}$ and consider the following elements of the spherical harmonics basis
    \[
        Y_0(\bm \omega) = 1, \quad \hat{Y}_2(\bm \omega, \bm w^*) = \langle \bm \omega, \bm w^*\rangle^2 - \frac{1}{d}.
    \]
    Note the notation $\hat{Y}_2$ to indicate this function is not normalized in $L^2(\rho_{\mathcal X})$.
    With these we can write
    \[
        f_1(\bm x) = \|\bm x\| = r Y_0(\bm \omega), \quad f_2(\bm x) = \frac{\langle \bm x, \bm w^*\rangle^2}{\|\bm x\|} = r\langle \bm \omega, \bm w^*\rangle^2,
    \]
    and writing the inner product as a combination of both functions yields
    \[
        \langle \bm \omega, \bm w^*\rangle^2 = \hat{Y}_2(\bm \omega, \bm w^*) + \frac{1}{d}Y_0(\bm \omega),
    \]
    \[
        f_1(\bm x) = r Y_0(\bm \omega), \quad f_2(\bm x) = r \hat{Y}_2(\bm \omega, \bm w^*) + \frac{r}{d} Y_0(\bm \omega).
    \]
    
    Next, we solve the eigenvalue problem of the combined operator $AT_0 +\frac{B}{2d} T_{S}^{(2*)}$. Let $\varphi$ be a non trivial eigenfunction of this operator. Since $T_0$ is fully diagonalized by $r Y_0$ and $r \hat{Y}_2$ and since the action of $T_{S}^{(2*)}$ is restricted to $\text{span}\{r Y_0, r \hat{Y}_2\}$, we know $\varphi$ must take the form
    \[
        \varphi = c_0 [r Y_0(\bm \omega)] + c_2[r \hat{Y}_2(\bm \omega, \bm w^*)]
    \]
    for some coefficients $c_0$ and $c_2$.
    
    If we apply
    \[
        T_{S}^{(2*)}\varphi = \frac{\langle f_2, \varphi \rangle}{2 \pi}f_1 + \frac{ \langle f_1, \varphi \rangle}{2 \pi}f_2,
    \]
    using the identities $f_1 = r Y_0(\omega)$ and $f_2 = r \hat{Y}_2(\omega, \bm w^*) + \frac{r}{d}Y_0(\omega)$, we obtain
    \begin{align*}
        T_{S}^{(2*)}\varphi &= \left\langle r \hat{Y}_2 + \frac{r}{d} Y_0, c_0[r Y_0] + c_2[r \hat{Y}_2]\right\rangle \frac{r Y_0}{2 \pi}  + \left\langle r Y_0,  c_0[r Y_0] + c_2[r \hat{Y}_2]\right\rangle\frac{1}{2 \pi}\left( r \hat{Y}_2 + \frac{r}{d} Y_0 \right)\\
        &= \frac{r Y_0}{2 \pi} \left( \frac{c_0}{d} \langle r Y_0, r Y_0\rangle + c_2 \langle r \hat{Y}_2, r \hat{Y}_2\rangle\right) + \frac{1}{ 2 \pi}\left( r \hat{Y}_2 + \frac{r}{d} Y_0 \right)c_0\langle r Y_0, r Y_0\rangle\\
        &= \frac{r Y_0}{2 \pi}\left( \frac{2c_0}{d} \langle r Y_0, r Y_0\rangle + c_2 \langle r \hat{Y}_2, r \hat{Y}_2\rangle\right) + \frac{r \hat{Y}_2}{2 \pi} \Big(c_0\langle r Y_0, r Y_0\rangle \Big).
    \end{align*}

    This translates to the matrix operator on the vector of coefficients
    \[
        T_{S}^{(2*)}\varphi = \frac{1}{2\pi}\begin{bmatrix}
            \frac{2}{d}\langle r Y_0, r Y_0\rangle & \langle r \hat{Y}_2, r \hat{Y}_2\rangle\\
            \langle r Y_0, r Y_0\rangle & 0
        \end{bmatrix} \begin{pmatrix}
            c_0\\
            c_2
        \end{pmatrix} \cdot \begin{pmatrix}
            r Y_0\\
            r \hat{Y}_2
        \end{pmatrix}
    \]
    and since $\langle r Y_0, r Y_0\rangle = d$ and $\langle r \hat{Y}_2,  r \hat{Y}_2\rangle = \frac{2(d-1)}{d(d+2)}$ we have
    \[
        T_{S}^{(2*)}\varphi = \frac{1}{2\pi}\begin{bmatrix}
            2 & \frac{2(d-1)}{d(d+2)}\\
            d & 0
        \end{bmatrix} \begin{pmatrix}
            c_0\\
            c_2
        \end{pmatrix} \cdot \begin{pmatrix}
            r Y_0\\
            r \hat{Y}_2
        \end{pmatrix} .
    \]

    Furthermore, because these are non normalized orthogonal eigenfunctions of $T_0$, they perfectly diagonalize the operator. Thus, if $\lambda_{\max}(T_0)$ and $\lambda_2(T_0)$ are the eigenvalues for $r Y_0$ and $rY_2$ respectively, we have the following
    \[
        AT_0 \varphi = \begin{bmatrix}
            A\lambda_{\max}(T_0) & 0\\
            0 & A\lambda_2(T_0)
        \end{bmatrix} \begin{pmatrix}
            c_0\\
            c_2
        \end{pmatrix} \cdot \begin{pmatrix}
            r Y_0\\
            r \hat{Y}_2
        \end{pmatrix},
    \]
    and the combined action is simply the combination of these matrices
    \[
        \left( AT_0 + \frac{B}{2d}T_{S}^{(2*)} \right) \varphi = \begin{bmatrix}
            A\lambda_{\max}(T_0) + \frac{B}{2\pi d} & \frac{B(d-1)}{2 \pi d^2(d+2)}\\
            \frac{B}{4 \pi } & A\lambda_2(T_0)
        \end{bmatrix} \begin{pmatrix}
            c_0\\
            c_2
        \end{pmatrix} \cdot \begin{pmatrix}
            r Y_0\\
            r \hat{Y}_2
        \end{pmatrix} .
    \]
    Note that $\lambda_2$ represents the eigenvalue corresponding to a single spherical harmonic of degree 2. We know the dimension of the degree-2 subspace on $\mathbb {S}^{d-1}$ is given by the degeneracy formula $N(d, 2) = \frac{(d-1)(d+2)}{2}$, thus since the total macroscopic energy is distributed uniformly across the orthogonal basis in the subspace, the individual eigenvalue scales as $\lambda_2 = \mathcal{O}(d^{-2})$. 

    We draw attention to the $\mathcal O(B)$ coefficient on the lower left entry of the matrix which implies a strong coupling between these functions under the combined operator.
    
    Again, solving for the roots of the characteristic polynomial, we find the following two eigenvalues
    \[
        \tilde{\lambda}_{\pm} = \frac{1}{2} \left[ \left( A \lambda_{\max}(T_0) + \frac{B}{2 \pi d} + A \lambda_2(T_0) \right) \pm \sqrt{ \left( A \lambda_{\max}(T_0) + \frac{B}{2 \pi d} - A \lambda_2(T_0) \right)^2 + \frac{B^2(d-1)}{2 \pi^2 d^2(d+2)} } \right],
    \]
    or, more intuitively, using the Taylor expansion of $\sqrt{x + \epsilon}$ we can write
    \[
        \tilde{\lambda}_+ = A\lambda_{\max}(T_0) + \frac{B}{2 \pi d} + o_d\left(\frac{|B|}{d}\right), \quad \tilde{\lambda}_- = \mathcal O\left( A\lambda_2(T_0) + \frac{B^2}{d^3}\right).
    \]
    
    Furthermore, we know that any eigenfunction with this eigenvalue must satisfy
    \[
         \begin{bmatrix}
            A\lambda_{\max}(T_0) + \frac{B}{2\pi d} & \frac{B(d-1)}{2 \pi d^2(d+2)}\\
            \frac{B}{4 \pi } & A\lambda_2(T_0)
        \end{bmatrix}\begin{pmatrix}
            c_0\\
            c_2
        \end{pmatrix} = \lambda_{\pm}\begin{pmatrix}
            c_0\\
            c_2
        \end{pmatrix}
    \]
    which implies the relationship
    \[
        \frac{B}{4\pi}c_0 = (\lambda_{\pm} - A \lambda_2(T_0))c_2.
    \]
    
    Since the eigenfunctions can be arbitrarily scaled, we choose $c_0 = \frac{1}{\sqrt d}$ to reconstruct the isotropic eigenfunction $c_0 \|\bm x\| Y_0 = \frac{\|\bm x\|}{\sqrt d} Y_0$, and solving for $c_2$ gives
    \[
        c_2 = \frac{B}{4 \pi \sqrt d(\tilde{\lambda}_{\pm}-A\lambda_2(T_0))}
    \]
    which lets us define the new eigenfunctions as
    \[
        \hat{\Psi}_{\pm}(\bm x) = \frac{\|\bm x\|}{\sqrt d} Y_0(\bm \omega)+ \left[ \frac{B}{4 \pi( \tilde{\lambda}_{\pm} - A\lambda_2(T_0) )} \right] \frac{\|\bm x\|}{\sqrt d}\hat{Y}_2(\bm \omega, \bm w^*) \, .
    \]

    To ensure this function is normalized, we calculate
    \[
        \|\hat{\Psi}_{\pm}\|^2_{L^2(\rho_{\mathcal X})} = 1 + \frac{B^2}{16 \pi ^2( \tilde{\lambda}_{\pm}-A\lambda_2(T_0))^2} \frac{2d -2}{d^2(d+2)}:= N_{\pm}
    \]
    and therefore the normalized approximate eigenfunctions are given by
    \[
        \tilde{\Psi}_{\pm}(\bm x) = \frac{1}{\sqrt{N_{\pm}}}\left( \frac{\|\bm x\|}{\sqrt d} Y_0(\bm \omega)+ \left[ \frac{B}{4 \pi ( \tilde{\lambda}_{\pm}-A\lambda_2(T_0) )} \right] \frac{\|\bm x\|}{\sqrt d} \hat{Y}_2(\bm \omega, \bm w^*)\right)\, .
    \]

    Lastly, to explicitly show the magnitude of the quadratic feature, if $Y_2$ is the normalized zonal harmonics, we write $\hat{Y}_2 = \|\hat{Y}_2\|_{L^2(\rho_{\mathcal X})} Y_2$ with $\|\hat{Y}_2\|_{L^2(\rho_{\mathcal X})} = \frac{1}{d}\sqrt{\frac{2d - 2}{d+2}}$. Therefore, we have the final form for the eigenfunctions:
    \[
        \tilde{\Psi}_{\pm}(\bm x) = \frac{1}{\sqrt{N_{\pm}}}\frac{\|\bm x\|}{\sqrt d} \Big[ Y_0(\bm \omega)+  \tau_{\pm} Y_2(\bm \omega, \bm w^*) \Big]\,,
    \]
    where we define the alignment magnitudes by the quantity
    \[
        \tau_{\pm} := \frac{1}{d}\sqrt{\frac{2d - 2}{d+2}}\left[\frac{B}{4 \pi (\tilde{\lambda}_{\pm}-A\lambda_2(T_0))}\right].
    \]

    From now on we study the approximate eigenfunction $\tilde{\Psi}_{+}$ with associated approximate eigenvalue $\lambda_{+}$, since we clearly have $\tilde{\lambda}_{+} > \tilde{\lambda}_{-}$.
    
    {\bf The approximate eigenvalue $\tilde{\lambda}_{+}$ dominates every other eigenvalue of $T_1$:}
    From the previous derivations, if we look at the approximate eigenvalue as functions of the dimension $\lambda_{+}(d)$, we established that 
    \[
        \lim_{d \to \infty} \tilde{\lambda}_{+}(d) = A\lambda_{\max}(T_0) > 0
    \]
    Thus, for any chosen $\epsilon > 0$, there exists an integer $d_0$ such that for all $d > d_0$
    \[
        \tilde{\lambda}_{+}(d) > A\lambda_{\max}(T_0) - \epsilon
    \]
    Choose $\epsilon = \frac{A\lambda_{\max}(T_0)}{2}$ such that there exists a $d_0$ such that for all $d > d_0$
    \[
        \tilde{\lambda}_{+}(d) > \frac{A\lambda_{\max}(T_0)}{2}.
    \]
    
    By Theorem \ref{thm:lin-eigenfn}, we know $\lambda_1(T_1) = A\lambda_1(T_0) + \frac{B}{4d} = \mathcal{O}(d^{-1})$. Thus, there exists a real constant $M > 0$ and an integer $d_1$ such that for all $d > d_1$
    \[
        \lambda_1(T_1) \le \frac{M}{d}.
    \]
    Because the ReLU kernel has monotonically decreasing eigenvalues, the linear eigenvalue serves as a strict upper bound for all higher-degree subspaces
    \[
        \lambda_1(T_0) > \lambda_k(T_0) \quad \text{for all } k \ge 2.
    \]
    Furthermore, by Theorem \ref{thm:eigenv-decay} there exists a radius $R > 0$ and $\varepsilon > 0$ such that for constant $C_R > 0$ independent of the dimension we have
    \[
        \lambda_k(T_1) \leq C_R\lambda_k(T_0) + \varepsilon < 2C_R\lambda_1(T_0)
    \]
    for all $k \geq 1$ which implies
    \[
        \lambda_k (T_1) \leq \frac{2C_RM}{d}\, .
    \]
    
    Thus, if we can prove $\tilde{\lambda}_{+}(d) > \lambda_1(T_1)$, we automatically prove it dominates all $\lambda_k(T_1)$ for $k \ge 1$.
    
    We want to guarantee that $\tilde{\lambda}_{+}(d) > \lambda_1(T_1)$ which is true as long as
    \[
        \frac{A\lambda_{\max}(T_0)}{2} > \frac{2C_RM}{d} \implies d > \frac{4C_RM}{A\lambda_{\max}(T_0)}.
    \]
    
    Define 
    \[
        d^* = \max\left(d_0, d_1, \left\lfloor \frac{4C_RM}{A\lambda_{\max}(T_0)} \right\rfloor + 1 \right)
    \]
    then for any dimension $d > d^*$, the following holds
    \[
        \tilde{\lambda}_{+}(d) > \frac{A\lambda_{\max}(T_0)}{2} > \frac{2C_RM}{d} \ge \lambda_1(T_1) \geq  \lambda_{k \ge 2}(T_1).
    \]

    {\bf The approximate eigenfunction and the true eigenfunction are close in norm:}
    Finally, to show $\tilde{\Psi}_{+}$ is close to the original top eigenfunction, we denote by $\Psi$ the top eigenfunction and use the expansion to write
    \[
        T_1 \tilde{\Psi}_{+} = \left(AT_0 + \frac{B}{2d}T_{S_{(1*)}} + \frac{B}{2d}T_{S}^{(2*)}\right)\tilde{\Psi} + e = \tilde{\lambda}_{+}\tilde\Psi_{+} + e
    \]
    with $\| e \| = o_d\left(\frac{|B|}{d}\right)$ and we write this as
    \[
        (T_1 - \tilde{\lambda}_{+} I)\tilde{\Psi}_{+} = e\, .
    \]
    Now, consider the projection onto the top eigenspace of $T_1$ defined by
    \[
        P_{\text{top}}f =  \langle \Psi, f\rangle \Psi,
    \]
    and its orthogonal complement $P_{\perp} = I - P_{\text{top}}$. Expanding the action of the combined operator using these projections we get
    \[
        (T_1 - \tilde{\lambda}_{+} I)\tilde{\Psi}_{+} = (T_1 - \tilde{\lambda}_{+} I)(P_{\text{top}}\tilde{\Psi}_{+} + P_{\perp}\tilde{\Psi}_{+}) = e
    \]
    and since $T_1(P_{\text{top}} f) = \lambda_{\max}(T_1) P_{\text{top}} f$ this simplifies to
    \[
        [\lambda_{\max}(T_1) - \tilde{\lambda}_{+}]P_{\text{top}} \tilde{\Psi}_{+} + (T_1 - \tilde{\lambda}_{+} I)P_{\perp}\tilde{\Psi}_{+} = e.
    \]
    Since we have only orthogonal objects, the norm of this expression is equal to
    \[
        |\lambda_{\max}(T_1)- \lambda_{+}|^2\| P_{\text{top}} \tilde{\Psi}_{+} \|_{L^2(\rho_{\mathcal X})} ^2 + \|(T_1 - \tilde{\lambda}_{+} I)P_{\perp}\tilde{\Psi}_{+}\|_{L^2(\rho_{\mathcal X})} ^2 = o_d\left(\frac{B^2}{d^2}\right)
    \]
    which immediately implies
    \[
        \|(T_1 - \tilde{\lambda}_{+} I)P_{\perp}\tilde{\Psi}_{+}\|_{L^2(\rho_{\mathcal X})} ^2 = o_d\left(\frac{B^2}{d^2}\right)\, ,
    \]
    and we will use this to bound $\| P_{\perp} \tilde{\Psi}\|$. If we define the approximate spectral gap by $\tilde{\delta} = \inf_{\mu \in \sigma(T_1)\setminus\{\lambda_{\max}(T_1)\}}|\mu - \tilde{\lambda}_{+}|$, we have
    \[
        \| (T_1 - \tilde{\lambda}_{+} I)P_{\perp}\tilde{\Psi}_{+} \|_{L^2(\rho_{\mathcal X})}  \geq  \tilde{\delta}\| P_{\perp}\tilde{\Psi}_{+} \|_{L^2(\rho_{\mathcal X})} 
    \]
    and
    \[
        \tilde{\delta} \| P_{\perp}\tilde{\Psi}_{+} \|_{L^2(\rho_{\mathcal X})}  = o_d\left(\frac{|B|}{d}\right).
    \]

    From Theorem \ref{thm:eigenv-decay}, we have the lower bound
    \[
        c\lambda_{\max}(T_0) \leq \lambda_{\max}(T_1)
    \]
    where $c > 0$ is an absolute constant. We know $\lambda_{\max}(T_0) = \Theta(1)$ thus $c\lambda_{\max}(T_0)$ is bounded away from zero. which gives the lower bound
    \[
        \lambda_{\max}(T_1) \geq c\lambda_{\max}(T_0) = \Omega(1)\, .
    \]

    Now if $\delta := \lambda_{\max}(T_1) - \lambda_1(T_1)$ is the true spectral gap, we know from Theorem \ref{thm:lin-eigenfn} that $\lambda_1(T_1) = \Theta(1/d)$. Substituting our lower bound for $\lambda_{\max}(T_1)$ into the gap we have
    \[
        \delta \geq c\lambda_{\max}(T_0) - \lambda_1(T_1) = \Omega(1)\, ,
    \]
    for a high enough dimension $d$. The difference between the gaps is given by
    \[
        |\lambda_{\max}(T_1) - \tilde{\lambda}_{+}| = |\delta - \tilde{\delta}|
    \]
    and since $|\lambda_{\max}(T_1) - \tilde{\lambda}_{+}| = o_d(|B|/d)$ the triangle inequality implies
    \[
        \tilde{\delta} \geq \delta - o_d\left(\frac{|B|}{d}\right) \, ,
    \]
    and we know $\tilde{\delta}$ must bounded away from zero independently of $d$. Thus, going back to bounding the norm of $P_\perp \tilde{\Psi}_{+}$, we obtain
    \[
         \|P_{\perp}\tilde{\Psi}_{+} \|_{L^2(\rho_{\mathcal X})}   = \frac{1}{\tilde{\delta}} o_d\left(\frac{|B|}{d}\right) = o_d\left(\frac{|B|}{d}\right)\, .
    \]
    
    To bound the difference under the norm, we note that $\|\tilde{\Psi}\| = 1$ and write
    \[
        \| \tilde{\Psi}_{+} \|_{L^2(\rho_{\mathcal X})} ^2 = |\langle \tilde{\Psi}_{+}, \Psi\rangle|^2 + \|P_{\perp} \tilde{\Psi}_{+}\|_{L^2(\rho_{\mathcal X})} ^2
    \]
    \[
        |\langle \tilde{\Psi}_{+}, \Psi\rangle|^2 = 1 - o_d\left(\frac{B^2}{d^2}\right)
    \]
    Finally, combining all together we have
    \begin{align*}
        \| \tilde{\Psi}_{+} - \Psi\|_{L^2(\rho_{\mathcal X})} ^2 &= \| \tilde{\Psi}_{+} \|_{L^2(\rho_{\mathcal X})} ^2 -2\langle \tilde{\Psi}_{+}, \Psi\rangle + \|\Psi\|_{L^2(\rho_{\mathcal X})} ^2\\
        &= 2 - 2\left[1- o_d\left(\frac{B^2}{d^2}\right)\right]\\
        &=  o_d\left(\frac{B^2}{d^2}\right)
    \end{align*}
    and the proof is completed.
\end{proof}

%% file: appendixes/appendix-experiments.tex
\label{app:experimental_details}

In this section, we provide comprehensive details regarding the experimental setup, hyperparameter configurations, and the mathematical framework used for the results presented in the main text.

For all experiments, we always consider the matrix $\bm \Gamma = A \bm I_d + B \bm w^* (\bm w^*)^\top$, where we set the artificial scaling $A = 1.2$ and choose multiple values for $B$ to observe of its influence. Denoting by $N$ the number of training samples, we sample the input data matrix $\bm {Z} \in \mathbb{R}^{N \times d}$, where each row $\bm {z}_i$ is drawn independently from a standard multivariate Gaussian distribution $\bm {z}_i \sim \mathcal{N}(0, \bm {I}_d)$.

\subsection{Different models}
Throughout the experiments we consider three different models detailed as follows:
\begin{enumerate}[leftmargin=*,topsep=0.5mm, itemsep=0.mm]
    \item \textbf{Base ReLU Kernel ($k_0$):} The standard ReLU activation kernel with closed form given by the arc-cosine kernel
    \[
        k_0(\bm x, \bm x') = \frac{\|\bm x\|\|\bm x'\|}{2\pi d}\left[\gamma(\pi -\arccos (\gamma)) + \sqrt{1 - \gamma^2}\right]\, ,
    \]
    with $\gamma = \frac{\langle \bm x, \bm x'\rangle}{\|\bm x\| \|\bm x'\|}$.
    \item \textbf{ReLU Kernel ($k_1$):} The kernel from \cref{eq:relu-k1} \[
        k_1(\bm x, \bm x') = \frac{\sqrt{(\bm x^\top \bm \Gamma \bm x)(\bm x'^\top \bm \Gamma \bm x')}}{2\pi d}\left[\gamma_{\bm \Gamma}(\pi -\arccos (\gamma_{\bm \Gamma})) +\sqrt{1 - \gamma_{\bm \Gamma}^2}\right]\,,
    \]
    with $\gamma_{\bm \Gamma} \coloneqq \frac{\bm x^\top \bm \Gamma \bm x'}{\sqrt{(\bm x^\top \bm \Gamma \bm x)(\bm x'^\top \bm \Gamma \bm x')}}$.
    \item \textbf{ReLU MLP:} A two-layer Multi-Layer Perceptron (MLP) with hidden layer width of $400$ and linear output layer. 
\end{enumerate}
\paragraph{Kernels:}
For $i \in \{0, 1\}$, we calculate the kernel matrix $\bm K_i(\bm Z) \in \mathbb R^{N\times N}$ whose entries correspond to
\[ 
    [\bm K_i(\bm Z)]_{k,l} = k_i(\bm z_k, \bm z_l)\, .
\]

\paragraph{ReLU NN:}
The ReLU neural network was trained using the Adam optimizer (\texttt{torch.optim.Adam}) with the following parameters:
\begin{itemize}[leftmargin=*,topsep=0.5mm, itemsep=0.mm]
    \item \textbf{First layer dimension:} $400$
    \item \textbf{Output layer dimension:} $1$
    \item \textbf{Epochs:} 1
    \item \textbf{Batch size:} 64
    \item \textbf{Loss Function:} Mean Squared Error (\texttt{torch.nn.MSELoss})
\end{itemize}

\subsection{Metrics and Significance}
All experiments were executed over 10 independent trials. Shaded regions in figures indicate the variance across these trials. For every experiment routine implemented, we set random seeds to ensure reproducibility across all sources of randomness (e.g. data sampling, NN initialization, K-Fold validation, etc.).

\subsection{Alignment experiment}
For the results in \cref{fig-f} we set the seed $54643$. For the alignment analysis, we vary the dimension $d \in \{50, 100, 200, 400, 800, 1600, 3200\}$, and we also vary the magnitude of $B \in \{5d^{3/10}, 5d^{5/10}, 5d^{7/10}, 5d^{9/10}\}$ obtaining four different kernels. The alignment reported is calculated as $\langle \bm {v}_i^{\text{top}}, Y_2(\bm {Z}) \rangle$, where $\bm {v}_i^{\text{top}}$ is the lead eigenvector of the kernel matrix $\bm {K}_i(\bm {Z})$ for $i \in \{0, 1\}$ and $Y_2$ is the normalized version of the function
\[
    \hat{Y}_2(\bm \omega, \bm w^*) = \langle \bm \omega, \bm w^*\rangle^2 - \frac{1}{d}\, ,
\]
where $\bm \omega = \frac{\bm z}{\|\bm z\|}$.

\subsection{Generalization performance}
For the results in \cref{fig-t} we set the seed $558812$. For the learning performance analysis, we work on the fixed dimension $d=300$ and vary the number of training samples $N \in \{50, 100, 200, 400, 800, 1600, 3200\}$ and again we build three different kernels $A = 1.2$ with $B \in \{5d^{3/10}, 5d^{5/10}, 5d^{7/10}, 5d^{9/10}\}$. All models are trying to learn the target function $g(t) = 2t^2 + 3t + 4\sin(2t)$.

We use Kernel Ridge Regression (KRR) with $k_0$ and $k_1$ to obtain the solutions $\hat{\bm a}_i = (\bm K_i + \lambda \bm I_N)^{-1} \bm y$, with regularization parameter $\lambda$. The regularization is chosen through $5$-fold cross-validation, for each sample size $N$, from a logarithmic grid $\{10^{-3}, 10^{-2}, \dots, 10^3\}$  with random state $38182$.

We sample a test set $\tilde{\bm Z}$, from the same distribution as $\bm Z$, consisting of $M = 600$ independent samples. Given the test samples, for $i \in \{0, 1\}$, we construct the test kernel matrices $\bm K_i(\tilde{\bm Z}, \bm Z) \in \mathbb R^{M\times N}$ whose entries are given by
\[ 
    [\bm K_i(\tilde{\bm Z}, \bm Z)]_{k,l} = k_i(\tilde{\bm z}_k, \bm z_l)\, ,
\]
where $\tilde{\bm z}_k$ is the $k$-th sample from $\tilde{\bm Z}$ and $\bm z_l$ is the $l$-th sample from $\bm Z$. Then, we construct the predictor
\[
    \hat{f}_i(\bm z) = \sum_{j=1}^N (\hat{\bm a}_i)_j k_i(\bm z, \tilde{\bm z}_j)\, ,
\]
for each $i \in \{0, 1\}$. Lastly, given the \texttt{torch.nn.Linear} and \texttt{torch.nn.ReLU} implementations from \texttt{PyTorch}, we implement the predictor $\hat{f}_{\text{NN}} =  \text{Linear}_{2}(\text{ReLU}(\text{Linear}_{1}(\boldsymbol{z})))$. The network was trained with learning rate obtained through $5$-fold cross validation from a logarithmic grid $\{10^{-3}, 10^{-2}, 10^{-1}, 10^0\}$, for every sample size $N$, with random state $123114$.

To obtain the metric from the figure, we calculate the Test Mean Squared Error by computing
\[
  \frac{1}{M} \sum_{i=1}^{M} \Big(y_i - \hat{f}_{p}(\tilde{\bm z}_i)\Big)^2
\]
for all $\hat{f}_{p} \in \{\hat{f}_{0}, \hat{f}_{1}, \hat{f}_{\text{NN}})$.

\subsection{Implementation, Software Stack and Hardware}
The experimental framework was implemented in \texttt{Python} (v3.10.12). We use \texttt{PyTorch} (v.2.11.0) \citep{Adam2019Pytorch}, such that layers and activation are out-of-the-box calls to the methods \texttt{torch.nn.Linear} and \texttt{torch.nn.ReLU}, specifying only the input, first layer, and output sizes. We utilized the \texttt{NumPy} (v.1.26.4) \citep{Harris2020numpy} and \texttt{SciPy} (v.1.15.3) \citep{2020SciPy} libraries for numerical linear algebra operations, specifically for the eigendecomposition (\texttt{scipy.linalg.eigh}) of the kernel matrices, and for the KRR implementation (\texttt{numpy.linalg.solve}). Lastly, for KFold validation we used \texttt{sklearn}'s (v.1.4.1.post1) \texttt{model\_selection.KFold} object \citep{2011Sklearn}.

All experiments were conducted on a laptop with a 11th Gen Intel(R) Core(TM) i7-11800H @ 2.30GHz processor, 8GB of RAM 3200 MHz and a single NVIDIA RTX 3060 Laptop GPU, and should be reproducible, even with a CPU, in less than an hour.